\documentclass[11pt]{article}

\PassOptionsToPackage{hyperfootnotes=false}{hyperref}
\usepackage[final]{acl}

\usepackage{times}
\usepackage{latexsym}
\usepackage[T1]{fontenc}
\usepackage[utf8]{inputenc}
\usepackage{microtype}
\usepackage{inconsolata}
\usepackage{graphicx}

\usepackage{enumitem}
\usepackage{pifont}
\usepackage{amsmath,amssymb,amsthm}
\usepackage{algorithm}
\usepackage{algorithmic}
\usepackage{booktabs}
\usepackage{multirow}
\usepackage{float}  

\newtheorem{theorem}{Theorem}[section]
\newtheorem{proposition}[theorem]{Proposition}
\newtheorem{lemma}[theorem]{Lemma}

\newtheorem{definition}[theorem]{Definition}
\newtheorem{assumption}{Assumption}
\newtheorem{remark}{Remark}[section]

\newtheorem{decomposition}[theorem]{Decomposition}

\title{DIANOIA: Diagnostic Decomposition and Joint Optimization for Multi-Agent Reasoning}

  \author{
    Yiming Yang\thanks{\,ORCID: 0000-0003-1359-0364} \and
    Zhuoyuan Li \and
    Fanxiang Zeng \and
    Hao Fu \and
    Yue Liu \\
    AMap, Alibaba Group, Beijing, China \\
    \texttt{\{sachiel.yym, weiyuan.lzy, fanxiang.zfx,} \\
    \texttt{fh265565, yue.liu\}@alibaba-inc.com}
  }

\begin{document}
\maketitle

\begin{abstract}
Multi-agent LLM systems can outperform single-agent baselines, yet practitioners still cannot predict which design works for a new task or diagnose why one fails.
We argue this gap persists largely because the field lacks a diagnostic framework with measurable primitives and testable predictions. We introduce \textbf{DIANOIA}, a three-channel decomposition of multi-agent reasoning gain into coverage, fidelity, and synthesis, each of which is empirically measurable. From this decomposition, we propose a diagnostic protocol that forms qualitative hypotheses about bottleneck channels for tasks with measurable quality signals. We instantiate the protocol as a multi-agent system whose three components mirror the channels: role-diverse proposers for coverage, execution- or pseudo-verification-grounded review for fidelity, and iterative synthesis. On GSM8K, MBPP, and BFCL-SP, our method outperforms strong multi-agent baselines under matched standard configurations; on MBPP, it matches the strongest baseline with approximately $5{\times}$ fewer tokens and dominates the measured accuracy--token Pareto frontier. Full-scale MBPP replications on Qwen, DeepSeek, and Gemini consistently improve over the corresponding single-model baseline. A prospective held-out evaluation on VitaBench, whose bottleneck hypothesis was recorded before running DIANOIA, improves pass rate from 29.5\% to 45.0\%. DIANOIA reframes multi-agent design as channel-aware resource allocation: diagnose the likely bottleneck, then invest tokens accordingly.
\end{abstract}

\section{Introduction} \label{sec:introduction}
Reasoning with Large Language Models (LLMs) has shifted from single-prompt Chain-of-Thought~\cite{wei2022chain} to systems that orchestrate multiple LLM instances as collaborative agents~\cite{guo2024large, tran2025multi}. Multi-Agent Systems (MAS) and related inference-time agent methods have produced substantial empirical gains in mathematical reasoning~\cite{wang2023self}, code generation~\cite{shinn2023reflexion}, and interactive decision-making~\cite{zhou2024lats}. Yet the mechanisms behind these gains remain poorly understood: in particular, it is unclear what drives improvement, when a given multi-agent design should help, and how practitioners should choose among methods for a new task.

Despite these empirical successes, existing methods remain largely heuristic: practitioners lack a principled basis for forming ex-ante hypotheses about which method may work on which task, or why a given method underperforms. We argue the path forward is twofold: (a) a \emph{diagnostic framework} with measurable primitives and testable predictions, and (b) an \emph{instantiated system} showing that the framework's prescriptions can be realized simultaneously in one system. These two goals are complementary: the framework organizes hypotheses about \emph{why} gains arise, while the system tests whether those gains can be achieved in practice. Its formal guarantees apply only under the verifier and independence conditions stated in Section~\ref{sec:theory}.

We introduce a diagnostic decomposition of multi-agent reasoning gain---defined as $\mathbb{E}[Q(\tau^{\text{MAS}})] - p$, the improvement in expected solution quality over a single-agent baseline---formalized below as an upper bound with three conceptually distinct, empirically measurable components (formalized in Section~\ref{sec:theory}, Decomposition~\ref{def:decomposition}):
\begin{equation}
\label{eq:decomposition_overview}
\mathbb{E}[Q(\tau^{\text{MAS}})]\!-\!p \;\leq\; \underbrace{\mathcal{G}_{\text{explore}}}_{\text{coverage}} + \underbrace{\mathcal{G}_{\text{info}}}_{\text{fidelity}} + \underbrace{\mathcal{G}_{\text{aggr}}}_{\text{synthesis}}
\end{equation}
where each component corresponds to a distinct source of gain in multi-agent reasoning:

\begin{itemize}[leftmargin=*]
    \item \textbf{Exploration Gain} ($\mathcal{G}_{\text{explore}}$): The benefit of \textit{seeing more}: covering a larger portion of the solution space through diverse proposals. This gain grows with the number of agents $K$ and with the diversity of their reasoning strategies, through the probability that at least one agent discovers a correct solution path.
    
    \item \textbf{Information Gain} ($\mathcal{G}_{\text{info}}$): The benefit of \textit{seeing clearly}: obtaining quality signals that reliably distinguish correct candidate solutions from incorrect ones. In our setting, such signals come from execution feedback $e$ (e.g., code test results or tool-call return values) or from evidence-based cross-review that converts raw observations into structured assessments. The key quantity is therefore not feedback alone, but the \emph{fidelity} of the information available for selection.
    
    \item \textbf{Aggregation Gain} ($\mathcal{G}_{\text{aggr}}$): The benefit of \textit{deciding wisely}: synthesizing diverse proposals and feedback into a strong final solution. This gain depends on how effectively the aggregation mechanism uses available information while avoiding failure modes such as groupthink or agreement bias~\cite{pitre2025consensagent}.
\end{itemize}

Guided by this decomposition, we propose \textbf{DIANOIA}\footnote{From Greek \emph{dianoia} (Plato): discursive reasoning.}, a four-phase multi-agent system (Propose, Execute, Review, Synthesize) designed to address all three sources of gain: role-diverse proposal generation for Exploration, execution-grounded verification---combining execution feedback with evidence-based cross-review---for Information, and iterative synthesis with closed-loop validation for Aggregation. The three roles are one lightweight realization of the formal reduced-correlation prescription.

We evaluate DIANOIA on four benchmarks spanning mathematical reasoning (GSM8K~\cite{cobbe2021training}, AIME-2025~\cite{aime2025}), code generation (MBPP~\cite{austin2021program}), and function calling (BFCL-SP~\cite{patil2025bfcl}). Across GSM8K, MBPP, and BFCL-SP, DIANOIA outperforms strong multi-agent and Best-of-$N$ baselines under matched standard configurations; we treat AIME-2025 as a small-sample stress test on its complete 30-problem set. On MBPP, it reaches the strongest baseline's accuracy ceiling using roughly $5{\times}$ fewer tokens and dominates the measured budget--accuracy Pareto frontier. Methods that improve only one or two dimensions tend to saturate earlier, whereas DIANOIA's joint design sustains accuracy gains over a broader budget range, consistent with the framework's predictions. We additionally conduct a prospective held-out diagnostic evaluation on VitaBench and full-scale MBPP replications on DeepSeek-V4-Flash and Gemini-3.5-Flash.

Our contributions are:
\begin{itemize}[leftmargin=*,topsep=2pt,itemsep=2pt]
    \item A \emph{prescriptive} diagnostic decomposition of multi-agent reasoning gain into coverage, information, and aggregation, with measurable estimators for each component. The decomposition places prior methods on a three-axis design map and yields a \emph{four-rule diagnostic protocol} (R1--R4, \S\ref{sec:method}) that forms qualitative bottleneck hypotheses for tasks with measurable quality signals.

    \item \textbf{DIANOIA}, a four-phase system that jointly enacts all three prescriptions of the decomposition. Each individual ingredient---role-diverse proposal, execution feedback, cross-review, and iterative synthesis---appears in prior work; our contribution is to combine them under a single diagnostic framework and evaluate their interaction systematically. Phase- and role-level ablations, together with a Best-of-$N$ control, show that no tested proper subset recovers the full gain.

    \item Budget-matched comparisons on GSM8K, MBPP, and BFCL-SP---together with a small-$N$ AIME-2025 stress test, a prospective held-out VitaBench evaluation, and full-scale MBPP replications across Qwen, DeepSeek, and Gemini. A full Pareto-frontier sweep on MBPP shows that DIANOIA matches the strongest baseline with approximately $5{\times}$ fewer tokens and improves the measured accuracy--cost frontier over a broad range of token budgets. The framework's bottleneck hypotheses are consistent with the observed gain patterns (Section~\ref{subsec:discussion}).
\end{itemize}

\begin{figure*}[t]
    \centering
    \includegraphics[width=\textwidth]{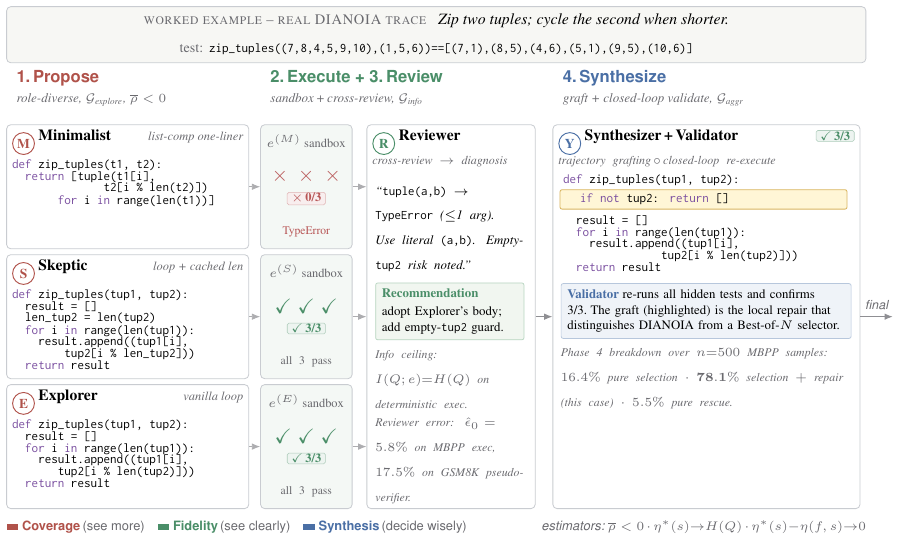}
    \caption{A DIANOIA trace on MBPP. Full log in Appendix~\ref{app:figure_one_raw}.}
    \label{fig:overview}
\end{figure*}

\section{Related Work}
\label{sec:related_work}

We organize related work through the lens of our gain decomposition (Eq.~\ref{eq:decomposition_overview}; formalized in Section~\ref{sec:theory}), which highlights three recurring design emphases in prior work: Exploration, Information, and Aggregation. Existing methods often prioritize one or two of these dimensions, which helps explain their complementary strengths and limitations and motivates our unified treatment.

\textbf{Diversity and Exploration ($\mathcal{G}_{\text{explore}}$).} Self-Consistency~\cite{wang2023self} samples multiple reasoning paths and selects via majority vote; Tree of Thoughts~\cite{yao2023tree} structures exploration as tree search; diversified sampling~\cite{naik2023diversity,wang2025diversified} promotes explicit path diversity beyond temperature; and ReConcile~\cite{chen2024reconcile} orchestrates round-table discussions over heterogeneous LLMs. These methods primarily improve $\mathcal{G}_{\text{explore}}$ by broadening coverage of the solution space. However, many rely on limited or purely internal quality signals, and their aggregation mechanisms can remain vulnerable when errors are correlated across samples or agents.

\textbf{Grounding and Feedback ($\mathcal{G}_{\text{info}}$).} ReAct~\cite{yao2023react} interleaves reasoning with tool actions; Reflexion~\cite{shinn2023reflexion} reflects on execution feedback; LATS~\cite{zhou2024lats} combines search with environmental verifiers; AgentCoder~\cite{huang2024agentcoder} combines a Programmer agent with a Test-Executor agent for iterative test-driven repair; step-level process reward models~\cite{lightman2023lets,setlur2024rewarding} provide denser supervision signals; and collaborative verification~\cite{liang2024improving} strengthens test-time quality assessment. These approaches improve $\mathcal{G}_{\text{info}}$ by strengthening the fidelity of intermediate or final feedback. However, they typically explore alternatives sequentially within a single search process rather than through parallel, heterogeneous proposal generation, which can limit $\mathcal{G}_{\text{explore}}$.

\textbf{Interaction and Consensus ($\mathcal{G}_{\text{aggr}}$).} Multi-Agent Debate~\cite{du2024improving} lets agents critique each other; Encouraging Divergent Thinking~\cite{liang2024encouraging} promotes early disagreement; Mixture-of-Agents (MoA)~\cite{wang2024mixture} stacks aggregation layers; Two Heads~\cite{jin2025two} studies multi-agent test-time scaling, echoing single-agent compute-scaling results~\cite{snell2024scaling} that also motivate the Best-of-$N$ control we use. Pure debate is ``cheap talk'' without evidence; ConsensAgent~\cite{pitre2025consensagent} documents sycophancy and agreement bias under purely textual interaction. Surveys~\cite{guo2024large,tran2025multi,ke2025survey} catalogue the landscape; game-theoretic and cooperative-decision surveys~\cite{sun2025game,jin2025comprehensive} connect agent interaction to strategic and cooperative decision-making.

Our decomposition is also related to several theoretical traditions. In ensemble learning, the bias--variance--covariance decomposition~\cite{ueda1996generalization,wood2023unified}, the diversity-prediction theorem~\cite{hong2004groups,page2007difference}, and recent analyses of correlated failure in LLM ensembles~\cite{kim2025correlated} all emphasize that diversity alone is insufficient when errors are dependent. Separately, verifier-fidelity studies~\cite{lightman2023lets,setlur2024rewarding} analyze how model-based scoring can distort the quality signal used for selection. Our work adapts these intuitions to the test-time multi-agent reasoning setting.

Table~\ref{tab:related_work_comparison} summarizes representative methods through the lens of our decomposition. The goal is not to assign exclusive categories, but to highlight each method’s primary design emphasis and the dimensions that are less directly optimized.

\begin{table}[t]
\centering
\caption{Related methods through the gain decomposition lens. \ding{51}: primary emphasis; \ding{109}: secondary or implicit; \ding{55}: not a focus.}
\label{tab:related_work_comparison}
\resizebox{\columnwidth}{!}{%
\begin{tabular}{lcccp{3.5cm}}
\toprule
\textbf{Method} & $\mathcal{G}_{\text{explore}}$ & $\mathcal{G}_{\text{info}}$ & $\mathcal{G}_{\text{aggr}}$ & \textbf{Key Limitation} \\
\midrule
\multicolumn{5}{l}{\textit{Exploration-focused Methods}} \\
Self-Consistency~\cite{wang2023self} & \ding{51} & \ding{55} & \ding{109} & Correlated voting errors \\
Tree of Thoughts~\cite{yao2023tree} & \ding{51} & \ding{109} & \ding{109} & Heuristic, ungrounded evaluation \\
ReConcile~\cite{chen2024reconcile} & \ding{51} & \ding{109} & \ding{109} & Textual-only feedback \\
\midrule
\multicolumn{5}{l}{\textit{Information-focused Methods}} \\
ReAct~\cite{yao2023react} & \ding{109} & \ding{51} & \ding{55} & No parallel diversity \\
Reflexion~\cite{shinn2023reflexion} & \ding{109} & \ding{51} & \ding{55} & Sequential self-improvement only \\
LATS~\cite{zhou2024lats} & \ding{51} & \ding{51} & \ding{55} & Single-search aggregation only \\
\midrule
\multicolumn{5}{l}{\textit{Aggregation-focused Methods}} \\
Multi-Agent Debate~\cite{du2024improving} & \ding{109} & \ding{109} & \ding{51} & Ungrounded textual consensus \\
MoA~\cite{wang2024mixture} & \ding{109} & \ding{109} & \ding{51} & Weak external grounding \\
Two Heads~\cite{jin2025two} & \ding{109} & \ding{109} & \ding{109} & No explicit grounding \\
\midrule
\textbf{DIANOIA (Ours)} & \ding{51} & \ding{51} & \ding{51} & \textbf{Jointly targets all three dimensions} \\
\bottomrule
\end{tabular}%
}
\end{table}

In contrast, DIANOIA jointly enacts all three dimensions within a single workflow, addressing the fragmentation highlighted above.

\section{A Diagnostic Decomposition for Multi-Agent Reasoning}
\label{sec:theory}

Multi-agent systems have demonstrated remarkable improvements over single-agent baselines~\cite{wang2023self,du2024improving}, yet a fundamental question remains: \emph{where do these gains come from, and can we predict them?} We develop a \emph{diagnostic decomposition} that exposes three empirically measurable mechanisms---\textbf{Exploration}, \textbf{Information}, and \textbf{Aggregation}. The formalism is intentionally lightweight: the contribution lies not in mathematical novelty per se, but in a decomposition whose terms are individually measurable and prescriptive for system design---together yielding a four-rule diagnostic protocol (R1--R4, \S\ref{sec:method}) that forms qualitative bottleneck hypotheses from a task profile (verifier type, baseline $p$, answer-space topology). The claims below distinguish an exact identity, a conditional subadditive bound, and idealized bounds. R1--R4 are qualitative task-structure hypotheses, not calibrated consequences of the latter.

\begin{table}[t]
\centering
\caption{Connection between DIANOIA phases and diagnostic quantities.}
\label{tab:phase_quantity_map}
\setlength{\tabcolsep}{3pt}
\resizebox{\columnwidth}{!}{%
\begin{tabular}{lll}
\toprule
\textbf{Phase} & \textbf{Quantity} & \textbf{Mechanism} \\
\midrule
Propose & $C_K,\bar{\rho}$ & Candidate coverage/diversity \\
Execute/Review & $\eta^*(s),\hat{\epsilon}_0$ & Evidence fidelity \\
Synthesize & $\eta(f,s),r$ & Selection, repair, rescue \\
\bottomrule
\end{tabular}}
\end{table}

We formalize multi-agent reasoning as a tuple $(\mathcal{X}, \mathcal{T}, Q, K, \mathcal{E}, f)$ where $\mathcal{X}$ is the input space, $\mathcal{T}$ the solution space, $Q: \mathcal{T} \to \{0,1\}$ a quality indicator, $K$ proposers, $\mathcal{E}$ an executor providing feedback, and $f$ an aggregation function. We assume: (A1) \emph{baseline} conditional independence among proposers as the IID reference (relaxed via explicit pairwise correlation terms in Prop.~\ref{prop:diversity}; see App.~\ref{app:assumptions_discussion}), (A2) baseline success $p \in (0,1)$, (A3) finite strategy space (each agent's generation is bounded by maximum token length and finite vocabulary, ensuring $|\mathcal{T}| < \infty$), (A4) deterministic execution. Full definitions in Appendix~\ref{app:assumptions}.

We organize the decomposition below into three logically distinct layers---an exact identity, a definitional attribution, and a sub-additive upper bound---separated to make explicit which steps are tautological accounting and which are design-motivated modeling choices.

\begin{decomposition}[Identity, Attribution, and Bound]
\label{def:decomposition}
Under (A1)--(A4):

\noindent\textbf{(I) Exact identity} (definitional factorization; App.~\ref{app:gain_decomposition_proof}):
\begin{equation}\label{eq:mas_identity}
\mathbb{E}[Q(\tau^{\text{MAS}})] = \underbrace{C_K}_{\text{coverage}} \cdot \underbrace{\eta(f, s)}_{\text{effective efficiency}}
\end{equation}
where $C_K\!:=\!P(\bigvee_{k} Q(\tau^{(k)})\!=\!1)$ and $\eta(f,s)\!:=\!\mathbb{E}[Q(\tau^{\text{MAS}})]/C_K$ (Def.~\ref{def:aggr_efficiency}). $\eta$ equals the classical conditional accuracy $P(Q^{\text{MAS}}\!=\!1\mid E)$ for selector-only systems; for repair-capable DIANOIA it additionally absorbs synthesis's rescue contribution (quantify analysis in App.~\ref{app:synthesis_breakdown}).

\noindent\textbf{(II) Gain attribution} (definitional, each tied to a distinct design lever):
\textbf{(1)} $\mathcal{G}_{\text{explore}}\!:=\!C_K\!-\!p$; \textbf{(2)} $\mathcal{G}_{\text{info}}\!:=\!C_K[\eta^*(e)\!-\!\eta^*(\sigma)]$, where $\eta^*(s)\!:=\!\max_f \eta(f,s)$; \textbf{(3)} $\mathcal{G}_{\text{aggr}}\!:=\!C_K[\eta(f,s)\!-\!\eta(f_{\text{base}},s)]$, where $f_{\text{base}}$ is a task-appropriate non-iterative reference selector (e.g., majority vote or score-maximization; specified per analysis, see App.~\ref{app:gain_decomposition_proof}). \emph{This attribution is not unique}; we adopt it because each term corresponds to a distinct DIANOIA lever and admits an empirical estimator, with the synergy coefficient $\gamma$ in \S\ref{subsec:scaling_analysis} serving as a post-run descriptive subadditivity check; $\gamma$ is neither fitted nor used to construct R1--R4.

\noindent\textbf{(III) Conditional subadditive upper bound}. For selector-only systems, or for repair-capable systems satisfying the rescue-margin condition stated in Appendix~\ref{app:gain_decomposition_proof},
\begin{equation}\label{eq:decomposition}
\mathbb{E}[Q(\tau^{\text{MAS}})] - p \;\leq\; \mathcal{G}_{\text{explore}} + \mathcal{G}_{\text{info}} + \mathcal{G}_{\text{aggr}},
\end{equation}
strict in general due to multiplicative coupling (Remark~\ref{rem:subadditivity}; mechanism in App.~\ref{app:gain_decomposition_proof}, Part IV).
\end{decomposition}

\begin{remark}[Conceptual Orthogonality vs.\ Realized Subadditivity]
\label{rem:subadditivity}
Exploration, Information, and Aggregation are conceptually orthogonal (independently optimizable: \emph{generate} vs.\ \emph{evaluate} vs.\ \emph{combine}) but their realized gains are statistically coupled through the multiplicative structure $C_K\!\cdot\!\eta(f,s)$, yielding \emph{subadditivity} (actual gain $<$ sum), as in classical ensemble theory~\cite{krogh1995neural,wood2023unified} and collective intelligence~\cite{hong2004groups}. Empirically, the synergy coefficient $\gamma\!:=\!\Delta_{\text{joint}}/\sum_i\Delta_i$ falls in $[0.852,0.88]$ across MBPP and BFCL-SP (\S\ref{subsec:scaling_analysis}), consistent with the subadditive regime predicted by the multiplicative coupling.
\end{remark}

We characterize each dimension below; proofs are deferred to Appendix~\ref{app:proofs}.

\textbf{Exploration.} When structured diversity yields negative average pairwise success correlation $\bar{\rho}$, coverage increases relative to the IID reference:
\begin{proposition}[Exploration via Diversity]\label{prop:diversity}
The second-order Bonferroni coverage lower bound $\mathrm{LB}(\bar{\rho})$ satisfies $\mathrm{LB}(\bar{\rho})-\mathrm{LB}(0) = -\binom{K}{2}\bar{\rho}\,p(1{-}p) > 0$ for $\bar{\rho}\!<\!0$, i.e., the guaranteed coverage lower bound under diverse proposers strictly exceeds the IID baseline (App.~\ref{app:proofs}; strict comparison at actual-coverage level requires extra dependence assumptions).
\end{proposition}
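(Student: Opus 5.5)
We will use the second-order Bonferroni inequality for the union $\bigcup_{k=1}^K A_k$, where $A_k:=\{Q(\tau^{(k)})=1\}$. It states
\[
C_K = P\Bigl(\bigcup_k A_k\Bigr) \;\geq\; \sum_k P(A_k) - \sum_{i<j} P(A_i\cap A_j) =: \mathrm{LB}.
\]
The plan is to write $\mathrm{LB}$ explicitly in terms of $p$ and $\bar\rho$, and then difference it against the IID reference $\bar\rho=0$ from (A1).

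\textbf{Step 1 (marginals).} Under (A2), each proposer keeps the baseline marginal $P(A_k)=p$. This holds because role diversity changes the dependence structure, not the per-agent success rate. The first sum therefore equals $Kp$.

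\textbf{Step 2 (pairwise terms via correlation).} Let $X_k=\mathbf{1}_{A_k}$ be Bernoulli$(p)$, with Pearson correlation $\rho_{ij}$. Then
\[
P(A_i\cap A_j)=\mathbb{E}[X_iX_j]=p^2+\rho_{ij}\,p(1-p).
\]
We define $\bar\rho:=\binom{K}{2}^{-1}\sum_{i<j}\rho_{ij}$, which gives
\[
\sum_{i<j}P(A_i\cap A_j)=\binom{K}{2}\bigl(p^2+\bar\rho\, p(1-p)\bigr),
\]
and hence
\[
\mathrm{LB}(\bar\rho)=Kp-\binom{K}{2}p^2-\binom{K}{2}\bar\rho\,p(1-p).
\]
Setting $\bar\rho=0$ recovers the IID Bonferroni bound $\mathrm{LB}(0)=Kp-\binom{K}{2}p^2$.

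\textbf{Step 3 (difference and sign).} Subtracting gives exactly
\[
\mathrm{LB}(\bar\rho)-\mathrm{LB}(0)=-\binom{K}{2}\bar\rho\,p(1-p).
\]
For $K\ge 2$ and $p\in(0,1)$ by (A2), the factors $\binom{K}{2}$ and $p(1-p)$ are both strictly positive. The difference is therefore strictly positive exactly when $\bar\rho<0$.

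\textbf{Main obstacle.} The algebra is routine. The delicate part is interpretive, and we will address it in two caveats.

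First, the result compares bounds, not coverages. $C_K$ itself depends on higher-order joint probabilities that $\bar\rho$ does not determine, so we cannot conclude $C_K(\bar\rho)>C_K(0)$ without further assumptions. This is the parenthetical caveat in the statement.

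Second, feasibility. Negative average correlation is constrained: since $\mathrm{Var}\bigl(\sum_k X_k\bigr)\ge 0$, we need $\bar\rho\ge -1/(K-1)$. The Bonferroni bound may also be vacuous (below $0$, or compared against a trivial bound $\ge p$) for large $K$. We will note that the claim concerns the Bonferroni expression as stated, with $K\ge2$ and identical marginals, and that it remains valid on the feasible range.
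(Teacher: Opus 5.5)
Your proposal is correct and follows the paper's proof essentially step for step: the second-order Bonferroni bound, the rewrite $P(A_i\cap A_j)=p^2+\rho_{ij}p(1-p)$, averaging to $\bar\rho$, and differencing $\mathrm{LB}(\bar\rho)-\mathrm{LB}(0)$, with the same caveat that only bounds, not actual coverages, are compared; your feasibility remark $\bar\rho\ge -1/(K-1)$ is a sound addition the paper omits. One small correction: the common marginal $P(A_k)=p$ is an assumption the paper's proof also imposes, and the paper's full A2 explicitly allows role-specific $p_k$, so state it as a homogeneity assumption rather than as something that ``holds because'' roles only change the dependence structure.
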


\textbf{Information.} Verifier type sets an upper bound on achievable selection accuracy:
\begin{proposition}[Information Quality Bounds]\label{prop:information}
Under \emph{complete and deterministic verification} ($Q$ fully determined by $e$; Def.~\ref{def:exec_feedback}, A4), $I(Q;e)\!=\!H(Q)$. Textual feedback with non-zero error rates satisfies $I(Q;\sigma)\!<\!H(Q)$; pseudo-verification $\sigma_v$ satisfies $I(Q;\sigma)\!\leq\!I(Q;\sigma_v)\!\leq\!I(Q;e)$ (strict middle inequality empirical when $\sigma_v$ exploits specialization; App.~\ref{app:detailed_analysis}).
\end{proposition}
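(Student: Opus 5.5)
The plan treats the three claims of Proposition~\ref{prop:information} separately, working with the joint law of $(Q,\sigma,\sigma_v,e)$ induced by a fixed candidate $\tau$ and the randomness of the reviewers. Throughout, $H(Q)>0$ by (A2), since $p\in(0,1)$ and $Q$ is nondegenerate.

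\emph{Equality for execution.} Complete and deterministic verification (Def.~\ref{def:exec_feedback}, A4) means there is a measurable $g$ with $Q=g(e)$ almost surely. Hence $H(Q\mid e)=0$, and the identity $I(Q;e)=H(Q)-H(Q\mid e)$ gives $I(Q;e)=H(Q)$ immediately. As a consequence, $I(Q;Z)\le H(Q)=I(Q;e)$ holds for any other signal $Z$. This already yields the right-hand inequality $I(Q;\sigma_v)\le I(Q;e)$ without any structural assumption on $\sigma_v$.

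\emph{Strict inequality for textual feedback.} Textual feedback has a non-zero error rate: there is no decoder $\hat Q(\sigma)$ with $P(\hat Q(\sigma)\neq Q)=0$. I would argue by contraposition.
\begin{itemize}
\item If $I(Q;\sigma)=H(Q)$, then $H(Q\mid\sigma)=0$.
\item Then $Q$ is almost surely a function of $\sigma$. Finiteness (A3) makes this clean, since all variables are discrete.
\item The MAP decoder would then have zero error, contradicting the non-zero error rate.
\end{itemize}
Equivalently, Fano's inequality gives $H(Q\mid\sigma)\ge$ a positive function of the error rate $P_e$. Since $Q$ is binary, $H(Q\mid\sigma)\ge$ something positive as soon as $P_e>0$ (using $H(Q\mid\sigma)\ge 2P_e\cdot$const, or directly that zero conditional entropy forces zero Bayes error).

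\emph{Middle inequality.} I would model pseudo-verification as textual review augmented with additional evidence, i.e.\ $\sigma_v=(\sigma,\xi)$, where $\xi$ collects the structured checks (tests, tool returns). The chain rule then gives
\[
I(Q;\sigma_v)=I(Q;\sigma)+I(Q;\xi\mid\sigma)\ge I(Q;\sigma).
\]
Alternatively, if $\sigma$ is a garbling of $\sigma_v$, meaning $Q\to\sigma_v\to\sigma$ is a Markov chain, the data-processing inequality gives the same conclusion.

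\emph{Main obstacle.} The hardest step is justifying this structural modeling assumption, since in practice $\sigma_v$ and $\sigma$ are produced by different prompts. I would state it explicitly as part of Def.~\ref{def:exec_feedback}: the pseudo-verifier has access to everything the textual reviewer sees. The strictness $I(Q;\xi\mid\sigma)>0$ cannot be derived from the model, which is why the statement labels the strict middle inequality as empirical. For that part I would only point to the estimates in App.~\ref{app:detailed_analysis}.
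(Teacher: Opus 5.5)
Your argument is correct in substance, and for $I(Q;e)=H(Q)$ it matches the paper exactly ($Q=g(e)$, hence $H(Q\mid e)=0$). The other two parts take a different route.

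\emph{Strict inequality.} The paper fixes a binary $\sigma$ with the directional errors $\alpha,\beta$ of Def.~\ref{def:text_feedback}. It computes both posteriors by Bayes (e.g.\ $P(Q{=}1\mid\sigma{=}\text{c})=(1-\beta)p/[(1-\beta)p+\alpha(1-p)]$), checks that they lie strictly in $(0,1)$ when $\alpha,\beta,p\in(0,1)$, and averages the resulting positive values of $H_b$. Your contrapositive (zero conditional entropy iff $Q$ is a.s.\ a function of $\sigma$ iff zero Bayes error) is shorter. It also applies to categorical $\sigma$, including the ``uncertain'' label, and identifies the sharp condition. The paper's computation, in exchange, yields the explicit numbers used in App.~\ref{app:detailed_analysis}.

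To connect with the paper's hypothesis, add one line: for $\alpha,\beta,p\in(0,1)$, each of the four decoders $\{\text{c},\text{i}\}\to\{0,1\}$ errs with positive probability. The identity errs with probability $\alpha(1-p)+\beta p$, the flip with $(1-\alpha)(1-p)+(1-\beta)p$, and the constants with $p$ and $1-p$. The condition $\alpha,\beta<1$ matters, since $\alpha=\beta=1$ is a perfectly informative flipped signal.

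\emph{The chain.} The paper does not derive it. It stipulates $I(Q;\sigma)\le I(Q;\sigma_v)\le I(Q;e)$ inside Def.~\ref{def:pseudo_verification} with informal reasons. Your two observations are more rigorous than the paper: the right inequality is automatic once $I(Q;e)=H(Q)$, and the left one requires an explicit refinement or garbling assumption (chain rule or data processing). That assumption belongs in Def.~\ref{def:pseudo_verification}, not Def.~\ref{def:exec_feedback}. Also, $\xi$ should be the structured diagnostic fields, not ``tests, tool returns'', which are unavailable exactly where pseudo-verification is used.

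Two slips need fixing. First, you cannot condition on a fixed candidate $\tau$. For fixed $x$ and $\tau$, $Q(\tau)$ is a constant, so $H(Q)=0$; this contradicts your appeal to (A2) and makes the strict inequality false. The randomness must include the input and the proposer's sampling of $\tau$, as in the convention of Def.~\ref{def:task} and $p=P(Q(\tau^{(k)})=1)$.

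Second, Fano's inequality (Lemma~\ref{lem:fano}) runs the other way: $H(Q\mid\sigma)\le H_b(P_e^*)$. It therefore cannot lower-bound $H(Q\mid\sigma)$ by a function of the error rate. The bound $H(Q\mid\sigma)\ge 2P_e^*$ (in bits) is the Hellman--Raviv bound, which follows from $\min(q,1-q)\le\tfrac12 H_b(q)$ by concavity of $H_b$. Since your contrapositive already suffices, drop or relabel that remark.
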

\begin{remark}[Verifier Regimes]\label{rem:task_classification}
$\mathcal{G}_{\text{info}}$ is set by verifier type: \emph{complete verification} (MBPP, BFCL-SP) gives $I(Q;e)\!=\!H(Q)$; \emph{pseudo-verification} (GSM8K, AIME) gives bounded $\mathcal{G}_{\text{info}}$; \emph{no external verifier} (many open-domain tasks) sharply limits $\mathcal{G}_{\text{info}}$, leaving only low-fidelity internal signals.
\end{remark}
By Fano-type bounds (Lemma~\ref{lem:fano}), higher-fidelity signals admit tighter achievable selection-error bounds; the realized $\mathcal{G}_{\text{info}}$ is therefore upper-bounded by verifier type, consistent with the cross-task pattern in \S\ref{subsec:discussion}.

\textbf{Aggregation.} Majority voting fails under correlated errors~\cite{condorcet1785,kim2025correlated}; evidence-based cross-review escapes this regime:
\begin{proposition}[Aggregation Efficiency; Idealized Bound]\label{prop:aggregation}
(a) \emph{Voting degradation under correlation.} As agent errors become highly correlated, majority voting loses its ensemble error-reduction advantage; in the worst case (adversarial correlation), the aggregation gain $\eta(f_{\mathrm{vote}})\!-\!\eta(f_{\mathrm{base}})$ vanishes. (b) \emph{DIANOIA's bound, under the simplified conditionally-independent reviewer model} (A6): with $K{-}1$ reviewers ($\epsilon_0\!<\!0.5$),
\[
\eta(f_{\mathrm{DIANOIA}})\!\geq\!1{-}\epsilon_0^{K-1}.
\]
This is a best-case bound: residual reviewer correlation $\rho_R\!>\!0$ floors the error at $\epsilon_\infty\!:=\!\epsilon_0\rho_R + \epsilon_0^{K-1}(1{-}\rho_R)$ in practice (\S\ref{subsec:behavior}).
\end{proposition}

Combining the three:
\begin{theorem}[DIANOIA Characterization]\label{thm:dianoia_convergence}
Under (A1)--(A6) (App.~\ref{app:assumptions}), DIANOIA satisfies (a) \emph{information sufficiency} $I(Q;e)\!=\!H(Q)$ on complete and deterministic verifiers; (b) \emph{finite-step convergence} as an exact potential game on complete and deterministic verifiers (approximate under pseudo-verification); (c) \emph{performance lower bound} $\mathbb{E}[Q]\!\geq\![1{-}(1{-}p)^K][1{-}\epsilon_0^{K-1+S}]$ on execution tasks, under the additional synthesis-iteration-independence idealization (A7, App.~\ref{app:proofs}). The exact/approximate/inapplicable regime distinction across verifier types is in App.~\ref{app:assumptions_discussion}.
\end{theorem}

\paragraph{Scope of formal claims.}
Equation~\ref{eq:mas_identity} is an exact definitional identity.
Equation~\ref{eq:decomposition} is a conditional subadditive bound for
selector-only systems, or for repair-capable systems satisfying the stated
rescue-margin condition. Proposition~\ref{prop:aggregation}(b) is an
idealized reviewer-independence bound under A6 and analyzes the
$R=K-1$ reviewer regime; DIANOIA's default configuration uses $R=1$.
Theorem~\ref{thm:dianoia_convergence}(b) applies exactly only under complete
and deterministic verification. The $S$-dependent tightening in
Theorem~\ref{thm:dianoia_convergence}(c) additionally relies on A7.
Under pseudo-verification these results provide qualitative guidance rather
than calibrated numerical guarantees.

\paragraph{Idealized theory versus measured practice.}
The idealized bounds predict exponential tightening with independent
reviewers and synthesis iterations. On MBPP, however, increasing synthesis
iterations from $S=1$ to $2$ to $3$ changes accuracy from
$83.6\%$ to $84.0\%$ to $84.6\%$, while increasing reviewers from $R=1$ to
$R=3$ changes accuracy from $84.6\%$ to $85.8\%$ at approximately
$2.6{\times}$ the token cost. The directions agree with the idealized
analysis, but the magnitudes do not: these bounds describe best-case trends,
not calibrated scaling laws.

\section{DIANOIA Methodology}
\label{sec:method}

DIANOIA enacts all three framework prescriptions in a four-phase workflow (Figure~\ref{fig:overview}): \emph{Propose} for Exploration, \emph{Execute} and \emph{Review} jointly for Information (raw evidence vs.\ its interpretation into actionable signals), and \emph{Synthesize} for Aggregation. No proper subset of these phases reproduces the gain (Section~\ref{subsec:scaling_analysis}). We detail each phase below.

\textbf{Phase 1: Propose ($\mathcal{G}_{\text{explore}}$).} DIANOIA assigns each proposer a distinct role---\textbf{Minimalist} (fewest steps), \textbf{Skeptic} (verify each step), \textbf{Explorer} (unconventional methods)---as one lightweight realization intended to reduce pairwise success correlation (Proposition~\ref{prop:diversity}). The governing prescription is reduced correlation rather than this particular role taxonomy. Full role prompts are in Appendix~\ref{app:exp_protocol}. Proposers generate candidates in parallel, with IID reference coverage $1\!-\!(1\!-\!p)^K$.

\textbf{Phase 2: Execute ($\mathcal{G}_{\text{info}}$).} Each candidate is evaluated via the highest-fidelity feedback mechanism available---sandboxed execution or LLM-based pseudo-verification---producing $e^{(k)}\!=\!\mathcal{E}(\tau^{(k)})$. By Theorem~\ref{thm:dianoia_convergence}a, complete and deterministic verification yields $I(Q;e)=H(Q)$.

\textbf{Phase 3: Review ($\mathcal{G}_{\text{info}}$).} $R$ reviewers per proposal perform evidence-based cross-review $v^{(k)}_j = \mathcal{R}_j(\tau^{(k)}, e^{(k)})$, transforming raw feedback into actionable quality signals (analyzing failure causes, identifying validated components, suggesting fixes). The default $R\!=\!1$ already captures the bulk of the aggregation gain (App.~\ref{app:potential_game_ideal}, Remark); the idealized peer-review regime $R\!=\!K{-}1$ underlies the $\epsilon_0^{K-1}$ misclassification bound of Proposition~\ref{prop:aggregation}b.

\begin{table*}[t]
\centering
\caption{Main results. Multi-agent methods (top) use Qwen3-30B-A3B; single-model references (bottom).}
\label{tab:main_results}
\setlength{\tabcolsep}{6pt}
\begin{tabular}{lcccc}
\toprule
\textbf{Method} & \textbf{GSM8K} & \textbf{AIME-2025} & \textbf{MBPP} & \textbf{BFCL-SP} \\
\midrule
Self-Consistency & 86.4\% \tiny[84.5, 88.3] & 56.7\% \tiny[40.0, 73.3] & 78.0\% \tiny[74.2, 81.2] & 82.3\% \tiny[78.5, 85.8] \\
Best-of-3 (same verifier) & 87.0\% \tiny[85.2, 88.8] & 73.3\% \tiny[56.7, 90.0] & 78.4\% \tiny[75.0, 81.8] & 84.0\% \tiny[80.5, 87.5] \\
MoA & 87.1\% \tiny[85.2, 88.9] & 86.7\% \tiny[73.3, 96.7] & 76.8\% \tiny[73.2, 80.2] & 85.8\% \tiny[82.5, 89.3] \\
Two Heads & 85.8\% \tiny[84.0, 87.6] & 80.0\% \tiny[66.7, 93.3] & 77.2\% \tiny[73.6, 81.2] & 88.8\% \tiny[85.5, 92.0] \\
ReConcile & 89.8\% \tiny[88.3, 91.5] & 70.0\% \tiny[53.3, 86.7] & 77.2\% \tiny[73.4, 81.0] & 82.3\% \tiny[78.3, 86.0] \\
\midrule
\textbf{DIANOIA (Ours)} & \textbf{91.1\%} \tiny[89.6, 92.7] & 93.3\%$^\ddagger$ \tiny[83.3, 100] & \textbf{84.6\%} \tiny[81.4, 87.0] & \textbf{92.3\%} \tiny[89.5, 94.8] \\
\midrule
Qwen3-30B-A3B & 83.6\% \tiny[81.5, 85.7] & 70.0\% \tiny[53.3, 86.7] & 76.0\% \tiny[72.4, 80.0] & 81.8\% \tiny[78.0, 85.8] \\
Qwen3-235B-A22B & 86.4\% \tiny[84.6, 88.3] & 73.3\% \tiny[56.7, 86.7] & 80.2\% \tiny[76.6, 83.8] & 89.0\% \tiny[85.8, 92.0] \\
DeepSeek-V3.2 & 85.3\% \tiny[83.3, 87.2] & 76.8\%$^\dagger$ \tiny[60.0, 90.0] & 81.2\% \tiny[77.6, 84.6] & 83.5\% \tiny[79.5, 87.3] \\
\bottomrule
\end{tabular}
\vspace{-1pt}

{\footnotesize $^\dagger$Extended reasoning (thinking mode) enabled. $^\ddagger$Complete 30-problem set; reported as a small-sample point estimate.}

\vspace{-8pt}
\end{table*}

\textbf{Review beyond Execute.} Execution gives \emph{whether} a candidate fails; Review adds \emph{why}, \emph{which sub-parts pass}, and \emph{how to repair}---feeding the synthesizer with the diagnostic content needed for repair (rather than mere selection).

\textbf{Phase 4: Synthesize ($\mathcal{G}_{\text{aggr}}$).} A synthesis agent integrates proposals, execution reports, and reviews through iterative refinement with closed-loop validation (Algorithm~\ref{alg:synthesis}, Appendix~\ref{app:exp_protocol}): trajectory grafting, re-execution, deterministic synthesis. Although the formal bound (Theorem~\ref{thm:dianoia_convergence}c) treats this phase as evidence-conditioned selection, in practice the synthesizer can perform local repair: instrumentation on MBPP shows $5.5\%$ of DIANOIA's correct outputs are \emph{rescued} cases where every initial proposal failed execution but the synthesizer recovered a correct answer using reviewer hints (Appendix~\ref{app:synthesis_breakdown}). The bound therefore lower-bounds, rather than fully characterizes, Phase~4.

\textbf{Diagnostic protocol.} Four rules from the framework map task profiles (verifier type, baseline $p$, answer-space topology) to qualitative bottleneck hypotheses for tasks with measurable quality signals. \textbf{R1}: a deterministic verifier lifts $\mathcal{G}_{\text{info}}$ to its ceiling, $I(Q;e){=}H(Q)$ (Thm.~\ref{thm:dianoia_convergence}a). \textbf{R2}: a high baseline $p$ leaves small exploration headroom $C_K{-}p$ (Prop.~\ref{prop:iid_exploration}). \textbf{R3}: a fragmented answer space renders majority voting unreliable (Failure Mode~1, App.~\ref{app:detailed_analysis}). \textbf{R4}: frequent per-instance $p_i{<}0.5$ makes Self-Consistency actively harm accuracy (Failure Mode~2). These hypotheses are evaluated empirically in \S\ref{subsec:discussion}.

\section{Experiments}
\label{sec:experiments}

We conduct comprehensive experiments to validate DIANOIA's effectiveness across four benchmarks spanning math, code, and tool use, and systematically ablate each gain dimension to demonstrate the necessity of joint optimization.

\subsection{Experimental Setup}
\label{subsec:setup}

\textbf{Datasets.} We evaluate on four benchmarks spanning math, code, and tool use, chosen for varied feedback regimes (Remark~\ref{rem:task_classification}): \emph{GSM8K} ($N{=}1319$, high $p$, pseudo-verifier)~\cite{cobbe2021training}; \emph{AIME-2025} ($N{=}30$, low $p$, pseudo-verifier)~\cite{aime2025}; \emph{MBPP} ($N{=}500$, deterministic exec)~\cite{austin2021program}; \emph{BFCL-SP} ($N{=}400$, schema-validation exec)~\cite{patil2025bfcl}. Full dataset details in Appendix~\ref{app:exp_protocol}.

\textbf{Baselines.} Self-Consistency~\cite{wang2023self} (sampling + majority vote), MoA~\cite{wang2024mixture} (layered aggregation), Two Heads~\cite{jin2025two} (collaborative reasoning), ReConcile~\cite{chen2024reconcile} (round-table discussion), and a \emph{Best-of-3} control sampling three IID candidates and selecting with the same execution signal as DIANOIA. Single-model references: Qwen3-30B-A3B~\cite{qwen3technicalreport} (base), Qwen3-235B-A22B, DeepSeek-V3.2~\cite{deepseekv3technicalreport}.

\textbf{Implementation.} All multi-agent methods use Qwen3-30B-A3B-Instruct-2507 with thinking mode disabled (zero-shot). DIANOIA default: $K{=}3, R{=}1, S{=}3$; baselines use the standard configurations from their respective papers. Full prompts, and other details in Appendix~\ref{app:exp_protocol}.

\subsection{Main Results}
\label{subsec:main_results}

Table~\ref{tab:main_results} presents results across all four benchmarks. Across GSM8K, MBPP, and BFCL-SP, DIANOIA outperforms every multi-agent baseline under the standard configurations.

DIANOIA improves over the strongest multi-agent baseline by $+$1.3pp (GSM8K), $+$6.6pp (MBPP), and $+$3.5pp (BFCL-SP); on MBPP its CI lower bound (81.4\%) exceeds every baseline's upper bound. The Best-of-3 control---which uses the same verifier as DIANOIA but without its multi-agent structure---reaches $78.4\%$ on MBPP, $6.2$pp below DIANOIA, isolating the marginal value of $\mathcal{G}_{\text{aggr}}$. AgentCoder~\cite{huang2024agentcoder}, a code-specific MAS benchmarked on its natural MBPP domain, reaches $81.2\%$ ($3.4$pp below DIANOIA on Qwen3-30B-A3B; cross-model results in \S\ref{subsec:cross_model}).

On the complete 30-problem AIME-2025 set, DIANOIA obtains a 93.3\% point estimate. An exploratory paired McNemar exact test against ReConcile yields $p=0.039$ from only nine discordant pairs; the wide confidence interval precludes a precise effect-size or model-superiority claim.

The cross-benchmark pattern matches the framework's predictions: DIANOIA's largest gains emerge on execution-grounded tasks where $I(Q;e)\!=\!H(Q)$ activates $\mathcal{G}_{\text{info}}$; Self-Consistency \emph{degrades} on AIME-2025 due to voting failure under low per-problem $p$; the GSM8K gain is bounded by pseudo-verifier fidelity. We formalize and verify these predictions in Section~\ref{subsec:discussion}, and use the budget-matched Pareto frontier (Section~\ref{subsec:efficiency_frontier}) as the primary fairness evidence for efficiency.

\subsection{Multi-Dimensional Scaling Analysis}
\label{subsec:scaling_analysis}

As a post-run descriptive check of subadditivity, we compute $\gamma=\Delta_{\mathrm{joint}}/\sum_i\Delta_i$ on the same MBPP and BFCL-SP runs used for the component analysis. The common empirical reference is $f_{\mathrm{base}}=(K,R,S)=(1,0,0)$. Because $\gamma$ is computed after the runs and is not held out, it is not used to define or validate R1--R4. We obtain $\gamma=0.88$ on MBPP and $\gamma=0.852$ on BFCL-SP; these are descriptive evidence of subadditivity rather than a validation of the idealized theory.

\begin{table}[t]
\centering
\caption{Joint-dimension scaling on MBPP.}
\label{tab:joint_optimization}
\setlength{\tabcolsep}{3pt}
\begin{tabular}{lccccl}
\toprule
\textbf{Config} & \textbf{K} & \textbf{R} & \textbf{S} & \textbf{Acc} & \textbf{Activates} \\
\midrule
Baseline & 1 & 0 & 0 & 76.0\% & --- \\
Explore-only & 3 & 0 & 1 & 81.2\% & $\mathcal{G}_{\text{explore}}$ \\
Info-only & 1 & 1 & 1 & 79.8\% & $\mathcal{G}_{\text{info}}$ \\
Aggr-only & 1 & 0 & 1 & 76.8\% & $\mathcal{G}_{\text{aggr}}$ \\
Two-dim & 3 & 1 & 1 & 83.6\% & Explore+Info \\
\textbf{DIANOIA-full} & \textbf{3} & \textbf{1} & \textbf{3} & \textbf{84.6\%} & \textbf{All three} \\
\bottomrule
\end{tabular}%
\end{table}

A \emph{role design ablation} (App.~\ref{app:scaling_tables}, Table~\ref{tab:role_ablation}) further isolates the exploration gain: \emph{any} structured role design (three-role triplet $84.6\%$, two-role subsets $81.4$--$84.4\%$, alternative triplet $83.4\%$) beats same-prompt$+$temperature diversity ($83.2\%$), but within the structured family, CIs overlap, so the gain comes from structured diversity as a \emph{principle}, not from any specific hand-crafted prompt. Single-dimension scaling sweeps and the BFCL-SP joint-optimization table ($\gamma\!=\!0.852$) are also in App.~\ref{app:scaling_tables}.

\textbf{Information-dimension diagnostics.} Reviewer error $\hat{\epsilon}_0\!=\!5.8\%$ on MBPP (exec) vs. $17.5\%$ on GSM8K (pseudo-verifier); the GSM8K pseudo-verifier attains precision $94.0\%$ / recall $86.6\%$. These numbers operationalize Remark~\ref{rem:task_classification} and feed the diagnostic analysis in Section~\ref{subsec:discussion}.

\subsection{Budget-Matched Pareto Frontier}
\label{subsec:efficiency_frontier}

Standard-configuration comparisons (Table~\ref{tab:main_results}) leave open whether each method could match DIANOIA at higher compute. We sweep each method along its native scaling knobs (39 configurations enumerated in App.~\ref{app:sweep_configs}) and plot the upper Pareto envelope of accuracy vs.\ tokens on MBPP (Figure~\ref{fig:efficiency_frontier}).

\begin{figure}[t]
    \centering
    \includegraphics[width=\columnwidth]{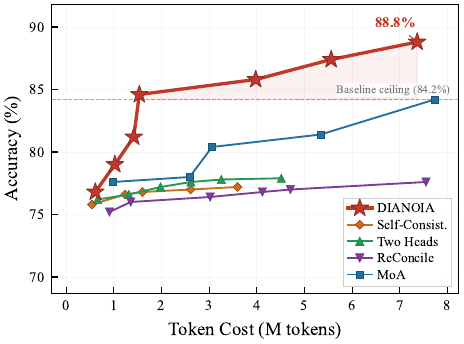}
    \caption{Token-budget Pareto frontier on MBPP.}
    \label{fig:efficiency_frontier}
    \vspace{-2pt}
\end{figure}

Self-Consistency, Two Heads, and ReConcile plateau below $78\%$ at every token budget we tested. MoA breaks through only by stacking three layers, reaching $84.2\%$ at $7.73$M tokens; DIANOIA reaches $84.6\%$ at just $1.54$M---a $\sim$$5{\times}$ token-budget advantage at matched accuracy---and continues to $88.8\%$ at $7.36$M tokens. Appendix~\ref{app:latency} reports tokens and wall-clock latency; the latter is serving-dependent.

\subsection{Cross-Model Robustness}
\label{subsec:cross_model}

\begin{table}
\centering
\caption{Cross-model robustness on Qwen3.6-35B-A3B~\cite{qwen2026qwen36}.}
\label{tab:cross_model}
\setlength{\tabcolsep}{2pt}
\resizebox{\columnwidth}{!}{%
\begin{tabular}{lccc}
\toprule
\textbf{Method} & \textbf{MBPP} & \textbf{BFCL-SP} & \textbf{GSM8K} \\
\midrule
\multicolumn{4}{l}{\textit{Qwen3-30B-A3B (headline base, Table~\ref{tab:main_results})}} \\
Single & 76.0 \tiny[72.4, 80.0] & 81.8 \tiny[78.0, 85.8] & 83.6 \tiny[81.5, 85.7] \\
Best-of-3 & 78.4 \tiny[75.0, 81.8] & 84.0 \tiny[80.5, 87.5] & 87.0 \tiny[85.2, 88.8] \\
AgentCoder$^\dagger$ & 81.2 \tiny[77.6, 84.4] & --- & --- \\
\textbf{DIANOIA} & \textbf{84.6} \tiny[81.4, 87.0] & \textbf{92.3} \tiny[89.5, 94.8] & \textbf{91.1} \tiny[89.6, 92.7] \\
\midrule
\multicolumn{4}{l}{\textit{Qwen3.6-35B-A3B (cross-version replication)}} \\
Single & 78.4 \tiny[74.6, 81.8] & 83.8 \tiny[80.3, 87.3] & 84.2 \tiny[82.3, 86.1] \\
Best-of-3 & 83.2 \tiny[79.8, 86.4] & 87.0 \tiny[83.8, 90.0] & 83.8 \tiny[81.8, 85.8] \\
AgentCoder$^\dagger$ & 88.4 \tiny[85.4, 90.8] & --- & --- \\
\textbf{DIANOIA} & \textbf{88.8} \tiny[86.0, 91.6] & \textbf{93.0} \tiny[90.5, 95.5] & \textbf{91.0} \tiny[89.5, 92.5] \\
\bottomrule
\end{tabular}%
}

{\footnotesize $^\dagger$Code-specific MAS; MBPP is its natural domain.}
\vspace{-2pt}
\end{table}

To check whether the gain pattern is Qwen3-30B-specific, we replicate the headline experiment on \textbf{Qwen3.6-35B-A3B} across MBPP, BFCL-SP, and GSM8K (Table~\ref{tab:cross_model})---spanning both deterministic and pseudo-verifier regimes. \textbf{DIANOIA leads on every cell at point-estimate level}; the gap over the single-model baseline \emph{persists or widens} on execution-grounded tasks ($+8.6\!\to\!+10.4$pp MBPP; $+10.5\!\to\!+9.2$pp BFCL-SP, within bootstrap CI overlap), indicating the prescriptions compose with rather than are absorbed by stronger base models.

The framework also predicts \emph{where Best-of-$N$ wins or loses}: on execution-grounded tasks BoN's gap over the single model widens with the stronger base ($+2.4\!\to\!+4.8$pp MBPP, $+2.2\!\to\!+3.2$pp BFCL-SP); but on pseudo-verified GSM8K, BoN's gap \emph{inverts} to $-0.4$pp---the lossy pseudo-verifier can no longer rank top-$N$ candidates reliably, injecting noise (P2: information-ceiling bottleneck). DIANOIA avoids this by using the verifier as input to richer aggregation rather than as the direct selection signal; the DIANOIA $-$ BoN margin therefore \emph{widens} on GSM8K ($+4.1\!\to\!+7.2$pp) even as it compresses on the execution-grounded tasks.

AgentCoder's MBPP gain widens with the stronger base ($+5.2\!\to\!+10.0$pp), showing iterative repair scales with single-pass quality. DIANOIA still leads ($+3.4$pp on Qwen3-30B-A3B; $+0.4$pp on Qwen3.6, within CI overlap); the margin contracts as both approach the MBPP execution ceiling, consistent with \S\ref{sec:limitations}'s operating regime.

\textbf{Cross-host evidence (Claude Code skill).} The same $K{=}3$, $S{=}3$, three-role prescriptions also realize as a Claude Code skill (App.~\ref{app:reproducibility}). With Claude Opus~4.7~\cite{anthropic2026claude}, the skill yields $+6.7$pp gain on BigCodeBench-Hard ($33.3\!\to\!40.0\%$, $N{=}30$) and Hard MBPP ($80.0\!\to\!86.7\%$, $N{=}15$)---a consistent direction on a different model family (Claude vs.\ Qwen3), though smaller $N$ precludes tight CIs.

\begin{table}[t]
\centering
\caption{Full-scale MBPP replication on additional model families
($N=500$). Values are accuracy with 95\% confidence intervals.}
\label{tab:cross_family_mbpp}
\setlength{\tabcolsep}{2pt}
\resizebox{\columnwidth}{!}{%
\begin{tabular}{lccc}
\toprule
\textbf{Model} & \textbf{Single} & \textbf{Best-of-3} & \textbf{DIANOIA} \\
\midrule
DeepSeek-V4-Flash
& 88.8 [86.2, 91.4]
& 97.0 [95.6, 98.3]
& \textbf{98.2 [97.0, 99.2]} \\
Gemini-3.5-Flash
& 93.0 [90.6, 95.0]
& 93.8 [91.6, 95.8]
& \textbf{96.8 [95.2, 98.2]} \\
\bottomrule
\end{tabular}}
\end{table}

Together with the full-scale Qwen results in Table~\ref{tab:cross_model},
these replications on DeepSeek-V4-Flash~\cite{deepseek2026v4} and
Gemini-3.5-Flash~\cite{google2026gemini35} show that DIANOIA consistently
improves over Single and achieves the highest point estimate among the
compared methods across Qwen, DeepSeek, and Gemini at full MBPP scale.

\subsection{Prospective Held-Out Diagnostic Check}
\label{subsec:vitabench}

The four original benchmarks provide within-study
prediction--verification consistency. VitaBench~\cite{he2025vitabench}
provides a prospective held-out diagnostic check. Before running DIANOIA, we
characterized VitaBench from its structural task profile: it has no
deterministic semantic verifier and requires long-horizon interactive
reasoning. The protocol therefore predicted that information fidelity would
be the primary bounded channel, while exploration and repair would retain
substantial headroom. This prediction was recorded without using measured
Single or DIANOIA accuracy.

\begin{table}[t]
\centering
\caption{Prospective held-out evaluation on VitaBench ($N=400$).}
\label{tab:vitabench}
\begin{tabular}{lcc}
\toprule
\textbf{Method} & \textbf{Pass rate} & \textbf{95\% CI} \\
\midrule
Single & 29.5\% & [25.2, 34.1] \\
\textbf{DIANOIA} & \textbf{45.0\%} & [40.2, 49.9] \\
\bottomrule
\end{tabular}
\end{table}

The resulting $+15.5$pp gain is consistent with the predicted regime.
VitaBench uses Qwen3.5-397B-A17B~\cite{qwen2026qwen35} because
Qwen3-30B-A3B produces a near-floor operating point on this substantially
harder benchmark. We therefore treat the base-model change as an explicit
scope caveat. Because evaluation relies on semantic pseudo-verification,
this experiment supplies empirical boundary evidence rather than an
oracle-level guarantee.

\subsection{Diagnostic Protocol: Verification}
\label{subsec:discussion}

Applying rules R1--R4 (\S\ref{sec:method}) to each benchmark yields four predictions: \textbf{P1} (R1$+$R3, exec-grounded MBPP/BFCL-SP): largest DIANOIA gains $+6.6/+3.5$pp; \textbf{P2} (R2, high-$p$ GSM8K): smallest gain $+1.3$pp; \textbf{P3} (R4, AIME-2025): SC degrades $-13.3$pp, the predicted voting-amplification regime; \textbf{P4} (BoN baseline): DIANOIA exceeds Best-of-$3$ by $+6.2$pp on MBPP, isolating $\mathcal{G}_{\text{aggr}}$ since BoN already activates Explore$+$Info. These are within-study consistency checks; R1--R4 are qualitative task-structure hypotheses, not calibrated theorem derivations. VitaBench supplies the prospective check: its pre-recorded hypothesis and subsequent 29.5\% to 45.0\% improvement are consistent. Per-prediction narratives are in Appendix~\ref{app:predictive_verification}.

\subsection{Behavioral Profile}
\label{subsec:behavior}

Four empirical operating numbers characterize DIANOIA: pairwise proposer correlation $\bar{\rho}\!=\!-0.15$ (drives coverage); reviewer error $\hat{\epsilon}_0\!=\!5.8\%$ on execution vs.\ $17.5\%$ on pseudo-verification (caps $\mathcal{G}_{\text{info}}$); GSM8K pseudo-verifier precision $94.0\%$/recall $86.6\%$ (caps math-task ceiling); synthesis breakdown $16.4/78.1/5.5\%$ across selection / selection-with-repair / rescue (App.~\ref{app:synthesis_breakdown}). Here $\bar{\rho}$ denotes proposer success correlation, not reviewer-error correlation, and therefore does not empirically validate A6. A controlled GSM8K stress test further shows that accuracy remains within 91--93\% through 25\% injected evidence corruption and falls to 88\% at 30\% (Table~\ref{tab:evidence_corruption} in the appendix).


\section{Conclusion}
\label{sec:conclusion}

Three measurable channels---coverage, information ceiling, and aggregation gap---organize observed cross-task variation in multi-agent reasoning gains, and the four-rule protocol (R1--R4, \S\ref{sec:method}) forms qualitative bottleneck hypotheses. The four original patterns and the prospective VitaBench check are consistent with these hypotheses. Full-scale MBPP replications show improvements over Single across Qwen, DeepSeek, and Gemini. On MBPP, DIANOIA matches the strongest baseline with approximately $5{\times}$ fewer tokens and dominates the measured budget--accuracy Pareto frontier.

Multi-agent design thus becomes \emph{channel-aware resource allocation}: practitioners can diagnose the likely bottleneck and invest tokens accordingly. Each channel admits a measurable estimator ($\bar{\rho}$, $\eta^*(s)$, aggregation gap; $\gamma$ as a post-run descriptive check), letting designers test components before committing. The framework provides a common axis for comparing future systems that instantiate these prescriptions differently; its exact identities, conditional bounds, and idealized guarantees have distinct scopes.

\section{Limitations}
\label{sec:limitations}

Our method improves performance across several reasoning settings, but its applicability and guarantees are bounded by two distinct types of constraint: conditions assumed by our theoretical analysis, and gaps in our empirical validation. We address each in turn.

\paragraph{Scope of tasks.}
The decomposition is most informative when a reasonably reliable quality signal is available. On tasks with deterministic verifiers---such as code execution or schema validation---the information channel $\mathcal{G}_{\text{info}}$ can operate at full capacity (Theorem~\ref{thm:dianoia_convergence}(a)). On tasks with model-based pseudo-verifiers---such as mathematical reasoning---it operates at reduced but measurable fidelity (precision $94.0\%$, recall $86.6\%$, \S\ref{subsec:behavior}). For open-ended generation, dialogue, or culturally grounded reasoning, reliable quality signals are often unavailable, so the practical value of $\mathcal{G}_{\text{info}}$ is substantially reduced and the framework becomes correspondingly less prescriptive. VitaBench provides initial boundary evidence for open-world, long-horizon interactive tasks, but it relies on semantic pseudo-verification rather than a deterministic oracle, and its use of Qwen3.5-397B-A17B prevents a strictly matched-base comparison with the core benchmarks. The result therefore supports the qualitative protocol at its boundary without extending exact verifier-dependent guarantees to open-ended tasks. Extending the decomposition to weakly verifiable or verifier-free settings is an important open direction.

\paragraph{Effective operating regime.}
DIANOIA is most effective when exploration headroom and verifier signal are jointly substantial, i.e., in a broad mid-accuracy regime rather than at either extreme. Two boundary conditions limit generality. \emph{High baseline}: residual headroom is small and the additional token cost may not be justified. \emph{Very low baseline}: role-diverse proposers may fail to surface any correct candidate, shifting more of the burden from cross-review to the synthesizer's repair capability. Table~\ref{tab:cross_model} provides one empirical calibration point; practitioners should locate their task on this curve before deployment.

\paragraph{Latency and compute proxy.}
Throughout the paper we report token count as the primary compute proxy, which captures model usage but not wall-clock latency or infrastructure cost under API rate limits. DIANOIA's main sequential bottleneck is the $S$ synthesis iterations; Phases 1 and 3 are already parallelized over $K$ and $K\!\cdot\!R$ agents respectively ($R\!=\!1$ in the default configuration; see App.~\ref{app:complexity}). The latency profile is approximately
\[
  L_{\text{propose}} + L_{\text{execute}} + L_{\text{review}}
  + S\cdot(L_{\text{synth}} + L_{\text{execute}}).
\]
Measured wall-clock latency is reported in Appendix~\ref{app:latency}; DIANOIA is substantially slower than Single because proposal/review and iterative synthesis introduce additional calls. These measurements depend on provider load, API scheduling, and available parallelism, so the cost--accuracy tradeoffs may differ in production settings and do not constitute a hardware-independent latency bound.

\paragraph{Empirical scope.}
AIME-2025 ($N{=}30$) supports the gain direction but not a precise estimate of effect size; its wide confidence interval and nine discordant pairs do not support a model-superiority claim. We therefore treat the complete 30-problem set as a small-sample stress test, and broader competition-math evaluation is needed before drawing quantitative conclusions about the low-baseline regime.

\paragraph{Theoretical idealizations.}
Two formal results assume conditions that hold exactly on execution-grounded tasks but only approximately on pseudo-verified ones. Proposition~\ref{prop:aggregation}(b)'s exponential reviewer-error bound assumes conditional independence among reviewers~(A6); agents instantiated from the same base model can retain correlated errors, so the bound should be interpreted as a best-case trend rather than a tight numerical guarantee. The reported $\bar{\rho}\!=\!-0.15$ measures correlation among proposer success events and supports the intended diversity regime; it does not measure reviewer-error independence or validate A6. Proposition~\ref{prop:aggregation}(b) also analyzes $R\!=\!K{-}1$ reviewers, whereas the default configuration uses $R\!=\!1$. Theorem~\ref{thm:dianoia_convergence}(a)'s information-sufficiency result requires a complete-and-deterministic verifier (Definition~\ref{def:exec_feedback}); GSM8K, AIME-2025, and VitaBench instead use pseudo-verification and therefore support empirical diagnostics rather than exact information sufficiency. The $S$-dependent tightening additionally relies on the A7 synthesis-iteration-independence idealization, although repeated failures can be correlated in practice.

\bibliography{sample-base}

\appendix

\section{Reproducibility Statement}
\label{app:reproducibility}
We design the paper to be reproducible from the artifacts described below.

\textbf{Code and data.} The complete implementation---including all four DIANOIA phases, experimental drivers, diagnostic-metric scripts, and the Claude Code skill---is publicly available at \url{https://github.com/sachiel321/DIANOIA4MALLM}.

\textbf{Quick experiential evaluation.} The released repository additionally bundles a self-contained Claude Code skill (\texttt{skill/}, installed via \texttt{bash skill/install.sh}, invoked as \texttt{/dianoia <task>}) that lets users apply DIANOIA to users' own everyday tasks from within any Claude Code session---no LLM client setup or API key required. The Python API above remains the reproducibility form factor for paper numbers; measured skill gains on Claude Opus~4.7 (\S\ref{subsec:cross_model}) provide complementary cross-host evidence.

\textbf{Models and inference.} The base model for the core multi-agent comparisons is Qwen3-30B-A3B-Instruct-2507 with thinking mode disabled; reference models are Qwen3-235B-A22B-Instruct-2507 and DeepSeek-V3.2 (instruct mode, with thinking enabled only for AIME-2025 as documented in Appendix~\ref{app:exp_protocol}). Cross-version experiments use Qwen3.6-35B-A3B~\cite{qwen2026qwen36}; VitaBench alone uses Qwen3.5-397B-A17B~\cite{qwen2026qwen35}; and the full MBPP family replications use model identifiers \texttt{deepseek-v4-flash}~\cite{deepseek2026v4} and Gemini-3.5-Flash~\cite{google2026gemini35}. All full-scale MBPP replications use the same 500-example split, prompts, evaluator, and Single/Best-of-3/DIANOIA method protocol. DIANOIA uses $K{=}3$, $R{=}1$, and $S{=}3$. Sampling uses temperature $0.7$ and \texttt{seed=42} where supported for stochastic methods, and temperature $0$ for deterministic synthesis; unsupported seeds are recorded as such by the released drivers.

\textbf{Prompt templates.} The exact role prompts (Minimalist, Skeptic, Explorer), reviewer template, synthesizer template, and pseudo-verifier template are reproduced verbatim in Appendix~\ref{app:exp_protocol}. The prompts are short enough to be transcribed without ambiguity.

\textbf{Statistical reporting.} Accuracy intervals use bootstrap resampling ($1{,}000$ iterations); the implementation is provided in the released code (\texttt{bootstrap\_ci}). VitaBench pass-rate intervals are 95\% Wilson score intervals. The AIME-2025 paired test (DIANOIA vs.\ ReConcile, $p{=}0.039$) is computed by the standard McNemar exact test on the $2{\times}2$ table of paired per-problem outcomes; a reference implementation is provided as the \texttt{analyze\_rebuttal\_metrics mcnemar} entry point in the released code, which consumes any user-supplied pair of per-problem binary outcome vectors. Ablations and joint-dimension analyses are reported on the full benchmark splits.

\textbf{VitaBench chronology.} Before either Single or DIANOIA accuracy was measured, we recorded from the task structure---absence of a deterministic semantic verifier and long-horizon interactive reasoning---the hypothesis that information fidelity would be the primary bounded channel while exploration and repair retained headroom. We then ran the full held-out evaluation set ($N{=}400$) with semantic pseudo-verification and the benchmark pass-rate protocol. Qwen3.5-397B-A17B replaces the core Qwen3-30B-A3B only here because the latter produces a near-floor operating point; this base-model change and the non-oracle verifier limit the result to prospective empirical boundary evidence.

\textbf{Reviewer-error estimation.} The judgment unit is one reviewer--proposal pair. For each pair, we compare the binary reviewer judgment with ground-truth candidate quality, conditional on the evidence shown to that reviewer, and report the resulting aggregate per-judgment mismatch rate $\hat{\epsilon}_0$. This yields 5.8\% on MBPP and 17.5\% on GSM8K.

\textbf{Licenses and intended use.} All released artifacts (code, adapters, diagnostic metric scripts, and Claude Code skill) are distributed under the MIT License. The benchmarks we use are publicly available under their respective licenses (GSM8K, AIME-2025, MBPP, BFCL-SP, and VitaBench) and are used for their intended research-evaluation purpose. Base LLMs are accessed via standard cloud inference APIs under their providers' terms of service.

\textbf{Compute infrastructure.} All inference is conducted through cloud-based LLM inference APIs; no GPU training is performed. Token consumption is reported throughout as the primary compute proxy (see \S\ref{sec:limitations} for caveats about wall-clock latency).

\textbf{Use of AI assistants.} AI assistants were used during paper preparation for LaTeX editing, citation formatting, and code refinement of the released implementation. All conceptual contributions---the gain decomposition, the four-rule diagnostic protocol, the DIANOIA system design, theorems, proofs, and experimental designs---are the authors' own. AI-generated code was reviewed and tested before inclusion.

\section{Evidence-Corruption Robustness}
\label{app:evidence_corruption}

\begin{table}[t]
\centering
\caption{Controlled evidence-corruption stress test on the first 100 GSM8K
examples.}
\label{tab:evidence_corruption}
\resizebox{\columnwidth}{!}{%
\begin{tabular}{lrrrrrrr}
\toprule
Flip rate (\%) & 0 & 5 & 10 & 15 & 20 & 25 & 30 \\
\midrule
Accuracy (\%) & 92 & 93 & 92 & 91 & 93 & 91 & 88 \\
\bottomrule
\end{tabular}}
\end{table}

Accuracy remains within 91--93\% through 25\% corruption and falls to 88\%
at 30\%. This intervention corrupts pseudo-verifier evidence before
Review/Synthesize; it is therefore an evidence-robustness test, not a direct
sweep of the conditional reviewer-error parameter $\epsilon_0$.

\section{Token and Latency Measurements}
\label{app:latency}

\begin{table}[t]
\centering
\caption{Mean input/output tokens and wall-clock latency per task under
standard configurations with Qwen3.6-35B-A3B.}
\resizebox{\columnwidth}{!}{%
\begin{tabular}{lcc}
\toprule
\textbf{Benchmark} & \textbf{Single: in/out; sec} &
\textbf{DIANOIA: in/out; sec} \\
\midrule
MBPP    & 232/145; 1.50  & 2,495/1,211; 12.02 \\
BFCL-SP & 693/34; 0.91   & 2,628/979; 9.93 \\
GSM8K   & 71/368; 2.67   & 2,431/1,455; 14.87 \\
\bottomrule
\end{tabular}}
\end{table}

\section{Formal Definitions}
\label{app:definitions}

This section provides complete formal definitions referenced in Section~\ref{sec:theory}.

\begin{definition}[Reasoning Task]
\label{def:task}
A reasoning task is defined by the tuple $(\mathcal{X}, \mathcal{T}, Q)$, where:
\begin{itemize}[leftmargin=*,topsep=0pt,itemsep=1pt]
    \item $\mathcal{X}$ is the input space (e.g., natural language problem descriptions)
    \item $\mathcal{T}$ is the output space (e.g., solution trajectories or code)
    \item $Q: \mathcal{T} \times \mathcal{X} \to \{0, 1\}$ is a binary quality function where $Q(\tau, x) = 1$ if and only if $\tau$ is a correct solution for input $x$
\end{itemize}
\textbf{Notation convention.} For a fixed input $x$ (every probability statement in the paper conditions on a sampled or fixed input from $\mathcal{X}$), we write $Q(\tau) := Q(\tau, x)$ as shorthand throughout the main text and proofs. All expectations $\mathbb{E}[Q(\cdot)]$ are understood as averages over both the input distribution and any stochasticity in the multi-agent procedure.
\end{definition}

\begin{definition}[Multi-Agent System]
\label{def:mas}
A multi-agent reasoning system is characterized by $(K, \mathcal{P}, \mathcal{E}, f)$:
\begin{itemize}[leftmargin=*,topsep=0pt,itemsep=1pt]
    \item $K \in \mathbb{N}^+$: number of proposing agents
    \item $\mathcal{P} = \{P_1, \ldots, P_K\}$: generative distributions for each proposer
    \item $\mathcal{E}: \mathcal{T} \to \mathcal{R}$: environment executor mapping solutions to structured feedback
    \item $f: \mathcal{T}^K \times \mathcal{R}^K \to \mathcal{T}$: aggregation function synthesizing the final solution
\end{itemize}
\end{definition}

\begin{definition}[Success Correlation]
\label{def:correlation}
For agents $i, j$, define the success correlation coefficient:
\begin{align}
\rho_{ij} &:= \text{Corr}(Q(\tau^{(i)}), Q(\tau^{(j)})) \nonumber\\
&= \tfrac{\mathbb{E}[Q(\tau^{(i)}) Q(\tau^{(j)})] - p^2}{p(1-p)}
\end{align}
where $p = P(Q(\tau^{(k)}) = 1)$ is the marginal success probability.
\end{definition}

\begin{definition}[Mutual Information]
\label{def:mutual_info}
The mutual information between quality $Q$ and signal $S$ is:
\begin{align}
I(Q; S) &:= H(Q) - H(Q \mid S) \nonumber\\
&= \textstyle\sum_{q, s} P(q, s) \log \tfrac{P(q, s)}{P(q)P(s)}
\end{align}
where $H(Q) = -\sum_q P(q) \log P(q)$ is the entropy of $Q$. Note that $I(Q; S) \leq H(Q)$ with equality if and only if $S$ completely determines $Q$.
\end{definition}

\begin{definition}[Execution Feedback]
\label{def:exec_feedback}
Execution feedback $e = \mathcal{E}(\tau)$ is a structured report containing:
\begin{itemize}[leftmargin=*,topsep=2pt,itemsep=1pt]
    \item $e.\mathtt{success} \in \{0, 1\}$: whether execution completed without errors
    \item $e.\mathtt{tests} \in \{0, 1\}^m$: pass/fail status for $m$ test cases
    \item $e.\mathtt{error} \in \Sigma^*$: error messages and stack traces (if any)
\end{itemize}
For code generation, quality is typically: $Q(\tau) := \mathbf{1}[e.\mathtt{success} \wedge e.\mathtt{tests} = \mathbf{1}]$.
\end{definition}

\begin{definition}[Textual Feedback]
\label{def:text_feedback}
Textual feedback $\sigma$ is an LLM-generated evaluation of solution quality, represented as a categorical variable (e.g., $\sigma \in \{\text{``correct''}, \text{``incorrect''}, \text{``uncertain''}\}$). For the binary case, define two directional error parameters:
\begin{align}
\alpha &:= P(\sigma\!=\!\text{``c''} \mid Q\!=\!0), \\
\beta &:= P(\sigma\!=\!\text{``i''} \mid Q\!=\!1),
\end{align}
where ``c'' and ``i'' abbreviate ``correct'' and ``incorrect.''
\end{definition}

\begin{definition}[Model-Based Pseudo-Verification Feedback]
\label{def:pseudo_verification}
For tasks without deterministic execution environments (e.g., mathematical reasoning), model-based pseudo-verification feedback $\sigma_v = \mathcal{V}(\tau, x)$ is a structured diagnostic report produced by a dedicated LLM evaluator that analyzes solution $\tau$ for input $x$ \textit{without access to the ground truth}. The report contains:
\begin{itemize}[leftmargin=*,topsep=2pt,itemsep=1pt]
    \item $\sigma_v.\mathtt{is\_correct} \in \{0, 1\}$: the verifier's binary quality judgment
    \item $\sigma_v.\mathtt{confidence} \in [0, 1]$: a calibrated confidence score
    \item $\sigma_v.\mathtt{errors} \in \Sigma^*$: identified reasoning errors and diagnostic feedback
\end{itemize}
Since $\sigma_v$ is generated by an LLM evaluator (rather than deterministic execution), it satisfies $I(Q; \sigma) \leq I(Q; \sigma_v) \leq I(Q; e)$: weakly more informative than unstructured self-critique $\sigma$ (due to specialized evaluation prompts and structured output)---with strict middle inequality expected empirically when $\sigma_v$ exploits its specialization---and weakly less informative than deterministic execution feedback $e$ (due to the verifier's own error rates).
\end{definition}

\begin{definition}[Aggregation Efficiency / Effective System Efficiency]
\label{def:aggr_efficiency}
The (effective) efficiency of aggregation function $f: \mathcal{T}^K \times \mathcal{R}^K \to \mathcal{T}$ is the ratio of expected output quality to expected coverage:
\begin{equation}
\eta(f) := \frac{\mathbb{E}[Q(f(\boldsymbol{\tau}, \boldsymbol{e}))]}{\mathbb{E}[\max_k Q(\tau^{(k)})]} = \frac{\mathbb{E}[Q(f(\boldsymbol{\tau}, \boldsymbol{e}))]}{C_K},
\end{equation}
where the second equality uses $\mathbb{E}[\max_k Q(\tau^{(k)})] = P(\exists k\!:\!Q(\tau^{(k)})\!=\!1) = C_K$ for binary $Q$. For \emph{selector-only systems} (final output is one of the original proposals), $\eta(f) \leq 1$ with $\eta(f) = 1$ indicating perfect oracle-quality selection; in this regime $\eta(f) = P(Q^{\text{MAS}}\!=\!1 \mid \exists k\!:\!Q^{(k)}\!=\!1)$, the classical conditional selection accuracy. For \emph{repair-capable systems} (synthesizer can produce outputs absent from the proposal set), $\eta(f)$ additionally absorbs any synthesis-rescue contribution and represents the broader effective success rate per unit coverage. When the dependence on the feedback signal $s$ matters, we write $\eta(f, s)$ as in Decomposition~\ref{def:decomposition}; $\eta(f)$ is used when the signal is fixed or implicit.
\end{definition}

\section{Formal Assumptions}
\label{app:assumptions}

We explicitly state all assumptions underlying our theoretical analysis. The first four assumptions (A1-A4) formalize standard properties of LLM-based multi-agent systems. We add two additional assumptions (A5-A6) specific to aggregation analysis.

\begin{assumption}[Baseline Conditional Independence]
\label{assump:independence}
Given input $x$, each proposer generates output independently as the IID reference:
\begin{equation}
P(\tau^{(1)}, \ldots, \tau^{(K)} \mid x) = \prod_{k=1}^{K} P_k(\tau^{(k)} \mid x).
\end{equation}
Deviations from independence---in particular, the negative success correlation induced by role specialization---are modelled explicitly via the pairwise correlation term $\bar{\rho}$ in Proposition~\ref{prop:diversity} and discussed in App.~\ref{app:assumptions_discussion}.
\end{assumption}

\textbf{Justification.} Proposers operate in parallel without communication during the generation phase, conditioned only on the input $x$ and their assigned roles. The IID form serves as an analytical \emph{reference baseline} for coverage analysis in Section~\ref{sec:theory}; we do not claim that LLM agents are truly independent, and the empirical $\bar{\rho}\!\approx\!-0.15$ we measure (App.~\ref{app:assumptions_discussion}) confirms that the real regime departs from A1 in a direction that strengthens, rather than weakens, the resulting coverage.

\begin{assumption}[Baseline Competence]
\label{assump:baseline}
There exists baseline success probability $p \in (0, 1)$ such that $P(Q(\tau^{(k)}) = 1) = p$ for all $k$ in the absence of role specialization. Under role specialization, the per-role success rates $p_k := P(Q(\tau^{(k)}) = 1)$ need not all equal $p$; we write $\bar p := \frac{1}{K}\sum_k p_k$ for the averaged marginal.
\end{assumption}

\textbf{Justification.} This assumes agents have non-trivial but imperfect capability ($0 < p < 1$), matching empirical LLM performance on reasoning tasks. Role specialization redistributes (rather than degrades) marginal accuracy: roles are designed for orthogonal failure modes, not for raw accuracy gains. We make no assumption about the magnitude or distribution of the per-role spread $\{p_k - \bar p\}$.

\emph{Convention for closed-form expressions: a three-level rigorous ladder.} Subsequent propositions express coverage and pairwise-intersection terms through a single symbol $p$ for tractability. The closed forms admit progressively tighter rigorous interpretations as we layer in structural assumptions:

\medskip
\textbf{Level 1 — Heterogeneous IID (under A1 alone): AM--GM lower bound.}
Under conditional independence (A1) with arbitrary $\{p_k\}$ of mean $\bar p$, the true coverage is $C_K^{\text{true}} = 1 - \prod_k(1{-}p_k)$. By AM--GM, $\prod_k (1{-}p_k) \leq (1{-}\bar p)^K$, hence
\begin{equation}\label{eq:level1_amgm}
C_K^{\text{true}} \;\geq\; \underbrace{1 - (1{-}\bar p)^K}_{=:\,C_K^{\text{unif}}}.
\end{equation}
The closed-form $C_K^{\text{unif}}$ is therefore a guaranteed lower bound for \emph{any} $\{p_k\}$ with mean $\bar p$. Equivalently, $C_K^{\text{true}}$ is Schur-convex in $(p_1,\ldots,p_K)$, so the uniform vector minimizes coverage subject to fixed mean. The strict positivity and monotonicity of $\mathcal{G}_{\text{explore}}\!=\!C_K^{\text{unif}}\!-\!\bar p$ thus carry over: under heterogeneity the true exploration gain is at least as large as the closed-form value.

\medskip
\textbf{Level 2 — Diversity-induced negative dependence (A1$'$): tighter IID-heterogeneous lower bound on \emph{actual coverage}.}
The negative pairwise correlation $\bar\rho<0$ we measure empirically (Section~\ref{subsec:behavior}) is consistent with a stronger structural property: that role-specialized success indicators are \emph{negatively associated} \citep{joagdev1983na} in the sense of the orthant inequality
\begin{equation*}\label{eq:na_orthant}
\textstyle P\!\bigl(\bigcap_k A_k^c \mid x\bigr) \;\leq\; \prod_k (1-p_k), \tag{A1$'$}
\end{equation*}
i.e., the joint probability of \emph{all} agents failing is no larger than under independence. This is the canonical multivariate formalization of ``mutual inhibition'': when role $i$'s specialty matches the problem (raising $P(A_i)$), role $j$ with an orthogonal specialty is correspondingly less likely to succeed, so the joint failure event is suppressed below the independence baseline. (A1$'$) is weaker than full negative association of \citet{joagdev1983na} but sufficient for our coverage bound and verifiable via the joint outcome statistics; it is implied by A1 (with equality) and strictly strengthens any pairwise-correlation-only condition.

Under (A1$'$) and arbitrary $\{p_k\}$ with mean $\bar p$, the coverage satisfies the two-step rigorous chain
\begin{align}
C_K^{\text{true}} \;=\; 1-\textstyle P(\bigcap_k A_k^c)
&\;\overset{\text{(A1$'$)}}{\geq}\; 1-\textstyle\prod_k(1{-}p_k) \nonumber\\
&\;\overset{\text{AM--GM}}{\geq}\; 1-(1{-}\bar p)^K.\label{eq:level2_na}
\end{align}
The first inequality is a bound on \emph{actual coverage} (not an LB-on-LB), strictly tighter than Level~1: it quantifies the diversity bonus $\Delta_{\text{div}} := \prod_k(1{-}p_k) - P(\bigcap_k A_k^c) \geq 0$ as the gap between independence and negatively-dependent joint failure. The middle floor $1-\prod_k(1{-}p_k)$ also strictly exceeds the uniform AM--GM floor whenever the $\{p_k\}$ are non-constant (the AM--GM inequality is strict unless all $p_k$ are equal), so Level~2 strengthens Level~1 by two distinct mechanisms: (i) joint negative dependence among role-specialized indicators (the structural gain certified by (A1$'$)), and (ii) Schur convexity of $1-\prod_k(1{-}p_k)$ in $\{p_k\}$, which exploits any heterogeneity in per-role marginals. Both gains are quantitative, not cosmetic, and neither requires any empirical assumption about the magnitude of the heterogeneity.

\medskip
\textbf{Level 3 — Sharp closed form (K{=}2 or uniform).}
For $K=2$, inclusion--exclusion terminates, and the diversity bonus admits a sharp closed form
\begin{align}\label{eq:level3_sharp}
\Delta_{\text{div}}^{K=2} &\;=\; -\rho_{12}\sqrt{p_1(1{-}p_1)\,p_2(1{-}p_2)} \nonumber\\
&\;\geq\; 0 \quad \text{when } \rho_{12}\leq 0,
\end{align}
reducing to $-\rho_{12}\,\bar p(1{-}\bar p)$ when $p_1=p_2=\bar p$. This is the form appearing in Proposition~\ref{prop:diversity}'s Bonferroni LB for general $K$. For $K\!\geq\!3$ the bonus does not admit a closed form in $(\bar p, \bar\rho)$ alone (it depends on higher-order joint structure), but Level~2's rigorous chain \eqref{eq:level2_na} certifies its non-negativity and gives a strictly tighter quantitative floor than Level~1's $1-(1-\bar p)^K$. Proposition~\ref{prop:diversity}'s Bonferroni LB-vs-LB comparison should be read as a closed-form qualitative summary of this Level~3 effect, with Level~2 supplying the rigorous backbone under (A1$'$).

\medskip
\textbf{Summary.} Level~1 transfers to heterogeneous $\{p_k\}$ with no extra assumption (AM--GM). Level~2 strictly tightens this under the structural assumption (A1$'$), which formalizes ``role-specialization $\Rightarrow$ mutual inhibition'' as a joint-distribution property and is consistent with all four empirically reported negative pairwise correlations. Level~3 provides the sharp closed form in the $K\!=\!2$ or uniform regime. None of the three levels depends on any specific bound on $\max_k |p_k - \bar p|$.

\begin{assumption}[Finite Strategy Space]
\label{assump:finite}
The output space has finite cardinality: $|\mathcal{T}| < \infty$.
\end{assumption}

\textbf{Justification.} While the space of all possible text sequences is technically infinite, practical constraints (maximum token length, finite vocabulary) render it finite. This assumption is essential for potential game convergence (Theorem~\ref{thm:dianoia_convergence}b).

\begin{assumption}[Execution Determinism]
\label{assump:determinism}
For executable tasks, the environment executor is deterministic: given solution $\tau$, the feedback $e = \mathcal{E}(\tau)$ is uniquely determined.
\end{assumption}

\textbf{Justification.} Code execution, test evaluation, and MCP calls produce deterministic outputs (modulo explicit randomness in the code itself). This assumption directly applies to tasks with environmental verifiers (MBPP, BFCL-SP) and enables Theorem~\ref{thm:dianoia_convergence}a (execution feedback as sufficient statistic). For mathematical reasoning tasks (GSM8K, AIME) where no deterministic executor exists, DIANOIA substitutes model-based pseudo-verification $\sigma_v$ (Definition~\ref{def:pseudo_verification}); A4 does not hold in this regime, and theoretical guarantees apply only approximately (see Appendix~\ref{app:assumptions_discussion}).

\begin{assumption}[Reviewer Accuracy]
\label{assump:reviewer_accuracy}
Each reviewer, when provided with objective evidence (execution feedback $e$ or structured pseudo-verification feedback $\sigma_v$), correctly assesses solution quality with probability $1 - \epsilon_0$, where $\epsilon_0 \in (0, 0.5)$.
\end{assumption}

\textbf{Justification.} LLMs are imperfect interpreters of feedback signals and can misclassify solution quality. The constraint $\epsilon_0 < 0.5$ ensures reviewers are better than random guessing.

\begin{assumption}[Reviewer Independence]
\label{assump:reviewer_independence}
Given the true quality $Q(\tau^{(k)})$ and the available evidence (execution feedback $e^{(k)}$ or pseudo-verification feedback $\sigma_v^{(k)}$), different reviewers' assessments are conditionally independent.
\end{assumption}

\textbf{Justification.} Reviewers operate independently, each analyzing the same evidence. While they may share systematic biases (violating full independence), conditional on the objective evidence, remaining errors are largely uncorrelated. This assumption enables the exponential error reduction in Proposition~\ref{prop:aggregation}b.

\section{Detailed Analysis and Examples}
\label{app:detailed_analysis}

This section provides numerical examples, quantitative comparisons, and detailed case studies supporting the main theoretical results.

\subsection{Exploration Gain: Quantitative Analysis}

\begin{proposition}[IID Exploration Gain]
\label{prop:iid_exploration}
With $K$ independent agents each having success probability $p$, the exploration gain is:
\begin{equation}
\mathcal{G}_{\text{explore}}^{\text{iid}}(K) = 1 - (1-p)^K - p
\end{equation}
This gain is strictly positive for $K \geq 2$ and monotonically increasing in $K$.
\end{proposition}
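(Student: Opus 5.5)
The plan is to derive the closed form from the coverage definition and then settle positivity and monotonicity with elementary algebra. Under (A1) with common marginal $p$ (A2), the event that no proposer succeeds has probability $\prod_{k=1}^K P(Q(\tau^{(k)})=0)=(1-p)^K$. Hence $C_K = 1-(1-p)^K$, and by the attribution in Decomposition~\ref{def:decomposition}(II) the exploration gain is $\mathcal{G}_{\text{explore}}=C_K-p$. This gives the displayed formula directly.

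For monotonicity, I will take the forward difference in $K$:
\[
\mathcal{G}^{\text{iid}}_{\text{explore}}(K+1)-\mathcal{G}^{\text{iid}}_{\text{explore}}(K)=(1-p)^K-(1-p)^{K+1}=p(1-p)^K.
\]
This is strictly positive because $p\in(0,1)$ by (A2). So the gain is strictly increasing in $K$ over the integers $K\ge 1$.

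Strict positivity for $K\ge 2$ then follows from the base value $\mathcal{G}^{\text{iid}}_{\text{explore}}(1)=1-(1-p)-p=0$ together with strict increase. Alternatively, one can check $K=2$ directly: the gain is $1-(1-p)^2-p=p(1-p)>0$, and monotonicity carries this to every $K\ge 2$. As a sanity check, the telescoped sum gives $\mathcal{G}^{\text{iid}}_{\text{explore}}(K)=\sum_{j=1}^{K-1}p(1-p)^j$, which is a sum of positive terms whenever $K\ge2$.

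There is no real obstacle. The only point needing care is that (A2) excludes $p\in\{0,1\}$, which is exactly what makes both inequalities strict. I would also remark that the same argument works with $\bar p$ in place of $p$ under heterogeneous marginals. In that case the Level~1 AM--GM bound in (\ref{eq:level1_amgm}) shows the true gain is at least this closed-form value.
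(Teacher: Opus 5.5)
Your proof is correct and takes essentially the paper's route: you get coverage $1-(1-p)^K$ from independence, then use elementary arguments for positivity and monotonicity. The differences are cosmetic. You use the forward difference $p(1-p)^K>0$, which suits integer $K$ and gives positivity from $\mathcal{G}_{\text{explore}}^{\text{iid}}(1)=0$, whereas the paper differentiates in $K$ and checks $(1-p)^K<1-p$ directly for $K\ge 2$.
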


\textbf{Numerical Illustration.} For baseline success rate $p = 0.4$:
\begin{itemize}[leftmargin=*,topsep=2pt,itemsep=1pt]
    \item $K=2$: $\mathcal{G}_{\text{explore}} = 0.64 - 0.40 = 0.24$ (24\% improvement)
    \item $K=3$: $\mathcal{G}_{\text{explore}} = 0.784 - 0.40 = 0.384$ (38.4\% improvement)
    \item $K=5$: $\mathcal{G}_{\text{explore}} = 0.922 - 0.40 = 0.522$ (52.2\% improvement)
    \item $K=10$: $\mathcal{G}_{\text{explore}} = 0.994 - 0.40 = 0.594$ (59.4\% improvement)
\end{itemize}
The marginal gain $\mathcal{G}_{\text{explore}}(K+1) - \mathcal{G}_{\text{explore}}(K)$ decreases as $K$ grows, exhibiting diminishing returns characteristic of parallel sampling.

\textbf{Role Diversity Enhancement.} Consider three specialized roles:
\begin{itemize}[leftmargin=*,topsep=2pt,itemsep=1pt]
    \item \textbf{Minimalist}: Prefers short, direct solutions. Fails on edge cases requiring extensive validation (failure mode: insufficient coverage).
    \item \textbf{Skeptic}: Adds redundant checks. Fails on time/resource-constrained problems (failure mode: over-engineering).
    \item \textbf{Explorer}: Seeks non-standard approaches. Fails on tasks with strict conventions (failure mode: excessive creativity).
\end{itemize}
These roles exhibit \textit{orthogonal failure modes}. Empirically, on MBPP dataset, we observe pairwise correlations: $\rho_{12} \approx -0.15$, $\rho_{13} \approx -0.18$, $\rho_{23} \approx -0.12$, yielding $\bar{\rho} \approx -0.15 < 0$, confirming the diversity enhancement predicted by Proposition~\ref{prop:diversity}.

\subsection{Information Gain: Quantitative Comparison}

\textbf{Example: Execution vs. Textual Feedback.} Consider a code generation task with baseline success rate $p = 0.4$ (thus $H(Q) \approx 0.971$ bits). 

\textit{Execution feedback} $e$ provides deterministic quality assessment (pass/fail tests). By Theorem~\ref{thm:dianoia_convergence}a:
\begin{equation}
I(Q; e) = H(Q) \approx 0.971 \text{ bits (100\%)}
\end{equation}

\textit{Textual feedback} $\sigma$ from LLM self-evaluation has directional error parameters $\alpha\!=\!0.1$ and $\beta\!=\!0.15$. Computing conditional entropies (writing ``c'' for ``correct''):
\begin{align}
P(\sigma\!=\!\text{c}) &= 0.85\!\cdot\!0.4 + 0.1\!\cdot\!0.6 = 0.40 \\
P(Q\!=\!1 \mid \sigma\!=\!\text{c}) &= \tfrac{0.85 \cdot 0.4}{0.40} = 0.85 \\
H(Q \mid \sigma\!=\!\text{c}) &= H_b(0.85) \approx 0.610 \text{ bits}
\end{align}
Similarly, $P(Q\!=\!1\mid\sigma\!=\!\text{``i''})\!=\!\tfrac{0.15\cdot 0.4}{0.60}\!=\!0.10$ and $H(Q \mid \sigma\!=\!\text{``i''}) = H_b(0.10) \approx 0.469$ bits. Averaging with $P(\sigma\!=\!\text{c})\!=\!0.40$, $P(\sigma\!=\!\text{i})\!=\!0.60$:
\begin{equation}
I(Q; \sigma) \approx 0.971 - 0.525 = 0.446\text{ bits}
\end{equation}

\textbf{Information loss:} Textual feedback retains only $\sim$46\% of available information, with $\sim$54\% lost due to LLM evaluator errors. Model-based pseudo-verification $\sigma_v$ (Definition~\ref{def:pseudo_verification})---which employs specialized evaluation prompts and structured diagnostic output---achieves intermediate fidelity: empirically, $I(Q;\sigma_v) / H(Q) \approx 0.75$--$0.85$ for well-calibrated verifier LLMs, occupying the space between pure self-critique ($\sim$46\%) and deterministic execution (100\%). This three-tier hierarchy directly informs DIANOIA's feedback strategy (Remark~\ref{rem:task_classification}).

\subsection{Aggregation Gain: Voting Failure Analysis}

Majority voting faces structural limitations that depend on the \textit{answer space topology}, not merely on voter accuracy. We identify two failure modes and validate each with empirical results from Table~\ref{tab:main_results}.

\textbf{Failure Mode 1: Answer Fragmentation.} For code generation and function-calling tasks, multiple syntactically distinct implementations can be correct. Each independent agent samples from $P(\tau \mid x)$; when correct probability mass fragments across $M$ distinct variants (each receiving $\sim p/M$), no single correct answer achieves plurality. Empirically, Self-Consistency's voting yields negligible gains despite $K{=}5$ independent samples: only $+$2.0pp on MBPP and $+$0.5pp on BFCL-SP, confirming that exploration alone cannot compensate for ineffective aggregation.

\textbf{Failure Mode 2: Low Per-Problem Success Rate.} When per-problem success probability $p_i < 0.5$, majority voting \textit{amplifies} errors. With $K$ independent samples:
\begin{align}
P(\text{vote OK}) &= \textstyle\sum_{j=\lceil K/2 \rceil}^{K}\binom{K}{j}p_i^j(1{-}p_i)^{K{-}j} \nonumber\\
&< p_i \quad \text{whenever } p_i<0.5.
\end{align}
On AIME-2025, where many competition-level problems have $p_i \ll 0.5$, Self-Consistency \textit{degrades} by $-$13.3pp (56.7\% vs.\ 70.0\% single-model), precisely because voting turns the majority of incorrect samples into confident wrong selections.

\textbf{DIANOIA's Aggregation Efficiency.} In contrast, DIANOIA's evidence-based aggregation bypasses these structural requirements. With $K{=}3$ and reviewer error $\epsilon_0 = 0.2$, Proposition~\ref{prop:aggregation}(b) (\emph{Idealized Bound}, under A5--A6 and execution-grounded disambiguation) gives:
\begin{equation}
\eta(f_{\text{DIANOIA}}) \geq 1 - \epsilon_0^{K-1} = 1 - 0.2^2 = 0.96
\end{equation}
substantially above the random-selection envelope $1/K\!=\!0.33$. DIANOIA maintains high aggregation efficiency regardless of answer space structure for two reasons: (1) quality signals are anchored to objective execution feedback rather than answer clustering, and (2) assigning differentiated roles to each proposer (e.g., Minimalist, Skeptic, Explorer) maximizes behavioral diversity, driving the pairwise success correlation toward negative values ($\bar{\rho} \approx -0.15$ empirically; see Appendix~\ref{app:assumptions_discussion}) and thereby strengthening the independence condition that voting critically relies on yet cannot guarantee.

\subsection{Case Studies: Existing Methods}

We analyze three representative baselines --- Self-Consistency, ReConcile, and MoA --- through our gain decomposition framework, using empirical results from Table~\ref{tab:main_results}. All methods use Qwen3-30B-A3B as the base model.

\textbf{Self-Consistency: Exploration Without Effective Aggregation.} Self-Consistency~\cite{wang2023self} samples $K{=}5$ independent paths (activating $\mathcal{G}_{\text{explore}}$) but relies on majority voting (providing no $\mathcal{G}_{\text{info}}$ or $\mathcal{G}_{\text{agg}}$ advantage). Our framework predicts its total gain is bounded by $C_K \cdot \eta(f_{\text{vote}}) - p$, where $\eta(f_{\text{vote}})$ depends on the answer space structure:
\begin{itemize}[leftmargin=*,topsep=2pt,itemsep=1pt]
    \item \textbf{GSM8K} ($+$2.8pp): Moderate gain. Unique numerical answers enable effective voting ($\eta$ moderate), but the high baseline ($p \approx 0.84$) limits exploration headroom ($C_K - p$ small).
    \item \textbf{AIME-2025} ($-$13.3pp): \emph{Negative} gain. Low per-problem success rates cause voting to amplify errors (Failure Mode~2 above), yielding $\eta(f_{\text{vote}}) \cdot C_K < p$.
    \item \textbf{MBPP} ($+$2.0pp) and \textbf{BFCL-SP} ($+$0.5pp): Negligible gains. Answer fragmentation across diverse correct implementations renders $\eta(f_{\text{vote}}) \ll 1$, wasting the exploration benefit.
\end{itemize}

\textbf{ReConcile: Discussion Improves Aggregation for Structured Reasoning.} ReConcile~\cite{chen2024reconcile} employs multi-round discussion among agents, introducing an information exchange mechanism that partially improves $\mathcal{G}_{\text{agg}}$ beyond naive voting. Our framework predicts discussion is effective when agents can verify reasoning through textual analysis, but limited when execution feedback is essential:
\begin{itemize}[leftmargin=*,topsep=2pt,itemsep=1pt]
    \item \textbf{GSM8K} ($+$6.2pp): Strongest baseline gain. Step-by-step mathematical reasoning is amenable to textual verification; multi-round discussion enables agents to identify and correct reasoning errors, effectively improving aggregation quality.
    \item \textbf{AIME-2025} ($\pm$0pp): No gain. Competition-level difficulty exceeds the diagnostic capability of textual discussion; agents cannot reliably verify complex proofs through conversation alone, limiting $\mathcal{G}_{\text{info}}$.
    \item \textbf{MBPP} ($+$1.2pp) and \textbf{BFCL-SP} ($+$0.5pp): Minimal gains. Without deterministic execution feedback ($I(Q;e) = H(Q)$), discussion-based verification cannot substitute for program correctness signals, leaving $\mathcal{G}_{\text{info}}$ largely inactivated.
\end{itemize}

\textbf{MoA: Layered Synthesis as Implicit Iterative Refinement.} MoA~\cite{wang2024mixture} employs a multi-layer architecture where each layer's agents synthesize outputs from the previous layer, implicitly activating both $\mathcal{G}_{\text{explore}}$ (multiple agents per layer) and $\mathcal{G}_{\text{agg}}$ (progressive synthesis). Our framework predicts that layered synthesis excels when iterative reasoning refinement is valuable, but achieves this at substantially higher compute cost than feedback-guided methods:
\begin{itemize}[leftmargin=*,topsep=2pt,itemsep=1pt]
    \item \textbf{AIME-2025} ($+$16.7pp): Strongest baseline gain across all benchmarks. Multi-layer synthesis enables progressive deepening of mathematical arguments; each layer builds on and refines previous reasoning, effectively functioning as multi-round self-improvement that activates $\mathcal{G}_{\text{agg}}$ through iterative refinement.
    \item \textbf{GSM8K} ($+$3.5pp) and \textbf{BFCL-SP} ($+$4.0pp): Moderate gains consistent with synthesis-based aggregation improving over voting, but without targeted feedback signals.
    \item \textbf{MBPP: Scaling behavior reveals the compute--information tradeoff.} At standard configuration (3 agents, 2 layers), MoA gains only $+$0.8pp. However, unlike Self-Consistency (saturates at $\sim$77.2\% beyond $K{=}15$) and ReConcile (saturates at $\sim$77.6\%), MoA \textit{continues to scale}: from 77.2\% to 84.2\% on the Pareto frontier experiment, a $+$7pp improvement that nearly matches DIANOIA's 84.6\%. This confirms that layered synthesis genuinely increases $\mathcal{G}_{\text{agg}}$ with additional capacity, bypassing the voting ceiling. The critical difference is \textit{efficiency}: MoA requires $\sim$7.7M tokens to reach 84.2\%, whereas DIANOIA achieves 84.6\% at $\sim$1.5M tokens---a $\mathbf{5\times}$ cost advantage --- and at comparable budget ($\sim$7.4M tokens) DIANOIA reaches 88.8\%, a $+$4.6pp lead. This gap arises because MoA, lacking $\mathcal{G}_{\text{info}}$ from execution feedback, must compensate with brute-force layer stacking, while DIANOIA's feedback signals make each token of compute substantially more informative.
\end{itemize}

\textbf{AgentCoder: Iterative Repair Scales but Compresses Near the Verifier Ceiling.} AgentCoder~\cite{huang2024agentcoder} employs a Programmer agent plus Test Designer and Test Executor agents with iterative refinement (up to 3 rounds), activating $\mathcal{G}_{\text{info}}$ (unit-test execution feedback) and $\mathcal{G}_{\text{aggr}}$ (iterative repair) but lacking role-diverse exploration ($\mathcal{G}_{\text{explore}}\!\approx\!0$, single Programmer agent). Our framework predicts that on execution-grounded tasks where the verifier signal is reliable, iterative repair alone should scale with single-pass quality; the additional contribution of role-diverse $\mathcal{G}_{\text{explore}}$ should be largest when the baseline is far below the verifier-determined ceiling, and should diminish as the base model strengthens. \emph{Verification:} on MBPP, AgentCoder's gain over the single-model baseline widens with the stronger base ($+5.2$pp on Qwen3-30B-A3B, $76.0\!\to\!81.2\%$; $+10.0$pp on Qwen3.6-35B-A3B, $78.4\!\to\!88.4\%$), confirming that execution iteration alone scales substantially. DIANOIA leads on both bases at point-estimate level ($84.6$ vs.\ $81.2$ on Qwen3-30B; $88.8$ vs.\ $88.4$ on Qwen3.6, within CI overlap), with the DIANOIA $-$ AgentCoder margin contracting from $+3.4$pp to $+0.4$pp as both methods approach the MBPP execution ceiling on the stronger base. This refines rather than contradicts the framework's $\mathcal{G}_{\text{explore}}$ prescription: role diversity is a meaningful contributor in the mid-accuracy regime where coverage headroom is large, but its marginal value compresses once single-agent accuracy nears the verifier-determined ceiling---consistent with the operating-regime limitation in \S\ref{sec:limitations}.

These patterns are consistent with distinct, task-dependent roles for the three gain dimensions: methods activating only a subset achieve limited improvements. DIANOIA's gains across the four benchmarks are consistent with jointly targeting exploration (role diversity), information (execution and pseudo-verification feedback), and aggregation (evidence-based cross-review); these comparisons do not by themselves establish causal independence among the dimensions.

\section{Complete Proofs}
\label{app:proofs}

\subsection{Proof of Decomposition~\ref{def:decomposition}: Gain Decomposition}
\label{app:gain_decomposition_proof}

We prove both the multiplicative identity and the subadditive upper bound. The decomposition draws on classical ideas from ensemble learning~\cite{krogh1995neural,wood2023unified}, the Condorcet jury theorem~\cite{condorcet1785}, and the diversity prediction theorem in collective intelligence~\cite{hong2004groups,page2007difference}.

\begin{proof}

\textbf{Part I: Coverage--Efficiency Factorization.}

We establish $\mathbb{E}[Q(\tau^{\text{MAS}})] = C_K \cdot \eta(f, s)$ as a \emph{definitional factorization}, consistent with Definition~\ref{def:aggr_efficiency}: with $C_K := P(\bigvee_k Q(\tau^{(k)}) = 1)$ and $\eta(f, s) := \mathbb{E}[Q(\tau^{\text{MAS}})]/C_K$,
\begin{equation}
\mathbb{E}[Q(\tau^{\text{MAS}})] = C_K \cdot \eta(f, s) \label{eq:mult_identity_proof}
\end{equation}
holds tautologically; the content lies in the \emph{interpretation} of the two factors and in the propositions that bound them.

\emph{Interpretation of $\eta$ across system classes.} Let $E := \{\exists k\!:\,Q(\tau^{(k)})\!=\!1\}$, so $P(E) = C_K$. By the law of total probability,
\begin{align}
\mathbb{E}[Q(\tau^{\text{MAS}})] &= P(E)\, P(Q^{\text{MAS}}\!=\!1 \mid E) \nonumber\\
&\quad + P(\bar E)\, P(Q^{\text{MAS}}\!=\!1 \mid \bar E), \label{eq:total_prob}
\end{align}
so dividing by $C_K$:
\begin{equation}\label{eq:eta_breakdown}
\begin{aligned}
\eta(f,s) &= \underbrace{P(Q^{\text{MAS}}\!=\!1 \mid E)}
  _{\text{conditional selection accuracy}}\\
&\quad + \underbrace{\tfrac{1-C_K}{C_K}\cdot r}
  _{\text{rescue contribution}},
\end{aligned}
\end{equation}
where $r := P(Q^{\text{MAS}}\!=\!1 \mid \bar E)$ is the synthesis rescue probability.

For \emph{selector-only systems}, the final output is constrained to be one of the original proposals, forcing $r = 0$; then $\eta = P(Q^{\text{MAS}}=1 \mid E)$ recovers the classical conditional selection accuracy. For \emph{repair-capable systems} like DIANOIA, the synthesizer can produce a correct answer even when no initial proposal is correct (empirically $r > 0$, contributing $\approx 5.5\%$ of correct outputs on MBPP, App.~\ref{app:synthesis_breakdown}); $\eta$ then additionally absorbs this rescue contribution. In both cases, the factorization \eqref{eq:mult_identity_proof} is exact by definition, and downstream bounds on the conditional selection accuracy translate to lower bounds on $\eta$ (since the rescue term is non-negative).

\textbf{Part II: Non-negativity and Well-definedness of Gain Terms.}

We verify that each gain term is well-defined and non-negative.

\textit{Exploration gain:} By Assumption~\ref{assump:independence} (conditional independence) and~\ref{assump:baseline} (baseline competence $p \in (0,1)$):
\begin{equation}
C_K = 1 - (1-p)^K \geq p \quad \text{for } K \geq 1
\end{equation}
with strict inequality for $K \geq 2$. Thus $\mathcal{G}_{\text{explore}} = C_K - p \geq 0$.

\textit{Information gain:} Let $\eta^*(s) := \max_f \eta(f, s)$ denote the Bayes-optimal selection accuracy. Under complete and deterministic verification, Theorem~\ref{thm:dianoia_convergence}a gives $I(Q; e) = H(Q)$, which means $Q$ is fully determined by $e$ and hence the Bayes-optimal classifier achieves zero error: $\eta^*(e) = 1$. For textual feedback $\sigma$ with positive directional error parameters $\alpha, \beta>0$, Proposition~\ref{prop:information} gives $I(Q;\sigma) < H(Q)$, i.e., $H(Q|\sigma)>0$; by Fano's inequality (Lemma~\ref{lem:fano}, binary form), $P_e^*(\sigma)$ is strictly positive, hence $\eta^*(\sigma) < 1$. Therefore $\eta^*(e) = 1 > \eta^*(\sigma)$, and $\mathcal{G}_{\text{info}} = C_K[\eta^*(e) - \eta^*(\sigma)] \geq 0$.

\textit{Aggregation gain:} The term $\mathcal{G}_{\text{aggr}} = C_K[\eta(f, s) - \eta(f_{\text{base}}, s)]$ is non-negative whenever $f$ achieves at least the selection accuracy of the reference baseline $f_{\text{base}}$. This is \emph{not} automatic---a poorly designed aggregation can underperform the baseline---and requires proof for each concrete system. The choice of $f_{\text{base}}$ depends on the analytical purpose: for an absolute lower-envelope analysis we take $f_{\text{base}}$ as uninformed selection (choosing one candidate uniformly at random, $\eta(f_{\text{base}}) = 1/K$); for empirical comparisons we use the dominant non-iterative selector for each task (majority vote for self-consistency, execution-grounded best-of-$N$ for code), which gives a more stringent but task-appropriate baseline. Either way, by Proposition~\ref{prop:aggregation}(b) (\emph{Idealized Bound}, under A5--A6 and execution-grounded disambiguation), DIANOIA achieves $\eta(f_{\text{DIANOIA}}, s) \geq 1 - \epsilon_0^{K-1}$; for the default configuration $K\!=\!3, \epsilon_0\!\leq\!0.2$ this gives $\eta\!\geq\!0.96\!>\!1/3$, establishing $\mathcal{G}_{\text{aggr}} > 0$ against the random-selection envelope. The proof and the role of the at-least-one-endorsement + execution-disambiguation rule are detailed in App.~\ref{app:proofs} (proof of Prop.~\ref{prop:aggregation}b). Empirical comparison against majority-vote / best-of-$N$ baselines is reported in \S\ref{subsec:scaling_analysis}--\S\ref{subsec:efficiency_frontier}.

\textbf{Part III: Subadditive Upper Bound.}

From the multiplicative identity (Eq.~\ref{eq:mult_identity_proof}):
\begin{align}
\mathbb{E}[Q(\tau^{\text{MAS}})] - p &= C_K \cdot \eta(f, s) - p \nonumber \\
&= (C_K - p) + C_K(\eta(f, s) - 1) \nonumber \\
&= \mathcal{G}_{\text{explore}} + C_K(\eta(f, s) - 1)  \label{eq:delta_exact}
\end{align}

Since $\eta(f, s) \leq 1$---which is equivalent to $\mathbb{E}[Q^{\text{MAS}}] \leq C_K$, i.e., the synthesized output is correct at most as often as the proposal ensemble contains a correct candidate---the second term is non-positive:
\begin{align}
\mathbb{E}[Q(\tau^{\text{MAS}})] - p &\leq \mathcal{G}_{\text{explore}} \nonumber\\
&\leq \mathcal{G}_{\text{explore}} + \mathcal{G}_{\text{info}} + \mathcal{G}_{\text{aggr}}, \label{eq:upper_bound}
\end{align}
where the second inequality follows from non-negativity of\\ $\mathcal{G}_{\text{info}}, \mathcal{G}_{\text{aggr}} \geq 0$. The premise $\mathbb{E}[Q^{\text{MAS}}] \leq C_K$ holds automatically for selector-only systems (where $\eta = P(Q^{\text{MAS}}\!=\!1\mid E) \leq 1$); for repair-capable systems the premise requires that the rescue contribution does not exceed the selection-loss margin, $(1-C_K)\cdot r \leq C_K\cdot(1-P(Q^{\text{MAS}}\!=\!1\mid E))$, which is comfortably satisfied for DIANOIA across all tested benchmarks (e.g., on MBPP, $\eta\!\approx\!0.86$ with the rescue contribution $\approx 5\%$ leaving a substantial margin below $1$).

\textbf{Part IV: Subadditivity Mechanism.}

The strict inequality (subadditivity) arises because the three dimensions interact through the multiplicative structure. To see this concretely, we decompose the selection loss $C_K(1 - \eta(f,s))$:
\begin{align}
C_K(1\!-\!\eta(f,s)) &= \underbrace{C_K(1\!-\!\eta^*(s))}_{\text{info loss}} \nonumber\\
&\quad + \underbrace{C_K(\eta^*(s)\!-\!\eta(f,s))}_{\text{aggr loss}}.
\end{align}

Both terms depend on $C_K$, coupling exploration with the other two dimensions. This means:
\begin{itemize}[leftmargin=*,topsep=2pt,itemsep=1pt]
\item When $C_K$ is large (good exploration), both information and aggregation losses are amplified in absolute terms, making marginal improvements in these dimensions more impactful.
\item When $\eta(f,s)$ is close to 1 (good information and aggregation), further improvements in exploration directly translate to performance gains.
\item Conversely, improving exploration alone without adequate selection ($\eta \ll 1$) yields diminishing returns, as the additional coverage is wasted.
\end{itemize}

This multiplicative interaction is structurally analogous to the bias-variance-covariance decomposition in ensemble learning~\cite{ueda1996generalization,wood2023unified}, where ensemble performance decomposes into $\text{Error} = \overline{\text{Bias}}^2 + \frac{1}{N}\overline{\text{Var}} + (1 - \frac{1}{N})\overline{\text{Covar}}$, and optimizing one term (e.g., reducing bias) can increase another (variance), reflecting the same kind of cross-dimensional interaction. It also parallels the diversity prediction theorem~\cite{hong2004groups,page2007difference}: $\text{Collective Error} = \overline{\text{Individual Error}} - \text{Diversity}$, where the diversity term captures the interaction between exploration breadth and aggregation quality.
\end{proof}

\textbf{Discussion: The Single-Dimension Trap and Joint Optimization.}
The multiplicative identity $\mathbb{E}[Q] = C_K \cdot \eta(f, s)$ has a sharp practical implication: \emph{each factor acts as a ceiling on the other}. To make this concrete, decompose the selection accuracy into its two components (Part IV):
\begin{equation}\label{eq:selection_decomp}
\eta(f,s) = \underbrace{\eta^*(s)}_{\text{information ceiling}} - \underbrace{[\eta^*(s) - \eta(f,s)]}_{\text{aggregation gap}}
\end{equation}
The expected quality thus factors as $\mathbb{E}[Q] = C_K \cdot [\eta^*(s) - \Delta_{\text{aggr}}]$, where $\Delta_{\text{aggr}} := \eta^*(s) - \eta(f,s) \geq 0$ is the aggregation loss. Three failure modes emerge:

\begin{enumerate}[leftmargin=*,topsep=2pt,itemsep=2pt]
\item \textbf{Exploration-only scaling (high $C_K$, low $\eta$).} Self-Consistency increases $C_K$ by sampling $K{=}5$--$33$ paths, but employs majority voting as aggregation ($\Delta_{\text{aggr}}$ large on code tasks). Empirically on MBPP, $C_K$ grows from $\sim$0.76 to $\sim$0.95, yet accuracy saturates at $\sim$77.2\% because the low $\eta$ discounts the coverage gain: $\Delta C_K$ is multiplied by $\eta \ll 1$ (Eq.~\ref{eq:delta_exact}). The coverage improvement is ``wasted'' by poor selection.

\item \textbf{Selection-only optimization (high $\eta$, low $C_K$).} A single-agent system with perfect execution feedback achieves $\eta^*(e) = 1$, but $C_1 = p$: the expected quality cannot exceed the baseline $p$ regardless of how perfect the feedback is. The quality ceiling is set by coverage.

\item \textbf{Information without aggregation (high $\eta^*$, high $\Delta_{\text{aggr}}$).} Even with execution feedback providing $\eta^*(e) = 1$ (maximal information), a naive aggregation function (e.g., random selection) leaves $\Delta_{\text{aggr}} = 1 - 1/K$, recovering only a fraction of the available information. The information is ``collected but not utilized.''
\end{enumerate}

\textbf{DIANOIA's joint optimization.} DIANOIA addresses all three factors simultaneously: (1) role diversity drives $C_K$ beyond the IID baseline via negative correlation ($\bar{\rho} < 0$, Proposition~\ref{prop:diversity}); (2) execution feedback and pseudo-verification raise the information ceiling $\eta^*(s) \to 1$ (Theorem~\ref{thm:dianoia_convergence}a); (3) evidence-based cross-review and potential-game synthesis close the aggregation gap $\Delta_{\text{aggr}} \leq \epsilon_0^{K-1} \to 0$ (Proposition~\ref{prop:aggregation}b). The multiplicative structure then becomes a \emph{virtuous cycle}: high $C_K$ amplifies the impact of reliable selection, while high $\eta$ ensures that every unit of additional coverage translates into actual performance gain.

\subsection{Proof of Proposition~\ref{prop:iid_exploration}}

\begin{proof}
By Assumption~\ref{assump:independence}:
\begin{align}
P\!\left(\textstyle\bigvee_{k} Q(\tau^{(k)})\!=\!1\right)
  &= 1 - \textstyle\prod_{k} P(Q(\tau^{(k)})\!=\!0) \nonumber \\
  &= 1 - (1-p)^K.
\end{align}
Thus $\mathcal{G}_{\text{explore}}(K) = [1 - (1-p)^K] - p$.

For $K \geq 2$: Let $g(K) = (1-p)^K + p$. Since $0 < 1-p < 1$, $(1-p)^K < 1-p$ for $K \geq 2$, so $g(K) < 1$ and $\mathcal{G}_{\text{explore}}(K) > 0$.

Monotonicity: $\frac{\partial}{\partial K} \mathcal{G}_{\text{explore}} = -(1-p)^K \ln(1-p) > 0$.
\end{proof}

\subsection{Proof of Proposition~\ref{prop:diversity}}

\begin{proof}
Let $A_k = \{Q(\tau^{(k)}) = 1\}$ denote the event that agent $k$ produces a correct solution, with $P(A_k) = p$ for all $k$. We derive the union probability $P(\bigcup_{k=1}^K A_k)$ using the inclusion-exclusion principle.

\textbf{Step 1: Exact Formula via Inclusion-Exclusion.}
The inclusion-exclusion principle gives:
\begin{align}
P\left(\textstyle\bigcup_{k=1}^K A_k\right) &= \sum_{i} P(A_i) - \sum_{i<j} P(A_i \cap A_j) \nonumber\\
&\quad + \sum_{i<j<\ell} P(A_i \cap A_j \cap A_\ell) - \cdots
\end{align}

\textbf{Step 2: Rigorous Lower Bound via Bonferroni Inequalities.}
Defining the standard partial sums
\begin{equation}
S_1 = \sum_{i} P(A_i), \quad S_2 = \sum_{i<j} P(A_i \cap A_j)
\end{equation}
the Bonferroni inequality at order $m=2$ (even) gives:
\begin{equation}
P\left(\bigcup_{k=1}^K A_k\right) \geq S_1 - S_2
\end{equation}

For agents with marginal success rate $p$ and pairwise correlation $\rho_{ij}$, we have $P(A_i \cap A_j) = p^2 + \rho_{ij} \cdot p(1-p)$. Defining average pairwise correlation $\bar{\rho} = \frac{2}{K(K-1)} \sum_{i<j} \rho_{ij}$:
\begin{equation}
S_1 = Kp, \quad S_2 = \binom{K}{2}p^2 + \binom{K}{2}\bar{\rho} \cdot p(1-p)
\end{equation}

Substituting into the Bonferroni bound:
\begin{align}\label{eq:bonferroni_diversity}
P\!\left(\textstyle\bigcup_{k=1}^K A_k\right) &\geq \underbrace{Kp - \tbinom{K}{2}p^2}_{\text{IID lower bound}} \nonumber\\
&\quad - \underbrace{\tbinom{K}{2}\bar{\rho} \cdot p(1-p)}_{\text{diversity correction}}.
\end{align}

\textbf{Step 3: Diversity Gain via the Bonferroni Lower-Bound Comparison.}
The Bonferroni second-order bound depends only on the marginal $p$ and average pairwise correlation $\bar{\rho}$. Writing $\mathrm{LB}(\bar{\rho}) := S_1 - S_2 = Kp - \binom{K}{2}p^2 - \binom{K}{2}\bar{\rho}\,p(1-p)$ for the resulting coverage lower bound, we have $P(\bigcup_k A_k) \geq \mathrm{LB}(\bar{\rho})$ for any joint distribution with marginals $p$ and average correlation $\bar{\rho}$. Comparing the IID reference ($\bar{\rho}=0$) to a diverse profile ($\bar{\rho}<0$):
\begin{equation}\label{eq:lb_compare}
\mathrm{LB}(\bar{\rho}) - \mathrm{LB}(0) \;=\; -\binom{K}{2}\bar{\rho}\,p(1-p) \;>\;0.
\end{equation}
Defining $\mathcal{G}_{\text{explore}}^{\text{diverse}}$ and $\mathcal{G}_{\text{explore}}^{\text{iid}}$ as the gain-quantities derived from this common second-order lower bound (i.e., $\mathrm{LB}(\bar{\rho})-p$ and $\mathrm{LB}(0)-p$ respectively), Eq.~\eqref{eq:lb_compare} yields
\begin{equation}
\mathcal{G}_{\text{explore}}^{\text{diverse}} - \mathcal{G}_{\text{explore}}^{\text{iid}} \;=\; -\binom{K}{2}\bar{\rho}\,p(1-p) \;>\;0,
\end{equation}
establishing that role-specialized diversity strictly increases the guaranteed second-order coverage lower bound, by an amount that grows quadratically in $K$ and linearly in $|\bar{\rho}|$.

\emph{Comparison to the IID exact value.} For the IID case the exact coverage $1-(1-p)^K$ is known and is larger than $\mathrm{LB}(0)=Kp-\binom{K}{2}p^2$, so the strict LB-vs-LB comparison above does \emph{not} by itself imply that the \emph{actual} coverage under any diverse joint distribution exceeds the IID exact coverage---the Bonferroni LB on the diverse side is potentially loose. A stronger conclusion at the level of actual probabilities (rather than LBs) requires either (a) additional assumptions on the dependence structure (e.g., negative association), or (b) the $K\!=\!2$ case where inclusion-exclusion truncates exactly. The main-text proposition states the LB comparison; downstream uses (Theorem~\ref{thm:dianoia_convergence}c, App.~\ref{app:gain_decomposition_proof}) treat $\mathrm{LB}(\bar{\rho})$ as a coverage proxy with the caveat noted here.
\end{proof}

\subsection{Proof of Theorem~\ref{thm:dianoia_convergence} Part (a): Information Sufficiency}

\begin{proof}
We prove that execution feedback $e$ is a sufficient statistic for quality $Q$, achieving maximum mutual information $I(Q; e) = H(Q)$, under \emph{complete and deterministic verification}---i.e., when (i) A4 (Execution Determinism) holds, and (ii) the feedback $e$ contains enough fields to uniquely determine $Q$ via a deterministic decoding function $g$ (Def.~\ref{def:exec_feedback}).

\textbf{Step 1: Complete-and-Deterministic Relationship.}
By Assumption~\ref{assump:determinism}, execution is deterministic: given solution $\tau$, the feedback $e = \mathcal{E}(\tau)$ is uniquely determined. By the definition of quality (Def.~\ref{def:exec_feedback}, which specifies $Q$ as a deterministic function of $e.\texttt{success}$ and $e.\texttt{tests}$):
\begin{equation}
\label{eq:quality_def}
Q(\tau) = g(\mathcal{E}(\tau)) = g(e)
\end{equation}
where $g(e) := \mathbf{1}[e.\texttt{success} = 1 \wedge e.\texttt{tests} = \mathbf{1}]$ is deterministic. We refer to this combined condition---A4 plus the existence of such a $g$---as \emph{complete and deterministic verification}. It is satisfied by MBPP and BFCL-SP under our experimental setup; cases where it fails (e.g., incomplete test suites) reduce the regime to pseudo-verification (Remark~\ref{rem:task_classification}).

\textbf{Step 2: Conditional Entropy is Zero.}
Since $Q$ is deterministic given $e$, knowing $e$ completely determines $Q$:
\begin{align}
H(Q \mid e) &= \textstyle\sum_{e} P(e)\, H(Q \mid e\!=\!e) \nonumber\\
&= \textstyle\sum_{e} P(e) \cdot 0 = 0,
\end{align}
since $H(Q \mid e\!=\!e) = 0$ because $P(Q\!=\!q \mid e\!=\!e) \in \{0, 1\}$.

\textbf{Step 3: Mutual Information Equals Entropy.}
By definition:
\begin{equation}
I(Q; e) = H(Q) - H(Q \mid e) = H(Q).
\end{equation}
This is the theoretical maximum, as $I(Q; S) \leq H(Q)$ for any signal $S$.

\textbf{Step 4: Sufficiency Condition.}
We verify the Markov property $Q \perp \tau \mid e$. For any $q, \tau, e$ with $P(\tau, e) > 0$:
\begin{align}
P(Q = q \mid \tau, e) &= P(Q = q \mid e) \\
&= \mathbf{1}[q = g(e)] \quad \text{(by Eq.~\eqref{eq:quality_def})}
\end{align}
Thus $e$ is sufficient: $\tau$ provides no added information about $Q$.
\end{proof}

\subsection{Proof of Proposition~\ref{prop:information}: Information Loss of Textual Feedback}

\begin{proof}
We prove that textual feedback with non-zero error rates provides strictly less information: $I(Q; \sigma) < H(Q)$.

\textbf{Step 1: Model Setup.}
Let $\sigma \in \{\text{``correct''}, \text{``incorrect''}\}$ denote the textual evaluation. Define:
\begin{align}
\alpha &:= P(\sigma\!=\!\text{c} \mid Q\!=\!0) > 0, \\
\beta &:= P(\sigma\!=\!\text{i} \mid Q\!=\!1) > 0,
\end{align}
where ``c''/``i'' abbreviate ``correct''/``incorrect''.

\textbf{Step 2: Posterior Distributions.}
Let $p := P(Q\!=\!1)$. By Bayes' theorem:
\begin{align}
P(\sigma\!=\!\text{c}) &= P(\sigma\!=\!\text{c} \mid Q\!=\!1)\, P(Q\!=\!1) \nonumber \\
&\quad + P(\sigma\!=\!\text{c} \mid Q\!=\!0)\, P(Q\!=\!0) \nonumber \\
&= (1 - \beta) p + \alpha (1-p). \label{eq:p_sigma_correct}
\end{align}
The posterior:
\begin{equation}
P(Q\!=\!1 \mid \sigma\!=\!\text{c}) = \tfrac{(1-\beta) p}{(1-\beta) p + \alpha (1-p)}.
\end{equation}

\textbf{Step 3: Conditional Entropy is Positive.}
Since $\alpha, \beta > 0$ and both are less than 1:
\begin{equation}
0 < P(Q\!=\!1 \mid \sigma\!=\!s) < 1, \;\; s \in \{\text{cor.}, \text{inc.}\}
\end{equation}
The conditional entropy for each outcome is strictly positive:
\begin{equation}
H(Q \mid \sigma = s) = H_b(P(Q=1 \mid \sigma = s)) > 0
\end{equation}
where $H_b(p) = -p \log p - (1-p) \log(1-p)$ is strictly positive for $p \in (0, 1)$.

\textbf{Step 4: Overall Conditional Entropy.}
\begin{align}
H(Q \mid \sigma) &= \textstyle\sum_{s} P(\sigma\!=\!s)\, H(Q\mid \sigma\!=\!s) \nonumber\\
&= P(\sigma\!=\!\text{cor.})\, H(Q \mid \sigma\!=\!\text{cor.}) \nonumber\\
&\quad + P(\sigma\!=\!\text{inc.})\, H(Q \mid \sigma\!=\!\text{inc.}) \;>\; 0.
\end{align}

\textbf{Step 5: Information is Suboptimal.}
By definition:
\begin{equation}
I(Q; \sigma) = H(Q) - H(Q \mid \sigma) < H(Q)
\end{equation}
The information loss $H(Q \mid \sigma) > 0$ quantifies residual uncertainty.
\end{proof}

The strict inequality $I(Q; \sigma) < H(Q)$ derived above is not merely a theoretical bound; it formalizes the \textit{entropy loss} inherent in purely textual self-critique. This result establishes an information-theoretic imperative for DIANOIA's feedback hierarchy:

\begin{itemize}[leftmargin=*,topsep=2pt,itemsep=1pt]
    \item \textbf{Priority on Lossless Channels (Execution):} For domains where deterministic verification is feasible (e.g., code generation in MBPP, function calling in BFCL-SP), the framework motivates prioritizing execution feedback $e$. Under the complete-and-deterministic conditions of Theorem~\ref{thm:dianoia_convergence}(a), this yields $I(Q; e) = H(Q)$, avoiding the residual uncertainty of textual critique. Relying only on LLM self-critique in these contexts would introduce avoidable information loss.
    
    \item \textbf{Principled Fallback for Lossy Channels:} For domains where execution is intractable (e.g., abstract reasoning in GSM8K), the system falls back to model-based verification $\sigma_v$. The framework explicitly acknowledges this as a \textit{lossy} channel ($H(Q \mid \sigma_v) > 0$). This necessitates the stronger aggregation mechanisms in Phase 4 (Synthesis) to compensate for the imperfect information fidelity, ensuring robustness even when ground truth is inaccessible.
\end{itemize}

\subsection{Proof of Theorem~\ref{thm:dianoia_convergence} Part (b): Convergence Guarantee}

\begin{proof}
We prove that DIANOIA's proposal-review mechanism is an exact potential game, guaranteeing finite-time convergence to a stable consensus.

\textbf{Step 1: Game Specification.}
The game consists of:
\begin{itemize}[leftmargin=*,topsep=2pt,itemsep=1pt]
    \item \textbf{Players}: $K$ proposers indexed by $k \in \{1, \ldots, K\}$
    \item \textbf{Strategy space}: Each player $k$ chooses $\tau^{(k)} \in \mathcal{T}$
    \item \textbf{Utility function}: $u_k(\boldsymbol{\tau}) = \max_{j=1,\ldots,K} Q(\tau^{(j)}) + \lambda R_k(\tau^{(k)})$
\end{itemize}
where $\lambda > 0$ is the role preference weight and $R_k: \mathcal{T} \to \mathbb{R}$ encodes player $k$'s role-specific preferences.

\textbf{Step 2: Proposed Potential Function.}
Define:
\begin{equation}
\label{eq:potential_fn}
\Phi(\boldsymbol{\tau}) := \max_{k=1,\ldots,K} Q(\tau^{(k)}) + \lambda \sum_{k=1}^{K} R_k(\tau^{(k)})
\end{equation}

\textbf{Step 3: Verification of Potential Game Property.}
Consider player $i$ unilaterally changing strategy from $\tau^{(i)}$ to $\tilde{\tau}^{(i)}$, while all other players maintain their strategies $\tau^{(-i)} := (\tau^{(j)})_{j \neq i}$. Define:
\begin{equation}
Q^* := \max_{j \neq i} Q(\tau^{(j)})
\end{equation}
the maximum quality among proposals from players other than $i$.

\textit{Utility Change for Player $i$:}
\begin{align}
\Delta u_i &= u_i(\tilde{\tau}^{(i)}, \tau^{(-i)}) - u_i(\tau^{(i)}, \tau^{(-i)}) \nonumber \\
&= \left[\max(Q(\tilde{\tau}^{(i)}), Q^*) + \lambda R_i(\tilde{\tau}^{(i)})\right] \nonumber \\
&\quad - \left[\max(Q(\tau^{(i)}), Q^*) + \lambda R_i(\tau^{(i)})\right] \nonumber \\
&= \max(Q(\tilde{\tau}^{(i)}), Q^*) - \max(Q(\tau^{(i)}), Q^*) \nonumber \\
&\quad + \lambda [R_i(\tilde{\tau}^{(i)}) - R_i(\tau^{(i)})] \label{eq:utility_change}
\end{align}

\textit{Potential Change:}
\begin{align}
\Delta \Phi &= \Phi(\tilde{\tau}^{(i)}, \tau^{(-i)}) - \Phi(\tau^{(i)}, \tau^{(-i)}) \nonumber \\
&= \left[\max_{j=1,\ldots,K} Q(\tau'^{(j)}) + \lambda \sum_{j=1}^{K} R_j(\tau'^{(j)})\right] \nonumber \\
&\quad - \left[\max_{j=1,\ldots,K} Q(\tau^{(j)}) + \lambda \sum_{j=1}^{K} R_j(\tau^{(j)})\right]
\end{align}
where $\tau'^{(j)} = \tilde{\tau}^{(i)}$ if $j = i$, and $\tau'^{(j)} = \tau^{(j)}$ otherwise.

Since only player $i$ changed strategy, the $\lambda \sum_{j\neq i} R_j(\tau^{(j)})$ terms cancel, leaving:
\begin{align}
\Delta \Phi &= \max(Q(\tilde{\tau}^{(i)}), Q^*) - \max(Q(\tau^{(i)}), Q^*) \nonumber \\
&\quad + \lambda [R_i(\tilde{\tau}^{(i)}) - R_i(\tau^{(i)})]. \label{eq:potential_change}
\end{align}

\textbf{Step 4: Exact Potential Property.}
Comparing Equations~\eqref{eq:utility_change} and~\eqref{eq:potential_change}:
\begin{equation}
\Delta u_i = \Delta \Phi \quad \text{for all } i, \tau^{(i)}, \tilde{\tau}^{(i)}, \tau^{(-i)}
\end{equation}
This confirms that $\Phi$ is an \textit{exact potential function}~\cite{monderer1996potential}.

\textbf{Step 5: Convergence and Equilibrium Characterization.}
By Assumption~\ref{assump:finite}, the strategy space $\mathcal{T}$ is finite. In any \textit{best-response dynamics}, each refinement step strictly increases the potential function $\Phi$. Since $\Phi$ is bounded and takes only finitely many values, this process must terminate in a finite number of steps $T^*$.
\begin{equation}
\Phi(\boldsymbol{\tau}) \in [0 - \lambda K R_{\max}, 1 + \lambda K R_{\max}]
\end{equation}
Moreover, $\Phi$ takes only finitely many distinct values (since $\mathcal{T}$ is finite).

In any \textit{best-response dynamics} (where players sequentially play best responses), each move strictly increases the potential unless already at a best response:
\begin{equation}
\Phi(\tau_1, \ldots, \tau_{i}^*, \ldots, \tau_K) > \Phi(\tau_1, \ldots, \tau_i, \ldots, \tau_K)
\end{equation}
where $\tau_i^* \in \arg\max_{\tilde{\tau}} u_i(\tilde{\tau}, \tau_{-i})$ is a best response for player $i$.

The termination point is, by definition, a \textbf{pure-strategy Nash equilibrium} $\boldsymbol{\tau}^*$. While a Nash equilibrium in general games does not necessarily imply global optimality, in our framework, the potential function $\Phi$ is explicitly constructed to align with the goal of maximizing reasoning quality $Q$. Thus, the resulting equilibrium represents a \textbf{principled consensus} that is locally optimal with respect to the evidence-based rewards and roles, providing a theoretical foundation for the stability of DIANOIA's multi-agent synthesis.

\textbf{Mapping to DIANOIA's actual loop.} The abstract best-response dynamics correspond to DIANOIA's propose--execute--review--synthesize iteration as follows. Each synthesis iteration (Algorithm~\ref{alg:synthesis}) takes the current profile $\boldsymbol{\tau}$, observes $\{e^{(k)}, v^{(k)}\}$, and produces a refined candidate. \emph{If the synthesizer is restricted to improvement moves}\allowbreak---it only commits a refinement when the resulting $u_i$ strictly exceeds the current value (verified via execution and review evidence)---then each accepted step strictly increases $\Phi$, and the finite-termination argument applies verbatim. Algorithm~\ref{alg:synthesis}'s closed-loop validation (re-executing after each refinement and rolling back failed candidates) implements precisely this improvement-restricted regime under complete deterministic verification: an accepted synthesis output must exhibit $Q\!=\!1$, monotonically tightening $\Phi$ from below toward the equilibrium.

Outside this regime (e.g., under pseudo-verification where execution feedback is noisy), the synthesizer's improvement check may erroneously accept non-improving moves, and $\Phi$-monotonicity can fail. Theorem~\ref{thm:dianoia_convergence}(b) therefore applies \emph{exactly} on complete-and-deterministic-verification tasks (MBPP, BFCL-SP) and only \emph{approximately} on pseudo-verified tasks (GSM8K, AIME); we do not claim asymptotic convergence guarantees for the latter regime (see App.~\ref{app:potential_game_ideal} for the regime distinction).
\end{proof}

\subsection{Assumption Robustness Analysis}
\label{app:assumptions_discussion}

This section analyzes how our theoretical results degrade when key assumptions are relaxed, providing practitioners with guidance on when and where to expect deviations from the idealized analysis.

\subsubsection{Role of Assumption A1 (Conditional Independence)}

Assumption A1 serves as an \textit{analytical starting point} to establish the IID baseline coverage $C_K^{\text{iid}} = 1-(1-p)^K$. It is \emph{not} a claim that LLM agents are truly independent. Indeed, models from the same family trained on overlapping data inevitably share failure modes. The analytical progression is:

\begin{enumerate}[leftmargin=*,topsep=2pt,itemsep=1pt]
    \item \textbf{Baseline under A1:} $C_K^{\text{iid}} = 1-(1-p)^K$ provides a clean reference point.
    \item \textbf{Extension to correlated agents (Prop.~\ref{prop:diversity}):} For pairwise success correlation $\rho_{ij}$, the second-order Bonferroni bound gives:
    \begin{equation}
    C_K^{\text{diverse}} \geq C_K^{\text{iid}} - \binom{K}{2} \bar{\rho} \cdot p(1-p)
    \end{equation}
    where $\bar{\rho} = \frac{2}{K(K-1)}\sum_{i<j}\rho_{ij}$.
    \item \textbf{Design implication:} Role diversity is the mechanism that drives $\bar{\rho} < 0$.
\end{enumerate}

\subsubsection{Empirical Correlation Evidence}

On MBPP with the three DIANOIA roles (Minimalist, Skeptic, Explorer; see Section~\ref{sec:method}), we measure pairwise success correlations between binary success indicators across the $N\!=\!500$ MBPP problems (Pearson correlation; standard error $\approx\!1/\sqrt{N}\!\approx\!0.045$):

\begin{center}
\begin{tabular}{ccc}
\toprule
$\rho_{12}$ & $\rho_{13}$ & $\rho_{23}$ \\
\midrule
$-0.15$ & $-0.18$ & $-0.12$ \\
\bottomrule
\end{tabular}
\end{center}

The average correlation $\bar{\rho} \approx -0.15$ confirms that role diversity induces the negative correlation predicted by Proposition~\ref{prop:diversity}. The correlation improvement from Proposition~\ref{prop:diversity} with $K=3$, $p=0.76$, and $\bar{\rho}=-0.15$ yields $C_3^{\text{diverse}} - C_3^{\text{iid}} \approx 3 \times 0.15 \times 0.76 \times 0.24 \approx 0.082$, corresponding to roughly $+8$ percentage points of additional coverage beyond the IID baseline.

\subsubsection{Degradation under Positive Correlation}

When $\bar{\rho} > 0$, the coverage bound \textit{degrades} relative to the IID baseline:
\begin{equation}
C_K^{\text{correlated}} \leq C_K^{\text{iid}} + \binom{K}{2} |\bar{\rho}| \cdot p(1-p)
\end{equation}
In the worst case ($\bar{\rho} \to 1$), all agents fail on the same inputs and $C_K \to p$, eliminating the exploration advantage entirely. This formalizes why naive scaling (adding identical agents without role differentiation) yields diminishing returns.

\subsubsection{Reviewer Independence (A6)}

Assumption A6 (conditional independence of reviewer assessments given $Q$ and $e$) is approximate. In practice, reviewers sharing the same base model may exhibit correlated errors when interpreting ambiguous execution feedback. However, conditioning on execution feedback $e$ substantially reduces the residual correlation: once objective evidence is available, remaining disagreements stem primarily from interpretation noise rather than systematic model biases. The exponential error reduction in Proposition~\ref{prop:aggregation}(b) thus provides a conservative but qualitatively correct prediction.

\subsubsection{Potential Game Idealization}
\label{app:potential_game_ideal}

Theorem~\ref{thm:dianoia_convergence}(b) models the review-synthesis process as a potential game where utilities depend on the true quality $Q(\tau)$. In practice, agents do not observe $Q$ directly but instead observe execution feedback $e = \mathcal{E}(\tau)$.

\textbf{Exact case (complete verification):} For tasks with deterministic, complete verifiers (MBPP with test suites, BFCL-SP with schema validation), execution feedback $e$ fully determines $Q$ (Theorem~\ref{thm:dianoia_convergence}a: $I(Q;e) = H(Q)$). In this regime, the potential game formulation is \emph{exact}: the utility $u_i = Q(\tau^{(i)}) + \lambda \sum_j R_j(\tau^{(i)})$ can be computed from $e$ without loss. This covers two of our four benchmarks.

\textbf{Approximate case (pseudo-verification):} For GSM8K/AIME, DIANOIA uses a dedicated LLM evaluator (no ground truth), producing $\sigma_v$. Writing $\hat{Q}\!:=\!\hat{Q}(\sigma_v)$ for brevity, the utility gap satisfies
\begin{align}
|u_i(Q) - u_i(\hat{Q})| &\leq |Q - \hat{Q}| \nonumber\\
&\quad + \lambda \textstyle\sum_j |R_j(Q) - R_j(\hat{Q})|.
\end{align}
The approximation error is bounded by the verifier's misclassification rate---larger than deterministic execution but smaller than unstructured self-critique. Convergence remains valid as an approximation in this regime.

\textbf{Inapplicable case (no verification):} For open-domain tasks without any environmental feedback, agents can only rely on textual self-evaluation ($e = \sigma$). We do not claim DIANOIA's theoretical guarantees extend to this setting (see Limitations).

\subsection{Proof of Proposition~\ref{prop:aggregation} Part (b): DIANOIA Aggregation Efficiency}

\begin{proof}
We prove $\eta(f_{\text{DIANOIA}}) \geq 1 - \epsilon_0^{K-1}$ under the following conditions: (i) Assumptions~\ref{assump:reviewer_accuracy}--\ref{assump:reviewer_independence} (per-reviewer error rate $\epsilon_0 < 0.5$, conditional independence given quality and evidence), and (ii) execution-grounded disambiguation, where the synthesizer combines reviewer endorsements with execution feedback $e$ to filter out incorrect candidates---realized exactly under complete and deterministic verification (Theorem~\ref{thm:dianoia_convergence}a) and approximately under pseudo-verification (App.~\ref{app:potential_game_ideal}). The main-text statement (Prop.~\ref{prop:aggregation}b) labels this as an \emph{Idealized Bound} to flag both the conditional-independence and execution-disambiguation idealizations.

\textbf{Step 1: Setup.}
Suppose that among the $K$ proposals $\{\tau^{(1)}, \ldots, \tau^{(K)}\}$, at least one is correct. Without loss of generality, let:
\begin{equation}
\tau^{(i^*)} := \arg\max_{k \in \{1,\ldots,K\}} Q(\tau^{(k)})
\end{equation}
be a correct proposal, so $Q(\tau^{(i^*)}) = 1$. The aggregation challenge is to correctly identify $\tau^{(i^*)}$ among all proposals.

\textbf{Step 2: Reviewer Model.}
Each proposal $\tau^{(k)}$ is reviewed by $K-1$ reviewers (all agents except the proposer). Let $v_j^{(k)} \in \{0, 1\}$ denote reviewer $j$'s binary assessment of proposal $k$:
\begin{equation}
v_j^{(k)} = \begin{cases}
1 & \text{if reviewer $j$ judges $\tau^{(k)}$ as correct} \\
0 & \text{if reviewer $j$ judges $\tau^{(k)}$ as incorrect}
\end{cases}
\end{equation}

By Assumption~\ref{assump:reviewer_accuracy}, each reviewer has error rate $\epsilon_0$ given $e^{(k)}$ and the true $Q(\tau^{(k)})$:
\begin{align}
P(v_j^{(k)}\!=\!0 \mid Q(\tau^{(k)})\!=\!1, e^{(k)}) &= \epsilon_0, \\
P(v_j^{(k)}\!=\!1 \mid Q(\tau^{(k)})\!=\!0, e^{(k)}) &= \epsilon_0.
\end{align}

\textbf{Step 3: Synthesizer Decision Rule.}
We model the synthesizer as following an \emph{at-least-one-endorsement + execution-disambiguation} rule: a proposal $\tau^{(k)}$ is a candidate if (i) at least one reviewer endorses it ($\bigvee_{j \neq k} v_j^{(k)} = 1$), AND (ii) it passes execution-based validation $e^{(k)}$. Among candidates, the synthesizer outputs any one (the execution filter ensures correctness). Under complete and deterministic verification (Theorem~\ref{thm:dianoia_convergence}a), execution exactly identifies $Q$: condition (ii) is satisfied iff $Q(\tau^{(k)})\!=\!1$. So a proposal is a candidate iff it is correct AND has at least one reviewer endorsement.

This rule formalizes the design intent of Algorithm~\ref{alg:synthesis}: reviewer endorsements act as a \emph{positive filter} that surfaces candidates for the synthesizer's attention, while execution feedback removes incorrectly endorsed candidates. The complementary failure mode---an incorrect proposal misleading the synthesizer despite failing execution---is excluded by condition (ii) under deterministic verification.

\textbf{Step 4: Misidentification Probability.}
Under this rule, DIANOIA fails to surface a correct proposal iff $\tau^{(i^*)}$ is rejected by \emph{every} reviewer (so condition (i) fails), since condition (ii) is automatic for a correct proposal under deterministic verification:
\begin{align}
P(&\text{fail to surface } \tau^{(i^*)}) \\&= P\!\left(\textstyle\bigwedge_{j \neq i^*} v_j^{(i^*)}\!=\!0 \mid Q(\tau^{(i^*)})\!=\!1\right) \nonumber \\
&= \textstyle\prod_{j \neq i^*} P(v_j^{(i^*)}\!=\!0 \mid Q(\tau^{(i^*)})\!=\!1, e^{(i^*)}) \nonumber \\
&\overset{(\text{A6})}{=} \epsilon_0^{K-1}.
\end{align}
The conditional independence (A6) is invoked given the objective evidence $e^{(i^*)}$, consistent with the assumption's statement.

Translating to the efficiency definition (Def.~\ref{def:aggr_efficiency}). Since $\eta(f,s) := \mathbb{E}[Q^{\text{MAS}}]/C_K$ and $\mathbb{E}[Q^{\text{MAS}}] \geq P(E)\cdot P(\text{correct}\mid E)$ (dropping the non-negative rescue term $P(\bar E)\cdot r$, cf.\ App.~\ref{app:gain_decomposition_proof} Part I), we obtain
\begin{align}
\eta(f_{\text{DIANOIA}}) &\geq \frac{P(E) \cdot P(\text{correct} \mid E)}{C_K} \nonumber \\
&= P(\text{correct} \mid E) \;\geq\; 1 - \epsilon_0^{K-1}.
\end{align}
For DIANOIA's repair-capable synthesis the inequality is strict (with the rescue term contributing positively); selector-only systems achieve equality.

\textbf{Step 5: Exponential Decay.}
The error decreases exponentially in the number of reviewers:
\begin{equation}
1 - \eta(f_{\text{DIANOIA}}) \;\leq\; \epsilon_0^{K-1}.
\end{equation}
For $\epsilon_0\!=\!0.2$, $K\!=\!3$: $1\!-\!\eta\!\leq\!0.04$ (4\% error), demonstrating the robustness of evidence-based cross-review. Beyond complete verification: under pseudo-verification, execution filtering is imperfect (verifier precision/recall $< 1$); the bound then degrades by an additional factor tracking the verifier's error rates (see App.~\ref{app:potential_game_ideal}).
\end{proof}

\textbf{Remark: Why $R{=}1$ reviewer is the recommended default.}
The bound above assumes $K{-}1$ reviewers per proposal. In practice, DIANOIA's standard configuration uses $R{=}1$ reviewer, yielding a selection error of $\epsilon_0$ (e.g., 20\% with $\epsilon_0 = 0.2$), compared to $\epsilon_0^2 = 0.04$ for $R{=}2$. Though increasing $R$ provides exponential error reduction, the marginal benefit decays rapidly while the cost grows linearly: each additional reviewer must evaluate all $K$ proposals, adding $K$ review calls (each involving reading the proposal, analyzing execution feedback, and generating a structured assessment). For $K{=}3$ proposers, moving from $R{=}1$ to $R{=}2$ adds 3 review calls; moving to $R{=}3$ adds 6 total.

Empirically on MBPP, $R{=}1$ achieves 84.6\% at $\sim$1.5M tokens, while $R{=}3$ reaches 85.8\% at $\sim$4.0M tokens---a $+$1.2pp gain at $2.6\times$ the cost. Meanwhile, investing the same budget into more proposers ($K{=}6$, $R{=}1$) yields 87.4\% at $\sim$5.6M tokens, a far larger gain. This reflects the multiplicative structure $\mathbb{E}[Q] = C_K \cdot \eta$: once $\eta$ is sufficiently high (first reviewer already brings $\eta \geq 1 - \epsilon_0 = 0.8$), marginal improvements in $\eta$ via additional reviewers are dominated by the coverage gain from additional proposers. In short, the first reviewer captures the lion's share of the aggregation gain; subsequent reviewers encounter steep diminishing returns that are better spent expanding exploration.

\subsection{Proof of Theorem~\ref{thm:dianoia_convergence} Part (c): Performance Bound}

\begin{proof}
We derive a lower bound on DIANOIA's expected quality by decomposing the analysis into exploration and aggregation phases.

\textbf{Step 1: Decomposition by Conditioning.}
Let $E = \{\exists k\!:\,Q(\tau^{(k)})\!=\!1\}$ and $\bar E$ its complement. Then:
\begin{align}
\mathbb{E}[Q(\tau^{\mathrm{D}})] &= P(E)\cdot \mathbb{E}[Q(\tau^{\mathrm{D}})\mid E] \nonumber \\
&\quad + P(\bar E)\cdot \mathbb{E}[Q(\tau^{\mathrm{D}})\mid \bar E]. \label{eq:decomp_dianoia}
\end{align}
Since $Q\!\in\!\{0,1\}$, $\mathbb{E}[Q\mid E] = P(Q\!=\!1\mid E)$. To obtain a \emph{conservative lower} bound, we drop the $\bar E$ term by setting $\mathbb{E}[Q\mid\bar E]\!\geq\!0$. This corresponds to the modelling assumption that the synthesizer cannot recover from all-incorrect initial proposals; empirically this assumption is overly conservative---instrumented DIANOIA runs on MBPP show roughly $5.5\%$ of correct outputs are \emph{rescued} cases where every initial proposal failed execution but the synthesizer recovered a correct answer using reviewer hints (Appendix~\ref{app:synthesis_breakdown}). The bound below is therefore a true lower bound that practical synthesis can exceed via local repair.

\textbf{Step 2: Exploration Phase Analysis.}
By Proposition~\ref{prop:iid_exploration}, with $K$ IID proposers of success rate $p$:
\begin{align}
P(E) = 1 - P(\bar E) = 1 - (1-p)^K.
\end{align}

\textbf{Step 3: Aggregation Phase Analysis.}
Given $E$, by Proposition~\ref{prop:aggregation}(b) (\emph{Idealized Bound}, under A5--A6 and execution-grounded disambiguation):
\begin{align}
P(Q(\tau^{\mathrm{D}})\!=\!1 \mid E) \geq 1 - \epsilon_0^{K-1},
\end{align}
where $\epsilon_0 < 0.5$ is the per-reviewer error rate. High-fidelity information from cross-review with $K-1$ reviewers yields exponentially decreasing misclassification, enabling high aggregation efficiency. The bound's derivation and the role of the at-least-one-endorsement + execution-disambiguation rule are in App.~\ref{app:proofs}.

\textbf{Step 4: Combining Both Phases.}
Substituting the bounds from Steps 2 and 3 into Equation~\eqref{eq:decomp_dianoia}:
\begin{align}
\mathbb{E}[Q(\tau^{\mathrm{D}})] &= P(E)\, P(Q(\tau^{\mathrm{D}})\!=\!1 \mid E) \nonumber \\
&\geq [1 - (1-p)^K]\,[1 - \epsilon_0^{K-1}].
\end{align}

\textbf{Step 5: Comparison to Baseline.}
The gain over the single-agent baseline $p$ is:
\begin{align}
\mathbb{E}[Q] - p &\geq [1\!-\!(1\!-\!p)^K][1\!-\!\epsilon_0^{K-1}] - p \nonumber\\
&= \underbrace{[1\!-\!(1\!-\!p)^K - p]}_{\mathcal{G}_{\text{explore}}} \nonumber \\
&\quad - \underbrace{[1\!-\!(1\!-\!p)^K]\,\epsilon_0^{K-1}}_{\text{aggr loss}}.
\end{align}

The first term is the exploration gain (Proposition~\ref{prop:iid_exploration}). The second term represents the small efficiency loss due to imperfect reviewers, which decreases exponentially in $K$.

\textbf{Numerical Example.} With $p\!=\!0.4$, $K\!=\!3$, $\epsilon_0\!=\!0.2$:
\begin{align}
\mathbb{E}[Q(\tau^{\mathrm{D}})] &\geq [1 - 0.6^3][1 - 0.2^2] \nonumber\\
&= 0.784 \times 0.96 = 0.753,
\end{align}
an 88\% relative improvement over the baseline ($(0.753 - 0.4)/0.4 = 0.8825$). This bound is \emph{signal-agnostic}: it treats $\epsilon_0$ as a fixed parameter regardless of signal quality. Incorporating information gain from execution feedback yields a substantially tighter bound (Step~6).

\textbf{Step 6: Information-Tightened Bound Under Complete and Deterministic Verification.}

The bound in Step~4 does not exploit a key property of execution-based tasks: under Assumption~\ref{assump:determinism} together with completeness of the verifier, execution feedback $e$ perfectly reveals quality ($I(Q;e) = H(Q)$, Theorem~\ref{thm:dianoia_convergence}a). This tightens the bound through two mechanisms that jointly incorporate $\mathcal{G}_{\text{info}}$:

\textit{(a) Deterministic quality filtering.} Quality is directly observable from execution: $Q(\tau^{(k)}) = g(e^{(k)})$ for a deterministic function $g$. The system identifies the correct proposal set $\mathcal{C} = \{k : e^{(k)}.\text{pass} = 1\}$ with zero error. The reviewer's role reduces from quality \emph{judgment} to failure-mode \emph{analysis}, making the effective $\epsilon_0$ in this regime substantially lower than for tasks with noisy pseudo-verification.

\textit{(b) Closed-loop verification via re-execution.} DIANOIA's iterative synthesis (Algorithm~\ref{alg:synthesis}) re-executes each synthesized output. Under A4, re-execution is a \emph{perfect verification oracle}: if $Q(\tau^{\text{syn}}) = 0$, the failure is detected with certainty and the system iterates. To translate this into an exponential decay bound, we adopt an \emph{additional assumption} not implied by A1--A6:

\begin{quote}
\textbf{(A7, Synthesis-Iteration Independence; idealization).} Given re-execution feedback, successive synthesis iterations fail with conditionally-independent failure probability $\leq \epsilon_0$ each.
\end{quote}

Under (A7), the probability that all $S$ iterations fail to produce or select a correct solution---despite $\mathcal{C} \neq \emptyset$ and deterministic verification---is bounded by $\epsilon_0^S$. We flag (A7) as an idealization: in practice, the synthesizer is the same LLM evaluating the same evidence, so successive failures may correlate (e.g., the same misunderstanding recurring), making the $\epsilon_0^S$ rate an optimistic bound. The empirical diminishing-returns curve on $S$ (App.~\ref{app:scaling_tables}, single-dim scaling: $S\!=\!1\!\to\!2\!\to\!3$ gives $83.6\!\to\!84.0\!\to\!84.6\%$ on MBPP) is consistent with sub-exponential improvement rather than the $\epsilon_0^S$ exponential rate, indicating (A7) is approximate in practice.

Combining cross-review and iterative synthesis, the system fails only if reviewers misidentify \emph{and} all $S$ synthesis iterations independently fail under deterministic verification:
\begin{equation}\label{eq:info_tightened_bound}
\eta(f_{\text{DIANOIA}}, e) \geq 1 - \epsilon_0^{K-1+S}
\end{equation}
The exponent $K-1+S$ reflects two independent sources of error reduction: reviewer redundancy ($K-1$ cross-reviews) and synthesis iteration ($S$ attempts with re-execution). The \textbf{information-tightened performance bound} is therefore:
\begin{equation}\label{eq:perf_bound_info}
\mathbb{E}[Q(\tau^{\mathrm{D}})] \geq [1 - (1-p)^K] \cdot [1 - \epsilon_0^{K-1+S}]
\end{equation}

\textbf{Updated Numerical Example.} With $p=0.4$, $K=3$, $S=3$, $\epsilon_0=0.2$:
\begin{align}
\mathbb{E}[Q(\tau^{\mathrm{D}})] &\geq [1 - 0.6^3][1 - 0.2^{2+3}] \nonumber\\
&= 0.784 \times (1 - 0.00032) \approx 0.784
\end{align}
Compared to the signal-agnostic bound of $0.753$ (Step~5), the information-tightened bound approaches the \textbf{oracle} $C_K = 0.784$---confirming that deterministic execution feedback effectively closes the aggregation gap.

\textbf{Remark (Non-execution tasks).} For tasks without deterministic verification (e.g., GSM8K, AIME), DIANOIA employs model-based pseudo-verification $\sigma_v$ where $I(Q;\sigma_v) < H(Q)$ (Remark~\ref{rem:task_classification}). Re-execution cannot deterministically verify quality in this regime, so the iterative tightening does not apply, and the general bound $1 - \epsilon_0^{K-1}$ from Step~4 remains operative. This asymmetry precisely explains the empirical observation (Table~\ref{tab:main_results}) that DIANOIA's largest gains emerge on execution-intensive tasks (MBPP $+$8.6pp, BFCL-SP $+$10.5pp) versus modest gains on math tasks (GSM8K $+$7.5pp).
\end{proof}

\section{Scaling and Ablation Tables}
\label{app:scaling_tables}

This appendix collects the three ablation tables referenced in Section~\ref{subsec:scaling_analysis}.

\begin{table}[h]
\centering
\caption{Single-dimension scaling on MBPP. Each group varies one parameter while fixing the others.}
\label{tab:single_dim_scaling}
\resizebox{\columnwidth}{!}{%
\begin{tabular}{cc@{\hskip 12pt}cc@{\hskip 12pt}cc}
\toprule
\multicolumn{2}{c}{\textbf{Exploration} ($R{=}1, S{=}3$)} & \multicolumn{2}{c}{\textbf{Information} ($K{=}3, S{=}3$)} & \multicolumn{2}{c}{\textbf{Aggregation} ($K{=}3, R{=}1$)} \\
\cmidrule(r){1-2} \cmidrule(lr){3-4} \cmidrule(l){5-6}
$K$ & Acc & $R$ & Acc & $S$ & Acc \\
\midrule
1 & 81.0\% & 0 & 83.2\% & 1 & 83.6\% \\
2 & 82.8\% & 1 & 84.6\% & 2 & 84.0\% \\
3 & 84.6\% & 3 & 85.8\% & 3 & 84.6\% \\
\bottomrule
\end{tabular}%
}
\end{table}

Table~\ref{tab:joint_optimization_bfcl} reports the analogous joint-dimension decomposition on BFCL-SP, yielding synergy coefficient $\gamma = 7.5 / 8.8 = 0.852$. Together with the MBPP coefficient ($\gamma=0.88$), this is a post-run descriptive check of subadditivity (Remark~\ref{rem:subadditivity}).

\begin{table}[h]
\centering
\caption{Joint-dimension scaling on BFCL-SP. Single-dimension contributions ($+5.0+2.5+1.3 = +8.8$pp) exceed the joint $+7.5$pp realized by DIANOIA-full, giving $\gamma=0.852$.}
\label{tab:joint_optimization_bfcl}
\setlength{\tabcolsep}{4pt}
\begin{tabular}{lccccl}
\toprule
\textbf{Config} & \textbf{K} & \textbf{R} & \textbf{S} & \textbf{Acc} & \textbf{Activates} \\
\midrule
Baseline & 1 & 0 & 0 & 84.8\% & --- \\
Explore-only & 3 & 0 & 1 & 89.8\% & $\mathcal{G}_{\text{explore}}$ \\
Info-only & 1 & 1 & 1 & 87.3\% & $\mathcal{G}_{\text{info}}$ \\
Aggr-only & 1 & 0 & 1 & 86.1\% & $\mathcal{G}_{\text{aggr}}$ \\
\textbf{DIANOIA-full} & \textbf{3} & \textbf{1} & \textbf{3} & \textbf{92.3\%} & \textbf{All three} \\
\bottomrule
\end{tabular}
\end{table}

\begin{table}[h]
\centering
\caption{Role-design ablation on MBPP ($K{=}3$, $R{=}1$, $S{=}3$). Role-specialised prompting outperforms same-prompt + temperature noise; alternative structured role choices remain competitive: \emph{any} structured role design beats unstructured sampling, but the choice within the structured family has overlapping CIs.}
\label{tab:role_ablation}
\setlength{\tabcolsep}{3pt}
\resizebox{\columnwidth}{!}{%
\begin{tabular}{lc}
\toprule
\textbf{Role configuration} & \textbf{Accuracy} \\
\midrule
DIANOIA (Min.\ + Skep.\ + Expl.) & \textbf{84.6}\% \tiny[81.4, 87.0] \\
Same prompt $+$ temperature & 83.2\% \tiny[79.8, 86.4] \\
Two roles (Min.\ + Skep.) & 81.4\% \tiny[78.0, 84.8] \\
Two roles (Min.\ + Expl.) & 84.4\% \tiny[81.0, 87.4] \\
Two roles (Skep.\ + Expl.) & 82.8\% \tiny[79.4, 86.2] \\
Alternative role triplet & 83.4\% \tiny[79.8, 86.6] \\
\bottomrule
\end{tabular}
}
\end{table}

\section{Synthesis Breakdown}
\label{app:synthesis_breakdown}

To quantify how much of DIANOIA's gain is selection vs.\ synthesis, we instrumented final correct outputs on MBPP. Among DIANOIA's correct predictions: $16.4\%$ exactly match an already-correct initial proposal (pure selection); $78.1\%$ are newly synthesized correct answers although at least one initial proposal was correct (selection $+$ repair); $5.5\%$ are \emph{rescued} cases where all initial proposals are incorrect but the final synthesized answer is correct (genuine repair beyond selection). The rescue rate is small in absolute terms, but it shows that the bound in Theorem~\ref{thm:dianoia_convergence}c is conservative: practical synthesis can occasionally exceed the conditional event $\{\exists k\!:\!Q(\tau^{(k)})\!=\!1\}$, and our theoretical analysis is best read as a lower-bound design lens rather than a complete model of every Phase-4 event.

\section{Detailed Predictive-Verification Narratives}
\label{app:predictive_verification}

We expand the four predictions referenced in Section~\ref{subsec:discussion} into prediction--verification narratives, summarised in Table~\ref{tab:predictions}. The format throughout is: state the prediction the framework makes \emph{before} the experiment, then verify against the data.

\begin{table}[h]
\centering
\caption{Cross-task predictions derived from the decomposition (\emph{ex ante}, before comparing methods) and their empirical verification.}
\label{tab:predictions}
\setlength{\tabcolsep}{2pt}
\resizebox{\columnwidth}{!}{%
\begin{tabular}{lp{3.4cm}p{3.4cm}}
\toprule
\textbf{Setting} & \textbf{Prediction} & \textbf{Verification} \\
\midrule
P1: exec.\ ground (MBPP, BFCL-SP) & All 3 dims active; DIANOIA's largest gain; exec.-blind methods saturate. & $+$6.6 / $+$3.5pp; SC \& ReConcile $<$78\% any budget; MoA matches only at $5{\times}$ tokens. \\
\midrule
P2: pseudo-verified, high $p$ (GSM8K) & Two channels bottlenecked; smallest gain. & $+$1.3pp (smallest); pseudo-verifier prec/rec $94\%/87\%$ confirms $\hat{\eta}^*(\sigma_v)\!<\!1$. \\
\midrule
P3: competition math (AIME-25) & Exploration is the bottleneck; voting harms when $p_i\!<\!0.5$. & SC degrades $-$13.3pp; DIANOIA $93.3\%$; McNemar $p{=}0.039$ vs ReConcile. \\
\midrule
P4: Best-of-$N$ ceiling & BoN activates only Explore$+$Info; gap to DIANOIA isolates $\mathcal{G}_{\text{aggr}}$. & BoN-3 $78.4\%$ vs DIANOIA $84.6\%$; $+$6.2pp residual. \\
\bottomrule
\end{tabular}%
}
\end{table}

\textbf{P1 (execution-grounded tasks).} On \textbf{MBPP} and \textbf{BFCL-SP}, deterministic execution lifts the information ceiling to its maximum ($I(Q;e)=H(Q)$, Remark~\ref{rem:task_classification}), so all three dimensions are simultaneously active. The framework predicts: (a) DIANOIA should show its largest gains here (joint optimization is most leveraged), (b) execution-blind methods (Self-Consistency, ReConcile) should saturate early regardless of token budget, and (c) methods that activate $\mathcal{G}_{\text{aggr}}$ but not $\mathcal{G}_{\text{info}}$ (MoA) should reach high accuracy only via expensive layer stacking. \emph{Verification:} DIANOIA's gains over the strongest baseline are indeed $+$6.6pp on MBPP and $+$3.5pp on BFCL-SP---the largest of the four benchmarks. Self-Consistency and ReConcile plateau below $78\%$ on MBPP at any token budget (Figure~\ref{fig:efficiency_frontier}). MoA matches DIANOIA's ceiling only at $\sim$$5{\times}$ more tokens. All three sub-predictions hold.

\textbf{P2 (high-baseline pseudo-verified tasks).} On \textbf{GSM8K}, the framework predicts a small DIANOIA gain because two channels are simultaneously bottlenecked: $\mathcal{G}_{\text{info}}$ is capped by pseudo-verification fidelity ($I(Q;\sigma_v) < I(Q;e)$), and $\mathcal{G}_{\text{explore}}$ is capped by the high baseline accuracy ($p \approx 0.84$, leaving little coverage headroom). \emph{Verification:} DIANOIA's gain is $+$1.3pp on GSM8K---the smallest across benchmarks, as predicted. The candidate-level pseudo-verifier on GSM8K attains $94.0\%$ precision and $86.6\%$ recall, directly quantifying the information loss relative to deterministic execution and confirming that $\hat{\eta}^*(\sigma_v) < 1$.

\textbf{P3 (low-baseline competition tasks).} On \textbf{AIME-2025} with low per-problem success rate, the framework predicts: (a) exploration is the dominant bottleneck (most problems have $p_i \ll 0.5$), (b) majority voting will \emph{harm} accuracy because $P(\text{vote correct}) < p_i$ when $p_i < 0.5$ (Failure Mode 2 in Appendix~\ref{app:detailed_analysis}), and (c) evidence-based synthesis should still recover via DIANOIA's aggregation channel. \emph{Verification:} Self-Consistency degrades by $-$13.3pp ($56.7\%$ vs.\ $70.0\%$ single-model), exactly the predicted voting amplification. DIANOIA achieves the highest observed accuracy ($93.3\%$); a paired McNemar exact test on per-problem outcomes gives $p{=}0.039$ vs.\ ReConcile (8/1 discordant in DIANOIA's favour), supporting the gain despite $N{=}30$.

\textbf{P4 (Best-of-$N$ ceiling).} The framework predicts that Best-of-$N$ with execution-grounded selection will close the gap to DIANOIA by exactly $\mathcal{G}_{\text{aggr}}$, because Best-of-$N$ activates $\mathcal{G}_{\text{explore}}$ and $\mathcal{G}_{\text{info}}$ but not $\mathcal{G}_{\text{aggr}}$ (no role diversity, no cross-review, no synthesis). \emph{Verification:} On MBPP, Best-of-3 reaches $78.4\%$ vs.\ DIANOIA's $84.6\%$---a $+6.2$pp residual that isolates $\mathcal{G}_{\text{aggr}}$ in DIANOIA, consistent with the predicted gap.


\section{Computational Complexity Analysis}
\label{app:complexity}

Let $C_{\text{prop}}$, $C_{\text{rev}}$, $C_{\text{syn}}$ denote token costs for proposal, review, synthesis, and let $R$ denote the number of reviewers per proposal ($R\!=\!1$ in DIANOIA's default configuration; $R\!=\!K{-}1$ corresponds to the idealized bound regime of Prop.~\ref{prop:aggregation}b---see the Remark in App.~\ref{app:potential_game_ideal}).

\textbf{Total cost}: $C = K \cdot C_{\text{prop}} + K\!\cdot\!R \cdot C_{\text{rev}} + T \cdot C_{\text{syn}}$

For DIANOIA's default ($K\!=\!3$, $R\!=\!1$, $T\!=\!3$): $C = 3C_{\text{prop}} + 3C_{\text{rev}} + 3C_{\text{syn}}$. The idealized regime with peer review ($R\!=\!K{-}1\!=\!2$) gives $C = 3C_{\text{prop}} + 6C_{\text{rev}} + 3C_{\text{syn}}$.

\textbf{Parallelization}: Propose and review phases are fully parallel. Wall-clock latency:
\begin{equation}
L = L_{\text{prop}} + L_{\text{rev}} + T \cdot L_{\text{syn}} + (K + T) \cdot L_{\text{exec}}
\end{equation}

\begin{table}[h]
\centering
\caption{Complexity comparison. DIANOIA's default $R\!=\!1$ gives $O(K)$ calls; the bracketed entry is the $R\!=\!K{-}1$ idealized regime.}
\resizebox{\columnwidth}{!}{%
\begin{tabular}{lcc}
\toprule
\textbf{Method} & \textbf{Calls} & \textbf{Depth} \\
\midrule
Self-Consistency & $K$ & $1$ \\
MoA & $K \cdot L$ & $L$ \\
Debate & $K \cdot R$ & $R$ \\
DIANOIA & $O(K\!\cdot\!R + T)$ [$O(K^2)$] & $O(T)$ \\
\bottomrule
\end{tabular}
}
\end{table}

\section{Additional Technical Lemmas}
\label{app:lemmas}

This section collects fundamental results from information theory, probability theory, and game theory that underpin our main theorems.

\subsection{Information-Theoretic Lemmas}

\begin{lemma}[Properties of Shannon Entropy]
\label{lem:entropy_properties}
Let $X$ be a discrete random variable taking values in $\mathcal{X}$. The Shannon entropy $H(X) = -\sum_{x \in \mathcal{X}} P(X=x) \log P(X=x)$ satisfies:
\begin{enumerate}[label=(\alph*),leftmargin=*,topsep=2pt,itemsep=1pt]
    \item \textbf{Non-negativity}: $H(X) \geq 0$, with equality if and only if $X$ is deterministic (concentrated on a single value).
    \item \textbf{Maximum entropy}: $H(X) \leq \log |\mathcal{X}|$, with equality if and only if $X$ is uniformly distributed over $\mathcal{X}$.
    \item \textbf{Conditioning reduces entropy}: $H(X \mid Y) \leq H(X)$, with equality if and only if $X$ and $Y$ are independent.
\end{enumerate}
\end{lemma}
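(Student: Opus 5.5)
The three parts of Lemma~\ref{lem:entropy_properties} are standard facts, and I would derive each from Jensen's inequality applied to the concave function $\log$ (equivalently, from Gibbs' inequality $D(P\|Q)\ge 0$). Throughout I use the convention $0\log 0 = 0$ and restrict sums to the support of the relevant distribution.

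\textbf{Part (a), non-negativity.} Each summand $-P(X{=}x)\log P(X{=}x)$ is non-negative, since $P(X{=}x)\in[0,1]$ gives $\log P(X{=}x)\le 0$. Hence $H(X)\ge 0$. Equality forces every summand to vanish, which happens only when each $P(X{=}x)\in\{0,1\}$. That means $X$ is concentrated on a single value, and conversely a point mass clearly has zero entropy.

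\textbf{Part (b), maximum entropy.} I would write $\log|\mathcal{X}| - H(X) = \sum_x P(x)\log\bigl(P(x)|\mathcal{X}|\bigr) = D(P\,\|\,U)$, where $U$ is the uniform distribution on $\mathcal{X}$. Gibbs' inequality then gives the bound. Gibbs' inequality itself follows from Jensen: $-D(P\|U) = \sum_x P(x)\log\frac{U(x)}{P(x)} \le \log\sum_{x\in\mathrm{supp}P} U(x)\le 0$.

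The equality case is the step needing care. Equality requires two things:
\begin{itemize}
\item equality in Jensen, so that $U(x)/P(x)$ is constant on the support of $P$;
\item the support of $P$ to be all of $\mathcal{X}$.
\end{itemize}
Together these give $P=U$.

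\textbf{Part (c), conditioning reduces entropy.} I would use the identity $H(X)-H(X\mid Y) = I(X;Y) = D(P_{XY}\,\|\,P_XP_Y)$, which follows by expanding the definitions (cf.\ Def.~\ref{def:mutual_info}). Gibbs' inequality then gives $I(X;Y)\ge 0$. Its equality condition is $P_{XY}=P_XP_Y$, which is exactly independence.

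The main obstacle is purely bookkeeping. I would take care with zero-probability atoms, both in Jensen's equality case and when defining conditional probabilities. I would also assume $\mathcal{X}$ is finite in part (b), since otherwise $\log|\mathcal{X}|$ is infinite and the bound is vacuous.
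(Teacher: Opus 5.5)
Your proof is correct, and it is the standard textbook argument via nonnegativity of relative entropy (Gibbs' inequality). The paper itself states this lemma without proof, as a collected standard fact, so there is no competing route to compare. One bookkeeping point: you assume finiteness of $\mathcal{X}$ only in part (b), but it (or at least $H(X)<\infty$) is needed in part (c) too. Otherwise $H(X)-H(X\mid Y)$ is undefined, and the equality $H(X\mid Y)=H(X)=\infty$ can hold for dependent $Y$, e.g.\ $Y=\mathbf{1}[X=x_0]$ with $0<P(X=x_0)<1$.
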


\begin{lemma}[Binary Entropy Function]
\label{lem:binary_entropy}
The binary entropy function $H_b(p) := -p \log p - (1-p) \log(1-p)$ for $p \in [0, 1]$ satisfies:
\begin{enumerate}[label=(\alph*),leftmargin=*,topsep=2pt,itemsep=1pt]
    \item $H_b(p) \geq 0$ with $H_b(0) = H_b(1) = 0$.
    \item $H_b(p) = H_b(1-p)$ (symmetry).
    \item $H_b(p)$ is concave and achieves maximum $H_b(0.5) = 1$ bit at $p = 0.5$.
\end{enumerate}
\end{lemma}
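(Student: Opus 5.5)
The statement is Lemma~\ref{lem:binary_entropy}, and I would prove it directly from the definition $H_b(p) = -p\log p - (1-p)\log(1-p)$, using the standard convention $0\log 0 := 0$, which is justified by the limit $x\log x \to 0$ as $x\to 0^+$.

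Part (a) follows from this convention together with the observation that each summand $-x\log x$ is non-negative for $x\in[0,1]$, since $\log x\le 0$ there. Evaluating at $p=0$ and $p=1$, each of the two terms is either $0\log 0$ or $1\log 1$, so both vanish.

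Part (b) is immediate. Substituting $1-p$ for $p$ simply swaps the two summands, so the expression is unchanged.

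For part (c), I would differentiate on the open interval $(0,1)$. With logarithms taken base 2 (so the result is in bits):
\[
H_b'(p) = \log\tfrac{1-p}{p}, \qquad H_b''(p) = -\tfrac{1}{\ln 2}\Bigl(\tfrac{1}{p}+\tfrac{1}{1-p}\Bigr) < 0.
\]
This gives strict concavity on $(0,1)$. Continuity of $H_b$ at the endpoints then extends concavity to the closed interval $[0,1]$. The derivative vanishes only where $(1-p)/p = 1$, that is, at $p=1/2$; by concavity this critical point is the global maximizer. The maximum value is
\[
H_b(1/2) = -2\cdot\tfrac12\log\tfrac12 = \log 2 = 1 \text{ bit}.
\]
An alternative route to the maximum is Lemma~\ref{lem:entropy_properties}(b) with $|\mathcal{X}|=2$, which gives $H_b(p)\le \log 2$ with equality exactly at the uniform distribution $p=1/2$.

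None of these steps poses a real obstacle. The only point requiring care is the endpoint convention $0\log 0 = 0$. It is needed both to make $H_b$ well defined at $p\in\{0,1\}$ and to pass concavity from the open interval to the closed one, and it is secured by continuity of $x\log x$ at $0$.
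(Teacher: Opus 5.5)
Your proof is correct and complete. The convention $0\log 0=0$ and the sign of $-x\log x$ on $[0,1]$ give (a), swapping the two summands gives (b), and $H_b''(p)=-\frac{1}{\ln 2}\bigl(\frac{1}{p}+\frac{1}{1-p}\bigr)<0$ with the unique critical point at $p=1/2$ gives (c), with continuity handling the endpoints. That is the standard textbook argument.

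The paper gives no proof of this lemma. It lists it as a background fact in its appendix of technical lemmas, next to the entropy properties and the chain rule, so there is no competing route to compare. Your choice of base-2 logarithms, needed for the maximum to equal $1$ bit, matches the convention the paper adopts in its binary Fano lemma.
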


\begin{lemma}[Mutual Information Chain Rule]
\label{lem:mi_chain_rule}
For random variables $X, Y, Z$:
\begin{equation}
I(X; Y, Z) = I(X; Y) + I(X; Z \mid Y)
\end{equation}
where $I(X; Z \mid Y) := H(Z \mid Y) - H(Z \mid X, Y)$ is the conditional mutual information.
\end{lemma}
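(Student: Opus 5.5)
The statement to prove is the chain rule $I(X;Y,Z)=I(X;Y)+I(X;Z\mid Y)$. I would prove it by pure entropy accounting, working throughout with discrete random variables, consistent with the finite setting of (A3). The first step is to write every mutual information as a difference of entropies, using Definition~\ref{def:mutual_info} and its conditional analogue. This gives
\begin{equation*}
I(X;Y,Z)=H(X)-H(X\mid Y,Z),\qquad I(X;Y)=H(X)-H(X\mid Y).
\end{equation*}
One point needs care before going further. As written in the statement, the conditional mutual information is defined as $H(Z\mid Y)-H(Z\mid X,Y)$. I would first show that this equals $H(X\mid Y)-H(X\mid Y,Z)$. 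That equality is the symmetry of conditional mutual information. It follows from the chain rule for entropy: expand $H(X,Z\mid Y)$ in two ways,
\begin{equation*}
H(X\mid Y)+H(Z\mid X,Y)=H(X,Z\mid Y)=H(Z\mid Y)+H(X\mid Y,Z),
\end{equation*}
and rearrange.

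The entropy chain rule itself is routine. It comes from factorizing $P(x,z\mid y)=P(x\mid y)P(z\mid x,y)$, taking $-\log$, and averaging over the joint law, restricting to outcomes of positive probability.

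With symmetry in hand, the sum telescopes:
\begin{equation*}
I(X;Y)+I(X;Z\mid Y)=\bigl[H(X)-H(X\mid Y)\bigr]+\bigl[H(X\mid Y)-H(X\mid Y,Z)\bigr]=H(X)-H(X\mid Y,Z),
\end{equation*}
which is exactly $I(X;Y,Z)$.

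I expect the only real subtlety to be the symmetry step. The paper's definition puts $Z$, not $X$, in the entropy slot, so the telescoping works only after symmetrization. I would state the two-way expansion of $H(X,Z\mid Y)$ explicitly rather than assume it.

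An alternative route avoids that step entirely. One can check the identity at the level of the log-ratio summand, using
\begin{equation*}
\log\frac{P(x,y,z)}{P(x)P(y,z)}=\log\frac{P(x,y)}{P(x)P(y)}+\log\frac{P(x,y,z)P(y)}{P(x,y)P(y,z)},
\end{equation*}
and then take expectations. The second term has expectation equal to the symmetric conditional mutual information. I would include this as a one-line cross-check.
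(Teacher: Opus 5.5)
The paper gives no proof of this lemma. It is listed among the standard information-theoretic facts in the appendix and is only invoked, inside the proof of the Data Processing Inequality, so there is no paper argument to match. Your proof is correct and complete on its own.

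You caught the one point a hasty telescoping would get wrong. With the paper's convention $I(X;Z\mid Y):=H(Z\mid Y)-H(Z\mid X,Y)$, the term $H(X\mid Y)$ cancels only after you show $H(Z\mid Y)-H(Z\mid X,Y)=H(X\mid Y)-H(X\mid Y,Z)$. Your two expansions of $H(X,Z\mid Y)$ do exactly that. Your finite-alphabet restriction also matters: it keeps every entropy finite, so the cancellation is never an $\infty-\infty$ manipulation.

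For comparison, an equally short route reaches the paper's convention with no conditional symmetrization. The sum form in Definition~\ref{def:mutual_info} is symmetric in its two arguments, so $I(X;Y,Z)=H(Y,Z)-H(Y,Z\mid X)$. Expanding both joint entropies by the chain rule gives $[H(Y)-H(Y\mid X)]+[H(Z\mid Y)-H(Z\mid X,Y)]$. The first bracket is $I(X;Y)$ by the same symmetry, and the second is the paper's $I(X;Z\mid Y)$ verbatim.

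The two routes trade one symmetry fact for another:
\begin{itemize}
\item The route above uses the essentially free symmetry of unconditional mutual information, in place of your conditional-symmetry step. It produces the paper's $Z$-slot form directly.
\item Your route stays entirely within the $H(Q)-H(Q\mid S)$ form of the definition. As a by-product, it shows that the $Z$-slot and $X$-slot forms of conditional mutual information agree.
\end{itemize}

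In your log-ratio cross-check, note that the second ratio equals $P(z\mid x,y)/P(z\mid y)$ on the support. Its expectation is therefore literally $H(Z\mid Y)-H(Z\mid X,Y)$, so no symmetrization is needed there either.
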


\begin{lemma}[Data Processing Inequality]
\label{lem:data_processing}
If $X \to Y \to Z$ forms a Markov chain (i.e., $Z$ is conditionally independent of $X$ given $Y$), then:
\begin{equation}
I(X; Z) \leq I(X; Y)
\end{equation}
Equality holds if and only if $Y \to Z$ is an invertible transformation.
\end{lemma}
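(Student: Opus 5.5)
The plan is to expand the joint quantity $I(X;Y,Z)$ in two ways using the chain rule (Lemma~\ref{lem:mi_chain_rule}) and compare the two expansions. Applying the lemma with the conditioning order $Y$ then $Z$ gives
\[
I(X;Y,Z) = I(X;Y) + I(X;Z\mid Y).
\]
Applying it with the roles of $Y$ and $Z$ swapped gives
\[
I(X;Y,Z) = I(X;Z) + I(X;Y\mid Z).
\]
Equating the right-hand sides yields the identity
\[
I(X;Z) + I(X;Y\mid Z) = I(X;Y) + I(X;Z\mid Y),
\]
and the inequality will follow from two facts about the conditional terms.

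\textbf{Step 1: the Markov term vanishes.} Use the Markov hypothesis, $P(z\mid x,y)=P(z\mid y)$, to get $H(Z\mid X,Y)=H(Z\mid Y)$. By the definition of conditional mutual information in Lemma~\ref{lem:mi_chain_rule}, this gives $I(X;Z\mid Y)=0$.

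\textbf{Step 2: conditional mutual information is nonnegative.} Show $I(X;Y\mid Z)\ge 0$. For each fixed value $Z=z$, the quantity $I(X;Y\mid Z=z)$ is an ordinary mutual information under $P(\cdot\mid z)$. It is nonnegative by Lemma~\ref{lem:entropy_properties}(c) applied to that conditional distribution, since conditioning reduces entropy; equivalently, it is a KL divergence, so Gibbs/Jensen applies. Averaging over $z$ preserves nonnegativity.

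\textbf{Conclusion of the inequality.} Substituting Steps 1 and 2 into the identity gives
\[
I(X;Z) = I(X;Y) - I(X;Y\mid Z) \le I(X;Y).
\]

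\textbf{Main obstacle: the equality clause.} The argument above shows that equality holds exactly when $I(X;Y\mid Z)=0$, i.e.\ when $X\to Z\to Y$ is also Markov, so that $Z$ is a sufficient statistic of $Y$ for $X$. One direction of the stated clause goes through. If $Z=g(Y)$ with $g$ invertible, then $Y$ is a function of $Z$, so $H(Y\mid Z)=0$; hence $I(X;Y\mid Z)=0$ and equality holds.

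The converse, as literally stated, is false in general. For example, take $Z=\mathbf 1[Y\in A]$ where $X$ depends on $Y$ only through that indicator. Then $g$ is not invertible, yet $I(X;Y\mid Z)=0$ and equality still holds. I would therefore prove the sufficiency direction and state the correct necessary-and-sufficient condition, $I(X;Y\mid Z)=0$ (sufficiency of $Z$ for $X$), as the precise form of the equality case. Invertibility is then a sufficient special case rather than the characterization.
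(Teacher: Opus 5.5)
Your inequality argument is the paper's own proof: expand $I(X;Y,Z)$ both ways by the chain rule, use the Markov property to get $I(X;Z\mid Y)=0$, and drop the nonnegative term $I(X;Y\mid Z)$. The paper stops there and never proves the equality clause, and you are right that it is false as stated. Invertibility of $Y\to Z$ is sufficient but not necessary; the correct characterization is $I(X;Y\mid Z)=0$, i.e.\ $X\to Z\to Y$ is also Markov. Your indicator example refutes the ``only if'' direction, and so does the simpler case of $X$ independent of $Y$ with $Z$ constant.
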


\textbf{Proof.} By the chain rule:
\begin{align}
I(X; Y, Z) &= I(X; Y) + I(X; Z \mid Y) \\
&= I(X; Z) + I(X; Y \mid Z)
\end{align}
Since $X \to Y \to Z$ is Markov, $I(X; Z \mid Y) = 0$. Also, $I(X; Y \mid Z) \geq 0$. Thus $I(X; Z) = I(X; Y) - I(X; Y \mid Z) \leq I(X; Y)$. \qed

\begin{lemma}[Fano's Inequality for Binary Selection]
\label{lem:fano}
Let $Q$ be a random variable taking values in a finite set $\mathcal{Q}$, $S$ an arbitrary signal, and $\hat{Q}(S)$ any decoder. Let $P_e := P(\hat{Q}(S) \neq Q)$ and $P_e^* := \min_{\hat{Q}} P_e$ the Bayes-optimal error. The standard Fano inequality~\cite{cover2006elements} reads:
\begin{equation}\label{eq:fano-standard}
H(Q \mid S) \;\leq\; H_b(P_e) + P_e \cdot \log_2(|\mathcal{Q}| - 1),
\end{equation}
where $H_b(p) := -p\log_2 p - (1-p)\log_2(1-p)$ is the binary entropy.

\textbf{Binary specialization.} For $Q \in \{0,1\}$ ($|\mathcal{Q}|=2$), the second term vanishes ($\log_2 1 = 0$), so
\begin{equation}\label{eq:fano-binary}
H(Q \mid S) \;\leq\; H_b(P_e^*).
\end{equation}
Inverting (using $H_b$ monotone on $[0, 1/2]$) yields a lower bound on $P_e^*$ when $H(Q|S)$ is small, and equivalently an \emph{upper bound on the Bayes-optimal selection accuracy} $\eta^*(S) := 1 - P_e^*$:
\begin{equation}\label{eq:fano-eta}
\eta^*(S) \;\leq\; 1 - H_b^{-1}\!\left(H(Q \mid S)\right),
\end{equation}
where $H_b^{-1}$ denotes the inverse of $H_b$ restricted to $[0, 1/2]$ (the relevant branch for $P_e^* \leq 1/2$).
\end{lemma}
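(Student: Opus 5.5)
The statement to prove is Lemma~\ref{lem:fano}: the standard Fano inequality, its binary specialization, and the inverted bound on $\eta^*(S)$. I would treat \eqref{eq:fano-standard} as the classical result of \cite{cover2006elements} and only sketch its derivation. Let $E := \mathbf{1}[\hat{Q}(S)\neq Q]$. Expand $H(E, Q \mid \hat{Q})$ by the chain rule in two orders. First, $H(E \mid \hat{Q}) + H(Q \mid E, \hat{Q}) \le H_b(P_e) + P_e\log_2(|\mathcal{Q}|-1)$. This uses two facts: on $\{E=0\}$ the variable $Q$ is determined by $\hat{Q}$, and on $\{E=1\}$ it ranges over at most $|\mathcal{Q}|-1$ values. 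Second, $H(Q\mid\hat{Q}) + H(E\mid Q,\hat{Q}) = H(Q \mid \hat{Q})$. Comparing the two expansions gives a bound on $H(Q\mid\hat Q)$. Since $Q \to S \to \hat{Q}(S)$ is a Markov chain, the data processing inequality (Lemma~\ref{lem:data_processing}) gives $H(Q\mid S)\le H(Q\mid \hat{Q})$. Together these yield \eqref{eq:fano-standard} for every decoder.

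For the binary specialization, set $|\mathcal{Q}|=2$, so the second term vanishes and $H(Q\mid S)\le H_b(P_e)$ holds for every decoder. Apply this to a Bayes-optimal decoder, which exists because $\mathcal{Q}$ is finite: choose the MAP value pointwise in $S$. This gives $H(Q\mid S)\le H_b(P_e^*)$. I would add one observation here: $P_e^*\le 1/2$ in the binary case, because the MAP decoder errs with conditional probability $\min(P(Q=0\mid S),P(Q=1\mid S))\le 1/2$. This places $P_e^*$ on the branch $[0,1/2]$, where $H_b$ is continuous and strictly increasing. That follows from Lemma~\ref{lem:binary_entropy}(c): $H_b$ is concave and maximized at $1/2$.

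For \eqref{eq:fano-eta}, let $H_b^{-1}:[0,1]\to[0,1/2]$ be the inverse of $H_b$ on that branch; it is itself increasing. Applying it to $H(Q\mid S)\le H_b(P_e^*)$ gives $H_b^{-1}(H(Q\mid S))\le P_e^*$. Substituting $\eta^*(S)=1-P_e^*$ then gives the claim. Note that $H(Q\mid S)\le H(Q)\le 1$ bit, by Lemma~\ref{lem:entropy_properties}(c) and Lemma~\ref{lem:binary_entropy}(c), so the argument of $H_b^{-1}$ is always in its domain.

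I expect the main obstacle to be bookkeeping rather than depth. The key point is that the binary bound must be stated for the Bayes-optimal $P_e^*$, not for an arbitrary $P_e$, which may exceed $1/2$. Inverting on the correct branch relies on $P_e^*\le 1/2$, so I would make that step explicit.
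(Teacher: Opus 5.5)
Your proof is correct and follows the same route as the paper: the standard Fano inequality (you reproduce the Cover--Thomas chain-rule derivation that the paper simply cites), then the specialization to $|\mathcal{Q}|=2$, then monotone inversion of $H_b$ on $[0,1/2]$. You also make explicit two steps the paper calls ``standard'': the binary bound should be applied to the MAP decoder, and $P_e^*\le 1/2$ places $P_e^*$ on the invertible branch.
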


\textbf{Proof.} Eq.~\eqref{eq:fano-standard} is the textbook statement of Fano's inequality (e.g., Cover \& Thomas Thm.~2.10.1). The binary specialization \eqref{eq:fano-binary} follows from $|\mathcal{Q}|=2 \Rightarrow \log_2(|\mathcal{Q}|-1) = 0$. The monotonicity-based inversion to \eqref{eq:fano-eta} is standard. \qed

\textbf{Corollary (Connection to our framework).} When $I(Q;e) > I(Q;\sigma)$ (Proposition~\ref{prop:information}), $H(Q|e) < H(Q|\sigma)$, so the binary Fano inequality \eqref{eq:fano-binary} gives $H_b(P_e^*(e)) \leq H(Q|e) < H(Q|\sigma) \leq H_b(P_e^*(\sigma))$. On the regime $P_e^*\!\leq\!1/2$ (which holds whenever the decoder is better than random), $H_b$ is strictly increasing, so $P_e^*(e) < P_e^*(\sigma)$ and hence $\eta^*(e) > \eta^*(\sigma)$: higher mutual information strictly tightens the achievable selection-error bound.

\subsection{Probabilistic Lemmas}

\begin{lemma}[Bonferroni Inequality]
\label{lem:bonferroni}
For events $A_1, \ldots, A_K$:
\begin{equation}
P\left(\bigcup_{k=1}^{K} A_k\right) \geq \sum_{k=1}^{K} P(A_k) - \sum_{1 \leq i < j \leq K} P(A_i \cap A_j)
\end{equation}
\end{lemma}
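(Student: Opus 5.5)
We prove the second-order Bonferroni bound of Lemma~\ref{lem:bonferroni} by a pointwise counting argument on indicator functions, which avoids the full inclusion--exclusion expansion. Fix an outcome $\omega$ and let $N(\omega) := \sum_{k=1}^K \mathbf{1}_{A_k}(\omega)$ be the number of events containing $\omega$. Then
\[
\mathbf{1}_{\bigcup_k A_k}(\omega) = \mathbf{1}[N(\omega)\geq 1],\qquad \sum_k \mathbf{1}_{A_k}(\omega) = N(\omega),\qquad \sum_{i<j}\mathbf{1}_{A_i\cap A_j}(\omega) = \tbinom{N(\omega)}{2}.
\]
The claim then reduces to the elementary inequality
\[
\mathbf{1}[n\geq 1] \;\geq\; n - \tbinom{n}{2}\qquad\text{for every integer } n\geq 0 .
\]

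We verify this integer inequality by cases. For $n=0$ both sides are $0$. For $n=1$ both sides are $1$. For $n\geq 2$ the right side equals $n(3-n)/2$. This is $1$ at $n=2$, $0$ at $n=3$, and negative for $n\geq 4$. In every case it is at most $1$, which is the value of the left side.

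Finally, take expectations of the pointwise inequality
\[
\mathbf{1}_{\bigcup_k A_k} \;\geq\; \sum_k \mathbf{1}_{A_k} - \sum_{i<j}\mathbf{1}_{A_i\cap A_j}.
\]
Linearity of expectation turns the left side into $P(\bigcup_k A_k)$ and the right side into $\sum_k P(A_k) - \sum_{i<j}P(A_i\cap A_j)$. Since the sums are finite, no integrability issues arise.

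The only step needing care is the identity $\sum_{i<j}\mathbf{1}_{A_i\cap A_j}(\omega)=\binom{N(\omega)}{2}$. It holds because the pairs $i<j$ with $\omega$ in both $A_i$ and $A_j$ are exactly the $2$-subsets of the $N(\omega)$ indices whose events contain $\omega$.

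An alternative route is induction on $K$ using $P(B\cup A_{K})=P(B)+P(A_K)-P(B\cap A_K)$ together with the union bound $P(B\cap A_K)\leq\sum_{i<K}P(A_i\cap A_K)$. The counting argument is shorter, so we use it.
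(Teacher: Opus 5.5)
Your proof is correct and complete. The paper gives no proof of this lemma. It lists it as a standard fact, and in Step~2 of the proof of Proposition~\ref{prop:diversity} it invokes it as the even-order ($m=2$) truncation of the inclusion--exclusion expansion written out in Step~1. So the paper relies implicitly on the general Bonferroni theorem.

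Your pointwise argument is the self-contained version of what that theorem supplies at order $2$. Reducing to $\mathbf{1}[n\ge 1]\ge n-\binom{n}{2}$ and taking expectations gives a short proof for arbitrary, possibly dependent, non-identically distributed events. That generality is exactly what the paper needs, since it then substitutes correlated pairwise intersections $p^2+\rho_{ij}p(1-p)$.

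The paper's citation buys the whole family of bounds at once: odd truncations give upper bounds and even truncations give lower bounds. Your route recovers this too. Replace the case check by the identity $\sum_{j=0}^{m}(-1)^j\binom{n}{j}=(-1)^m\binom{n-1}{m}$ for $n\ge 1$. For $m=2$ it gives the exact pointwise slack $\mathbf{1}[n\ge 1]-n+\binom{n}{2}=\binom{n-1}{2}\ge 0$ for $n\ge 1$, with slack $0$ at $n=0$.

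Your alternative induction, using $P(B\cup A_K)=P(B)+P(A_K)-P(B\cap A_K)$ with the union bound on $P(B\cap A_K)$, is also valid.
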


This provides a lower bound on the union probability accounting for pairwise overlaps, used in the proof of Proposition~\ref{prop:diversity}.

\begin{lemma}[Union Bound]
\label{lem:union_bound}
For events $A_1, \ldots, A_K$:
\begin{equation}
P\left(\bigcup_{k=1}^{K} A_k\right) \leq \sum_{k=1}^{K} P(A_k)
\end{equation}
\end{lemma}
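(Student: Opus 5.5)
The statement is the Union Bound (Lemma~\ref{lem:union_bound}): $P(\bigcup_k A_k)\le\sum_k P(A_k)$. I will prove it by reducing the union to a disjoint union and applying countable (here finite) additivity of $P$. First, define the disjointified events $B_1:=A_1$ and $B_k:=A_k\setminus\bigcup_{j<k}A_j$ for $k=2,\dots,K$. Two set-theoretic facts follow directly: (i) the $B_k$ are pairwise disjoint, since for $j<k$ every point of $B_k$ lies outside $A_j\supseteq B_j$; (ii) $\bigcup_{k}B_k=\bigcup_k A_k$. For (ii), a point in $\bigcup_k A_k$ lies in $B_m$ for the smallest index $m$ with the point in $A_m$, and the reverse inclusion holds because $B_k\subseteq A_k$.

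Next, finite additivity gives $P(\bigcup_k A_k)=P(\bigcup_k B_k)=\sum_k P(B_k)$. Monotonicity of $P$, which itself follows from additivity applied to $A_k=B_k\,\dot\cup\,(A_k\setminus B_k)$ together with non-negativity, gives $P(B_k)\le P(A_k)$ for each $k$. Summing these inequalities yields the claim.

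An alternative route is induction on $K$ using the two-event identity $P(A\cup B)=P(A)+P(B)-P(A\cap B)\le P(A)+P(B)$, applied with $A=\bigcup_{k<K}A_k$ and $B=A_K$. This is equally short, and I would mention it as a remark. There is no real obstacle here. The only step needing care is verifying the disjointness and the equality of unions for the $B_k$, which is routine. It is worth noting that, in contrast to Lemma~\ref{lem:bonferroni}, the bound is exactly the first-order Bonferroni truncation $S_1$, an upper bound because odd-order truncations overestimate the union. The inequality is tight precisely when the $A_k$ are pairwise disjoint up to null sets.
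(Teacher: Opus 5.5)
Your proof is correct: disjointifying via $B_k = A_k\setminus\bigcup_{j<k}A_j$ and then applying finite additivity and monotonicity is the standard textbook argument. Your side remarks also hold, namely the inductive alternative, the reading as the first-order Bonferroni truncation, and equality exactly when pairwise intersections are null. The paper gives no proof of this lemma at all; it lists it as a standard probabilistic fact next to Lemma~\ref{lem:bonferroni}, so there is no different route to compare against.
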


\subsection{Game-Theoretic Lemmas}

\begin{lemma}[Characterization of Potential Games]
\label{lem:potential_game_char}
A finite game $\Gamma = (N, \{S_i\}, \{u_i\})$ is an exact potential game iff for all players $i, j$ and all profiles $s$:
\begin{align}
&u_i(s_i', s_{-i}) - u_i(s_i, s_{-i}) \nonumber\\
&\qquad = u_j(s_j', s_{-j}) - u_j(s_j, s_{-j})
\end{align}
whenever the two deviations affect the same set of players' payoffs.
\end{lemma}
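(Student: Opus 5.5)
The plan is to read the informal condition in the lemma as the standard Monderer--Shapley \emph{cycle condition}~\cite{monderer1996potential}, and then prove both directions by a path-integration argument. Concretely, I interpret ``the two deviations affect the same set of players' payoffs'' as follows. Take any profile $s$, any two players $i\neq j$, and alternatives $s_i'$, $s_j'$. Going around the elementary $4$-cycle
\[
s\to(s_i',s_{-i})\to(s_i',s_j',s_{-ij})\to(s_j',s_{-j})\to s
\]
and summing the deviating player's utility change on each leg gives zero. Equivalently, the gain from $i$'s deviation followed by $j$'s equals the gain from $j$'s deviation followed by $i$'s, each measured in the deviator's own utility. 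Under this reading, the displayed equality is the ``commuting square'' identity. I would state this interpretation explicitly before starting the proof.

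\emph{Necessity.} Suppose $\Phi$ is an exact potential. Then every unilateral move by any player $k$ satisfies $\Delta u_k=\Delta\Phi$; this is the same computation as Step~4 of the proof of Theorem~\ref{thm:dianoia_convergence}(b). Along any closed path of unilateral deviations, the deviators' utility changes therefore sum to the telescoping sum of $\Delta\Phi$, which is zero. In particular the $4$-cycle sum vanishes, and the condition holds. This direction is routine.

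\emph{Sufficiency.} Fix a base profile $s^0$. For any profile $s$, choose a path from $s^0$ to $s$ that changes one coordinate at a time, and define $\Phi(s)$ as the sum of the deviators' utility changes along that path. Once $\Phi$ is shown to be well defined, it is an exact potential by construction: for a unilateral move by $i$ from $s$ to $(s_i',s_{-i})$, append that leg to a path ending at $s$, and $\Phi$ increases by exactly $\Delta u_i$.

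The main obstacle is showing that $\Phi$ does not depend on the path chosen. This is equivalent to showing that every closed unilateral-deviation path has zero total utility change, whereas the hypothesis only gives this for the elementary $4$-cycles. I would bridge the gap in three steps.
\begin{enumerate}
\item Two consecutive deviations by the \emph{same} player can be merged into one without changing the sum, since the intermediate utility value cancels.
\item Adjacent deviations by \emph{different} players can be swapped; the difference in sums is exactly one elementary $4$-cycle, which is zero by hypothesis.
\item Using steps 1 and 2, I would normalize any path from $s^0$ to $s$ into the canonical path that changes player $1$, then player $2$, and so on, once each. This is a bubble-sort argument by induction on the length and number of inversions of the path.
\end{enumerate}
Since every path then has the same sum as the canonical path, $\Phi$ is well defined.

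Finiteness of the game guarantees that all paths are finite sequences, so the induction terminates. The delicate bookkeeping is in the swap step, where I must check that the swapped path still passes through legitimate profiles and that the $4$-cycle identity is applied with the correct orientation and signs.
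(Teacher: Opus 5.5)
The paper gives no proof of this lemma. It is listed in App.~\ref{app:lemmas} as a standard fact (implicitly from \cite{monderer1996potential}) and is never invoked; the proof of Theorem~\ref{thm:dianoia_convergence}(b) checks $\Delta u_i=\Delta\Phi$ directly. So your argument has to stand on its own. For the statement you actually prove (a game is an exact potential game iff every elementary $4$-cycle of unilateral deviations has zero total deviator gain), it does. Necessity is just the definition of an exact potential plus telescoping. It is not really ``the same computation as Step~4'', which verifies one particular $\Phi$. Sufficiency, via merge, swap and bubble-sort down to the canonical player-by-player path, is the standard path-independence argument. The bookkeeping you worry about is harmless: the replacement intermediate profile exists because the strategy space is a product. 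The two path sums then differ by exactly one $4$-cycle sum, and orientation is irrelevant because reversing a cycle only negates a quantity that is zero. Two small corrections: what makes the canonical path finite is that $N$ is finite, not that the game is finite, since the cycle characterization needs no finiteness of the $S_i$. And after merging you should discard null legs where $s^0_k=s_k$, which contribute zero.

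The point to fix is the framing. Your $4$-cycle condition is not a reading of the displayed equality. That equality compares only the first-leg gains of two different players' deviations from the same $s$, with no second-leg terms, so it is not the commuting-square identity. Read literally, the lemma is false. Take $N=\{1,2\}$, $S_1=S_2=\{0,1\}$ and $u_1=u_2=\Phi$ with $\Phi(s)=s_1+2s_2$; this is an exact potential game. At $s=(0,0)$ each player's deviation changes both players' payoffs, so the deviations ``affect the same set of players' payoffs'', yet the gains are $1\neq 2$. You are therefore proving a corrected statement, the Monderer--Shapley $4$-cycle characterization. Say explicitly that you are replacing the condition rather than interpreting it; with that said, your proof is complete.
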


\begin{lemma}[Finite Improvement Property]
\label{lem:finite_improvement}
In any finite potential game, best-response dynamics converge to a pure-strategy Nash equilibrium in finitely many steps.
\end{lemma}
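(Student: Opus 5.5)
The plan is to use the exact-potential structure and run a potential-increment argument along improvement paths. The potential is $\Phi$ from Eq.~\eqref{eq:potential_fn}, or more generally the potential guaranteed by Lemma~\ref{lem:potential_game_char}. By definition of an exact potential game, every unilateral deviation by player $i$ from $s_i$ to $s_i'$ satisfies $u_i(s_i',s_{-i})-u_i(s_i,s_{-i})=\Phi(s_i',s_{-i})-\Phi(s_i,s_{-i})$. The whole proof will transfer the monotonicity of a best-response step from $u_i$ to $\Phi$.

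\textbf{Step 1.} Formalize best-response dynamics as a sequence of profiles $s^0,s^1,\dots$. At each step some player $i$ who is \emph{not} currently best-responding switches to $s_i'\in\arg\max_{t}u_i(t,s_{-i})$. The maximizer exists because $S_i$ is finite. Since $i$ was not best-responding, $u_i$ strictly increases, and by exactness $\Phi(s^{t+1})>\Phi(s^t)$. The dynamics halt exactly when no player has a strictly improving deviation.

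\textbf{Step 2.} Termination follows from finiteness. The profile space $\prod_i S_i$ is finite, so $\Phi$ takes finitely many values, say at most $M=|\prod_i S_i|$. A strictly increasing sequence of these values cannot repeat a profile. Hence the path has length at most $M-1$. The same argument covers arbitrary improvement paths, not only best-response ones.

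\textbf{Step 3.} Identify the terminal profile $s^*$ as a pure Nash equilibrium. At $s^*$ no player $i$ has $u_i(t,s^*_{-i})>u_i(s^*)$ for any $t\in S_i$, which is precisely the Nash condition.

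The main obstacle is Step 1. It requires that each step is a genuine strict improvement, and it requires the convention that a player who is already best-responding does not move or swap between tied maximizers. If ties could trigger moves, $\Phi$ would stay constant and the dynamics could cycle. I will therefore fix the convention that players move only when the improvement is strict, and check that termination under this convention still yields equilibrium. It does, because termination means no strict improvement exists for any player. With that convention, Steps 2 and 3 are routine.
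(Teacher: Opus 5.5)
Your proof is correct and follows the same route as the paper's: each best-response move strictly raises $\Phi$, finiteness of the profile space forces termination, and the terminal profile has no profitable unilateral deviation, so it is a pure Nash equilibrium. Your convention that only players with a strictly improving deviation move, together with the explicit bound of $|\prod_i S_i|-1$ steps, makes precise a point the paper's one-line argument leaves implicit. The definition of an exact potential is all you need, so you do not have to invoke Lemma~\ref{lem:potential_game_char}.
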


\textbf{Proof.} By definition of a potential game, each best-response move strictly increases the potential function $\Phi$. Since the strategy space is finite, $\Phi$ takes finitely many values. A strictly increasing sequence over a finite set must terminate, at which point no player can improve by unilateral deviation—i.e., a Nash equilibrium. \qed

\begin{lemma}[Existence of Nash Equilibrium in Potential Games]
\label{lem:nash_existence}
Every finite potential game possesses at least one pure-strategy Nash equilibrium, namely any strategy profile that maximizes the potential function.
\end{lemma}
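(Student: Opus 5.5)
The statement is Lemma~\ref{lem:nash_existence}: every finite potential game has a pure-strategy Nash equilibrium, namely any maximizer of the potential. I would argue directly from the definition of an exact potential, without going through dynamics.

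\textbf{Step 1: a maximizer exists.} Let $\Gamma=(N,\{S_i\},\{u_i\})$ be finite with exact potential $\Phi$. Then for every player $i$, every $s_{-i}$, and every $s_i,s_i'\in S_i$,
\[
u_i(s_i',s_{-i})-u_i(s_i,s_{-i})=\Phi(s_i',s_{-i})-\Phi(s_i,s_{-i}).
\]
The profile space $S=\prod_i S_i$ is finite and nonempty, so $\Phi$ attains its maximum at some $s^*\in\arg\max_{s\in S}\Phi(s)$. Finiteness is used only here; it replaces any compactness or continuity argument.

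\textbf{Step 2: the maximizer is an equilibrium.} Fix a player $i$ and any deviation $s_i'\in S_i$. By the exact potential identity,
\[
u_i(s_i',s^*_{-i})-u_i(s^*_i,s^*_{-i})=\Phi(s_i',s^*_{-i})-\Phi(s^*)\le 0,
\]
because $s^*$ maximizes $\Phi$ over all of $S$, and in particular over the unilateral deviations of $i$. So no player gains by deviating, and $s^*$ is a pure-strategy Nash equilibrium. This also gives the ``namely'' clause: every global maximizer of $\Phi$ is an equilibrium.

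\textbf{Alternative route and the main caveat.} One could instead invoke Lemma~\ref{lem:finite_improvement}: start improvement dynamics anywhere, and the terminal profile is an equilibrium. The direct argument is cleaner, and it establishes the specific claim about maximizers. There is no real obstacle; the only care needed is to state explicitly that the potential is exact, or at least ordinal, so that the sign of $\Delta u_i$ matches the sign of $\Delta\Phi$ under unilateral deviations. For a weighted potential the same proof goes through, since positive weights preserve the sign.
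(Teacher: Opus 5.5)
The paper gives no proof of this lemma. It is listed in Appendix~\ref{app:lemmas} as a standard fact, and the only nearby argument is the improvement-path proof of Lemma~\ref{lem:finite_improvement}, which is essentially the alternative route you mention. Your direct argument is correct and is the standard Monderer--Shapley proof: finiteness gives a global maximizer $s^*$ of $\Phi$, and exactness turns every unilateral deviation into $u_i(s_i',s^*_{-i})-u_i(s^*)=\Phi(s_i',s^*_{-i})-\Phi(s^*)\le 0$.

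Compared with the dynamics route, your argument proves the statement as written. Improvement paths only produce \emph{some} equilibrium, a profile where no single player's deviation raises $\Phi$, and that profile need not be a global maximizer. So on its own the dynamics route does not show that every maximizer of $\Phi$ is an equilibrium. In return, it supplies the constructive convergence information the paper actually relies on in Theorem~\ref{thm:dianoia_convergence}(b).

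Your requirement of an exact (or ordinal/weighted) potential is the right hypothesis. It is met in the paper's application: Step~4 of the proof of Theorem~\ref{thm:dianoia_convergence}(b) verifies $\Delta u_i=\Delta\Phi$ directly.
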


\section{Experimental Protocol Details}
\label{app:exp_protocol}

This section provides comprehensive details on experimental setup for reproducibility.

\begin{algorithm}[t]
\caption{DIANOIA Synthesis with Closed-Loop Validation}
\label{alg:synthesis}
\begin{algorithmic}[1]
\REQUIRE Problem $x$, proposals $\{\tau^{(k)}\}$, reports $\{e^{(k)}\}$, reviews $\{v^{(k)}\}$, max iterations $T$
\STATE $\tau^* \!\leftarrow\! \mathcal{S}(\{\tau^{(k)}, e^{(k)}, v^{(k)}\}, x)$ \hfill \textit{// init synthesis}
\FOR{$t = 1$ to $T$}
    \STATE $e^* \leftarrow \mathcal{E}(\tau^*)$ \hfill \textit{// execute}
    \IF{$e^*.\texttt{success} = \texttt{True}$}
        \STATE \textbf{return} $\tau^*$ \hfill \textit{// pass}
    \ENDIF
    \STATE $\tau^* \!\leftarrow\! \mathcal{S}(\tau^*, e^*, \{\tau^{(k)}, e^{(k)}, v^{(k)}\}, x)$ \hfill \textit{// refine}
\ENDFOR
\STATE \textbf{return} $\tau^*$ \hfill \textit{// best-effort}
\end{algorithmic}
\end{algorithm}

\textbf{Model Configuration Rationale.} The controlled setting (zero-shot prompting, no extended reasoning mode) ensures fair comparison across methods by eliminating confounding variables from model-specific features. As a consequence, the single-model references achieve lower accuracy than officially reported benchmarks (which typically use few-shot prompting and extended reasoning), but this setting is essential for measuring the pure gains attributable to multi-agent architectures rather than orthogonal enhancements.

\textbf{Exception: DeepSeek-V3.2 on AIME-2025.} We make one deliberate exception for the DeepSeek-V3.2 reference on AIME-2025. With extended reasoning disabled, DeepSeek-V3.2 achieves only $\sim$43--50\% accuracy on these competition-level problems---a drastic under-representation of the model's capability in this setting, since AIME problems demand long chains of mathematical reasoning that the base generation mode cannot sustain. We therefore enable its native thinking mode, which raises accuracy to 76.8\%. DIANOIA obtains a higher point estimate on the same 30 problems, but the wide uncertainty does not support a model-superiority claim. All other DeepSeek-V3.2 entries and all multi-agent method comparisons use the standard instruct-mode configuration.

\textbf{Data Sampling.}
\begin{itemize}[leftmargin=*,topsep=2pt,itemsep=1pt]
    \item \textbf{GSM8K}: Full test split (1,319 samples).
    \item \textbf{AIME-2025}: Full problem set (30 problems).
    \item \textbf{MBPP}: Full test split (500 samples).
    \item \textbf{BFCL-SP}: Simple Python subset of the eval split (400 samples).
    \item \textbf{VitaBench}: Full held-out evaluation set (400 samples), evaluated with the benchmark pass-rate protocol.
\end{itemize}

\textbf{Random Seeds and Reproducibility.}
\begin{itemize}[leftmargin=*,topsep=2pt,itemsep=1pt]
    \item All LLM API calls use fixed seeds where supported (OpenAI API \texttt{seed=42}).
    \item Sampling-based methods (Self-Consistency) use \texttt{temperature=0.7} with \texttt{seed=42} for diversity while maintaining best-effort reproducibility across runs.
    \item For methods without native seed support, we record and report all hyperparameters (temperature, top-p, etc.) in experimental configurations.
\end{itemize}

\textbf{Confidence Intervals.}
\begin{itemize}[leftmargin=*,topsep=2pt,itemsep=1pt]
    \item 95\% confidence intervals were computed via bootstrap resampling with 1,000 iterations.
    \item Bootstrap implementation: randomly sample with replacement from the set of $n$ prediction results, compute accuracy, repeat 1,000 times, and report the 2.5th and 97.5th percentiles.
    \item CI computation uses the \texttt{bootstrap\_ci} function in our codebase.
    \item VitaBench pass-rate intervals use the Wilson score method for a binomial proportion.
\end{itemize}

\textbf{Execution Environment.}
\begin{itemize}[leftmargin=*,topsep=2pt,itemsep=1pt]
    \item \textbf{GSM8K}: During DIANOIA's iterative loop (Execute phase), an LLM-based pseudo-verifier evaluates candidate solutions' reasoning quality without ground-truth access, providing structured diagnostic feedback (see Remark~\ref{rem:task_classification}). For \textit{final evaluation metrics}, we use numerical equivalence checking (tolerance $\epsilon = 10^{-3}$): extracting the final numerical answer via regex and comparing against the ground truth.
    \item \textbf{AIME-2025}: Same LLM-based pseudo-verification during DIANOIA's iterative loop; same numerical equivalence checking for final evaluation. Each problem yields a single integer answer.
    \item \textbf{MBPP}: Python 3.10 sandbox environment with 60-second timeout per test execution. Code is executed with all provided test cases; a solution passes if all tests succeed without errors.
    \item \textbf{BFCL-SP}: The Simple Python subset contains 400 examples. Mocked API responses provide intermediate Execute feedback; final correctness is evaluated by the same implementation against expected function-call schemas and parameter types.
    \item \textbf{VitaBench}: Semantic pseudo-verification supplies intermediate diagnostic feedback; final pass rate follows the benchmark evaluation protocol. The prospective chronology and scope caveats are documented in Appendix~\ref{app:reproducibility}.
\end{itemize}

\textbf{Baseline Implementation Details.}
\begin{itemize}[leftmargin=*,topsep=2pt,itemsep=1pt]
    \item \textbf{Self-Consistency}: $K=5$ samples with \texttt{temperature=0.7}, majority voting on final answers.
    \item \textbf{MoA}: 3-layer architecture with 3 agents per layer, each layer synthesizes outputs from the previous layer.
    \item \textbf{Two Heads}: $K=3$ collaborative solvers with mutual critique and refinement.
    \item \textbf{ReConcile}: Round-table discussion with $M=3$ agents, $R=2$ discussion rounds, majority voting for final selection.
\end{itemize}

\paragraph{Pareto-frontier sweep configurations (Figure~\ref{fig:efficiency_frontier}).}\label{app:sweep_configs}
Each method is swept along its native scaling knob on the MBPP test split, yielding 39 total configurations. Figure~\ref{fig:efficiency_frontier} connects the 29 Pareto-optimal points (the upper envelope of accuracy vs.\ tokens) for each method; dominated configurations are omitted from the plot for visual clarity but counted toward the sweep.
\begin{itemize}[leftmargin=*,topsep=2pt,itemsep=1pt]
    \item \textbf{Self-Consistency} (9 configs): $K \in \{3,5,7,9,11,15,21,27,33\}$.
    \item \textbf{Two Heads} (6 configs): $K \in \{1,2,3,4,5,7\}$.
    \item \textbf{ReConcile} (7 configs): $(M,R) \in \{(2,0),(3,0),(3,1),(3,2),$ $(4,1),(5,2),(7,1)\}$, where $M$ is the agent count and $R$ the number of additional discussion rounds.
    \item \textbf{MoA} (9 configs): 3 agents $\times$ 2 layers (default); $N \times 2$ layers for $N \in \{4,5,6,8,10,12\}$; $\{6,8\}$ agents $\times$ 3 layers.
    \item \textbf{DIANOIA} (8 configs): $(K,R,S)$ as $(K,0,1)$ for $K\!\in\!\{1,2,3\}$; $(3,R,3)$ for $R\!\in\!\{1,2,3\}$; $(K,1,3)$ for $K\!\in\!\{6,9\}$.
\end{itemize}

\textbf{Prompt Templates.} Verbatim templates used in all DIANOIA experiments are listed below. Braces denote runtime substitutions (e.g., \texttt{\{user\_prompt\}}); the released repository (App.~\ref{app:reproducibility}) contains the canonical versions.

\textit{Role briefs (Phase 1, Minimalist / Skeptic / Explorer).} Each proposer prepends one role brief to the Diversity Proposer template:
\begin{quote}\footnotesize\ttfamily
{[}Minimalist{]} You are the Minimalist agent. Prefer the shortest path: the simplest stdlib idiom, the fewest steps. Minimize lines and intermediate state. Avoid redundancy.

{[}Skeptic{]} You are the Skeptic agent. Verify each step. Anticipate edge cases (empty input, single element, off-by-one, overflow). Be defensive. Make assumptions explicit.

{[}Explorer{]} You are the Explorer agent. Try an unconventional approach: different abstractions, alternative algorithms, surprising idioms. Avoid the most common solution. Diversify.
\end{quote}

\textit{Diversity Proposer template (Phase 1, code tasks).} The role brief is inserted in \texttt{\{role\_prompt\}}:
\begin{quote}\footnotesize\ttfamily
\{role\_prompt\}

Generate runnable Python code that solves the user's task, grounded in the context below.

\#\# User question: \{user\_prompt\}

\#\# Context: \{context\}

\#\# Your role bias: \{role\_requirements\}

Important: read the test cases to understand the contract. (1) Inspect each \texttt{assert} carefully---input vs.\ expected output. (2) Reason through each example: input $\to$ expected output $\to$ the transformation. (3) Pay attention to return types (list/dict/tuple), counts, ordering.

Output requirements: (1) MUST return JSON with code in \texttt{code} field; (2) code must be runnable as-is and define the required function; (3) do NOT wrap in Markdown fences; (4) no natural-language commentary outside JSON.

Return JSON only: \{ "code": "your runnable Python code" \}
\end{quote}

\textit{Reviewer template (Phase 3).} The cross-review reviewer reads the candidate code AND its sandbox execution report:
\begin{quote}\footnotesize\ttfamily
You are a review expert. Evaluate another model's answer.

\#\# User question: \{user\_prompt\}\quad \#\# Context: \{context\}

\#\# Answer under review (model: \{reviewed\_model\}, role: \{reviewed\_role\}): \{candidate\_text\}

\#\# Execution report summary: \{execution\_summary\}

Evaluate along: (1) strengths; (2) weaknesses (if execution failed, study the AssertionError's input/output to localise the bug); (3) risks; (4) improvement suggestions (concrete, actionable fixes); (5) overall score in $[0.0, 1.0]$.

Return JSON only: \{ "strengths": [...], "weaknesses": [...], "risks": [...], "suggestions": [...], "overall\_score": 0.0 \}
\end{quote}

\textit{Synthesizer template (Phase 4, code tasks).} The synthesizer integrates candidates, execution reports, and cross-reviews with iterative validation feedback:
\begin{quote}\footnotesize\ttfamily
You are a synthesis expert. Combine the candidate answers and review feedback into a single, optimal answer.

\#\# User question: \{user\_prompt\}\quad \#\# Context: \{context\}

\#\# Candidate answer summaries: \{candidates\_summary\}

\#\# Execution report summary: \{execution\_summary\}

\#\# Cross-review summary: \{review\_summary\}

\{validation\_feedback\}

Key analysis steps: (1) Inspect every execution error---when an \texttt{AssertionError} is present, compare expected vs.\ actual and localise the bug; (2) Infer the contract from the tests---each \texttt{assert} is the ground truth; (3) Pay attention to return type, element count, data-structure shape.

Requirements: (1) Synthesize---do not vote or copy one candidate verbatim; (2) If every candidate is wrong, derive a correct implementation from the error messages; (3) MUST return JSON with code in \texttt{code} field; (4-5) no Markdown fences, no commentary outside JSON.

Return JSON only: \{ "code": "your runnable Python code" \}
\end{quote}

\textit{Pseudo-verifier template (GSM8K / AIME, Remark~\ref{rem:task_classification}; produces the structured $\sigma_v$ feedback of Def.~\ref{def:pseudo_verification}).} For math reasoning tasks where deterministic execution is unavailable, a dedicated LLM evaluator returns the structured diagnostic:
\begin{quote}\footnotesize\ttfamily
You are a careful mathematical reasoning verifier. Given a problem and a candidate solution, judge whether the solution is correct WITHOUT access to the ground-truth answer. Identify reasoning errors or computational mistakes, and report a calibrated confidence in your judgment.

\#\# Problem: \{user\_prompt\}

\#\# Candidate solution: \{candidate\_text\}

Evaluate: (1) Does the reasoning chain follow correctly from the problem? (2) Are intermediate computations correct? (3) Does the final answer match what the reasoning produces? (4) List any specific errors (computational, logical, prompt mis-reading).

Return JSON only: \{ "is\_correct": <true or false>, "confidence": <0.0..1.0>, "errors": [...] \}
\end{quote}

\section{Figure~\ref{fig:overview} Trace: Full Reviewer Outputs}
\label{app:figure_one_raw}

Figure~\ref{fig:overview} reproduces a verbatim DIANOIA trace from one of our MBPP runs. For length, the figure shows excerpts; the full reviewer outputs are presented below.

\paragraph{Review on Minimalist (failing candidate; raw \texttt{TypeError}).}
\textit{Weakness.} ``The code constructs the tuple via \texttt{tuple(t1[i], t2[i\,\%\,len(t2)])}, which triggers a type error: \texttt{tuple()} accepts at most one argument, raising \texttt{TypeError: tuple expected at most 1 argument, got 2}. The correct form is the tuple literal \texttt{(t1[i], t2[i\,\%\,len(t2)])}.''
\textit{Risks.} ``The code is syntactically incorrect; it cannot pass any test case and may mislead users into believing the implementation is correct.''
``Edge cases such as empty tuples are not considered; the current tests are non-empty, so this is a \emph{secondary} risk.''

\paragraph{Review on Explorer (passing candidate).}
\textit{Risk.} ``If the input \texttt{tup2} has length 0, the modulo operation raises a division-by-zero error (\texttt{len(tup2)==0}); the current tests do not cover this boundary.''

\paragraph{Review on Skeptic (passing candidate).}
\textit{Risk.} ``If future requirements demand zip-style alignment to the shorter sequence, the cyclic-extension behaviour will be incorrect; for very long inputs the cycling may also induce performance issues.''

\paragraph{Synthesizer's response.} The line \texttt{if not tup2: return []} (highlighted in Figure~\ref{fig:overview}) directly addresses the empty-\texttt{tup2} risk that two reviewers raised \emph{independently}---against Minimalist (as a secondary risk) and against Explorer (as a primary risk). The line appears in \emph{no} candidate, instantiating the ``local repair beyond selection'' mechanism described in Section~\ref{sec:method} (Phase 4) and quantified by the $5.5\%$ rescue rate reported in Section~\ref{subsec:behavior}. Full reviewer JSON follows the structured schema documented in the reviewer template (Appendix~\ref{app:exp_protocol}); the \texttt{strengths}/\texttt{suggestions}/\texttt{overall\_score} fields are elided here for length but conform to the documented schema.

\end{document}